\documentclass[10pt,journal]{IEEEtran}
% If IEEEtran.cls has not been installed into the LaTeX system files,
% manually specify the path to it like:
% \documentclass[10pt,journal,compsoc]{../sty/IEEEtran}

% For Computer Society journals, IEEEtran defaults to the use of 
% Palatino/Palladio as is done in IEEE Computer Society journals.
% To go back to Times Roman, you can use this code:
%\renewcommand{\rmdefault}{ Palatino}\selectfont  

% Some very useful LaTeX packages include:
% (uncomment the ones you want to load)

% *** MISC UTILITY PACKAGES ***
%
%\usepackage{ifpdf}
% Heiko Oberdiek's ifpdf.sty is very useful if you need conditional
% compilation based on whether the output is pdf or dvi.
% usage:
% \ifpdf
%   % pdf code
% \else
%   % dvi code
% \fi
% The latest version of ifpdf.sty can be obtained from:
% http://www.ctan.org/pkg/ifpdf
% Also, note that IEEEtran.cls V1.7 and later provides a builtin
% \ifCLASSINFOpdf conditional that works the same way.
% When switching from latex to pdflatex and vice-versa, the compiler may
% have to be run twice to clear warning/error messages.

% *** CITATION PACKAGES ***
%
\ifCLASSOPTIONcompsoc
  % The IEEE Computer Society needs nocompress option
  % requires cite.sty v4.0 or later (November 2003)
  \usepackage[nocompress]{cite}
\else
  % normal IEEE
  \usepackage{cite}
\fi
% cite.sty was written by Donald Arseneau
% V1.6 and later of IEEEtran pre-defines the format of the cite.sty package
% \cite{} output to follow that of the IEEE. Loading the cite package will
% result in citation numbers being automatically sorted and properly
% "compressed/ranged". e.g., [1], [9], [2], [7], [5], [6] without using
% cite.sty will become [1], [2], [5]--[7], [9] using cite.sty. cite.sty's
% \cite will automatically add leading space, if needed. Use cite.sty's
% noadjust option (cite.sty V3.8 and later) if you want to turn this off
% such as if a citation ever needs to be enclosed in parenthesis.
% cite.sty is already installed on most LaTeX systems. Be sure and use
% version 5.0 (2009-03-20) and later if using hyperref.sty.
% The latest version can be obtained at:
% http://www.ctan.org/pkg/cite
% The documentation is contained in the cite.sty file itself.
%
% Note that some packages require special options to format as the Computer
% Society requires. In particular, Computer Society  papers do not use
% compressed citation ranges as is done in typical IEEE papers
% (e.g., [1]-[4]). Instead, they list every citation separately in order
% (e.g., [1], [2], [3], [4]). To get the latter we need to load the cite
% package with the nocompress option which is supported by cite.sty v4.0
% and later.

% *** GRAPHICS RELATED PACKAGES ***
%
\ifCLASSINFOpdf
  % \usepackage[pdftex]{graphicx}
  % declare the path(s) where your graphic files are
  % \graphicspath{{../pdf/}{../jpeg/}}
  % and their extensions so you won't have to specify these with
  % every instance of \includegraphics
  % \DeclareGraphicsExtensions{.pdf,.jpeg,.png}
\else
  % or other class option (dvipsone, dvipdf, if not using dvips). graphicx
  % will default to the driver specified in the system graphics.cfg if no
  % driver is specified.
  % \usepackage[dvips]{graphicx}
  % declare the path(s) where your graphic files are
  % \graphicspath{{../eps/}}
  % and their extensions so you won't have to specify these with
  % every instance of \includegraphics
  % \DeclareGraphicsExtensions{.eps}
\fi

% NOTE: PDF hyperlink and bookmark features are not required in IEEE
%       papers and their use requires extra complexity and work.
% *** IF USING HYPERREF BE SURE AND CHANGE THE EXAMPLE PDF ***
% *** TITLE/SUBJECT/AUTHOR/KEYWORDS INFO BELOW!!           ***
%<^!CHANGE!

% correct bad hyphenation here
\hyphenation{op-tical net-works semi-conduc-tor}

%\doublespacing

% \usepackage{times}

\usepackage[compact]{titlesec}
\usepackage{subfigure}
\usepackage{graphicx}

\usepackage{amsmath,amsthm}
\usepackage{amssymb,wasysym}
\usepackage{mathrsfs}
\usepackage{cite}
\usepackage{color}
\usepackage{url}
\usepackage{bbm}

\usepackage{wrapfig}
\usepackage{verbatim}
\usepackage{dsfont}
\usepackage{amsmath}
\usepackage{algorithm}
 \usepackage{algorithmic}
\usepackage{xurl}

\newtheorem{theorem}{\textbf{Theorem}}
\newtheorem{lemma}{\textbf{Lemma}}
\newtheorem{corollary}{\textbf{Corollary}}

\newtheorem{assumption}{\textbf{Assumption}}
\renewcommand{\algorithmicrequire}{\textbf{Input:}}  % Use Input in the format of Algorithm
\renewcommand{\algorithmicensure}{\textbf{Output:}} % Use Output in the format of Algorithm
\allowdisplaybreaks[3]

% \hyphenation{op-tical net-works semi-conduc-tor}
% \usepackage{subfigure}
% \usepackage{graphicx}
% \usepackage{booktabs} 
% \usepackage{adjustbox}
% \usepackage{subcaption}
% \usepackage{multirow}

% \usepackage{algorithm}

% \usepackage{amsmath}
% \usepackage{amssymb}
% \usepackage{xcolor}
% \usepackage{breqn,xspace}
% \usepackage{bm}
% \usepackage{lipsum,multicol}
% \usepackage{makecell}
% \usepackage{booktabs}
% \usepackage{algorithm}
% \usepackage{amsmath}

% \usepackage{algpseudocode}
% \usepackage{amsmath}
% \renewcommand{\algorithmicrequire}{\textbf{Input:}}  % Use Input in the format of Algorithm
% \renewcommand{\algorithmicensure}{\textbf{Output:}} % Use Output in the format of Algorithm
% %\newtheorem{property}{Property}
% \usepackage{diagbox}
% \allowdisplaybreaks[2]

\ifodd 0
\newcommand{\rev}[1]{{\color{blue}#1}} %revise of the text
 %revise of the text
 %revise of the text
\else
\newcommand{\rev}[1]{#1}
 %revise of the text
 %revise of the text
\fi

\begin{document}
%
% paper title
% Titles are generally capitalized except for words such as a, an, and, as,
% at, but, by, for, in, nor, of, on, or, the, to and up, which are usually
% not capitalized unless they are the first or last word of the title.
% Linebreaks \\ can be used within to get better formatting as desired.
% Do not put math or special symbols in the title.

\title{AdaptSFL: Adaptive Split Federated Learning in Resource-constrained Edge Networks}

\author{Zheng Lin, Guanqiao Qu, Wei Wei, Xianhao Chen,~\IEEEmembership{Member,~IEEE}, Kin K. Leung,~\IEEEmembership{Fellow,~IEEE}
\thanks{Z. Lin, G. Qu, W. Wei, and X. Chen are with the Department of Electrical and Electronic Engineering, University of Hong Kong, Pok Fu Lam, Hong Kong, China (e-mail: linzheng@eee.hku.hk; gqqu@eee.hku.hk; weiwei@eee.hku.hk; xchen@eee.hku.hk).}
\thanks{K. K. Leung is with the Electrical and Electronic Engineering Department, and the Computing Department, Imperial College, SW7 2BT London, U.K. He was partly sponsored by the EPSRC EP/Y037242/1. (e-mail: kin.leung@imperial.ac.uk).}
% \thanks{\textit{(Corresponding author: Xianhao Chen)}}
}

\markboth{Journal of \LaTeX\ Class Files,~Vol.~14, No.~8, August~2015}%
{Shell \MakeLowercase{\textit{et al.}}: Bare Advanced Demo of IEEEtran.cls for IEEE Computer Society Journals}

% make the title area

% As a general rule, do not put math, special symbols or citations
% in the abstract or keywords.
\IEEEtitleabstractindextext{
\begin{abstract}
The increasing complexity of deep neural networks poses significant barriers to democratizing AI to resource-limited edge devices. To address this challenge, split federated learning (SFL) has emerged as a promising solution that enables device-server co-training through model splitting. However, although system optimization substantially influences the performance of SFL, the problem remains largely uncharted. In this paper, we first provide a unified convergence analysis of SFL, which quantifies the impact of model splitting (MS) and client-side model aggregation (MA) on its learning performance, laying a theoretical foundation for this field. Based on this, we introduce AdaptSFL, a resource-adaptive SFL framework to accelerate SFL under resource-constrained edge computing systems. Specifically, AdaptSFL adaptively controls MS and client-side MA to balance communication-computing latency and training convergence. Extensive simulations across various datasets validate that our proposed AdaptSFL framework takes considerably less time to achieve a target accuracy than existing benchmarks.
\end{abstract}

% Note that keywords are not normally used for peerreview papers.
\begin{IEEEkeywords}
Distributed learning, split federated learning, client-side model aggregation, model splitting, mobile edge computing.
\end{IEEEkeywords}}

% make the title area
\maketitle

% To allow for easy dual compilation without having to reenter the
% abstract/keywords data, the \IEEEtitleabstractindextext text will
% not be used in maketitle, but will appear (i.e., to be "transported")
% here as \IEEEdisplaynontitleabstractindextext when compsoc mode
% is not selected <OR> if conference mode is selected - because compsoc
% conference papers position the abstract like regular (non-compsoc)
% papers do!
\IEEEdisplaynontitleabstractindextext
% \IEEEdisplaynontitleabstractindextext has no effect when using
% compsoc under a non-conference mode.

% For peer review papers, you can put extra information on the cover
% page as needed:
% \ifCLASSOPTIONpeerreview
% \begin{center} \bfseries EDICS Category: 3-BBND \end{center}
% \fi
%
% For peerreview papers, this IEEEtran command inserts a page break and
% creates the second title. It will be ignored for other modes.
\IEEEpeerreviewmaketitle

% 1.5 pages
\section{Introduction}\label{Intro}
With the increasing prevalence of edge devices such as smartphones, laptops, and wearable gadgets, a vast amount of data is being generated at the network edge. It is forecasted that there will be 50 billion IoT devices, generating 79.4 Zettabytes of data in 2025~\cite{IDC}. This unprecedented volume of data unlocks the potential of artificial intelligence (AI), leading to significant advancements in pivotal domains such as smart health, intelligent transportation, and natural language processing~\cite{2022Letaief,zhu2020toward,fang2024automated,lin2022channel,hu2024collaborative,fang2024ic3m}.

Status quo model training frameworks rely on centralized learning (CL), where a cloud server gathers raw data from participating devices to train a machine learning (ML) model. However, transferring enormous amounts of data from edge devices to the cloud server leads to significant latency, huge bandwidth costs, and severe privacy leakage~\cite{chen2022end,liu2023optimizing,deng2022actions,chen2022federated}. To tackle this issue, federated learning (FL)~\cite{mcmahan2017communication,konevcny2016federated} has been proposed as the answer for privacy-enhancing distributed training. In FL, all participating devices perform model training with locally residing data in parallel and synchronize their models by exchanging model parameters rather than raw data with a parameter server (or fed server), thereby enhancing data privacy~\cite{lin2024fedsn,hu2024accelerating}. Unfortunately, while FL overcomes the limitations of CL, implementing FL 
poses a significant challenge as ML models scale up~\cite{lin2023pushing}. {For instance, the recently prevalent Gemini Nano-2 model, an on-device large language model, comprises 3.25 billion parameters (3GB for 32-bit floats), making on-device training, as required by FL, extremely difficult for resource-constrained edge devices~\cite{team2023gemini}.} 
%In addition, since each device sends intermediate results with smaller data sizes instead of the entire ML model parameters, SL potentially reduces communication overhead compared with FL. 
%, where the devices and server exchange intermediate results (i.e., activations and activations' gradients) at the cut layer during model training

To mitigate the dilemma in FL, split learning (SL)~\cite{vepakomma2018split} has emerged as an alternative privacy-enhancing distributed training paradigm, which enables a server to take over the major workload from clients via model splitting, thus relieving the client-side computing burden~\cite{lin2023split,lin2024hierarchical,wei2025pipelining}. Moreover, since each client sends intermediate results at the cut layer instead of raw data or entire ML model parameters, SL preserves data privacy while reducing communication overhead compared to FL if the model size is large~\cite{lyu2023optimal,lin2025hsplitlora,lin2025leo}. Despite these advantages, the original SL, named vanilla SL~\cite{vepakomma2018split}, adopts a sequential training procedure from one device to another, resulting in excessive training time. As a variant of SL, split federated learning (SFL)~\cite{thapa2022splitfed} overcomes this issue by enabling parallel training among edge devices. As shown in Fig.~\ref{AdaptSFL}, apart from model splitting, SFL also features periodic server-side and client-side sub-model aggregation following the principle of FL. By amalgamating the advantages of SL and FL, SFL has attracted significant attention from academia and industry in recent years. For instance, Huawei deems SFL as a key learning framework for 6G edge intelligence~\cite{huawei2019}. 

While SFL, akin to any other learning framework, demands intensive communication-computing resources,
system optimization for SFL is still in its nascent stage. In particular, model splitting (MS) and client-side model aggregation (MA) decisions heavily influence model accuracy and total training latency of SFL in resource-constrained edge networks. {Although many prior works~\cite{you2023aifed,wei2020federated,wang2019adaptive} have made efforts to optimize global model aggregation frequencies in FL, SFL exhibits entirely different convergence behavior because server-side sub-models and client-side sub-models in SFL can be aggregated at \textit{varied frequencies}, making convergence analysis and optimization non-trivial.} Specifically, server-side sub-models, which are co-located on a server, should \textit{always be synchronized} without communication costs\footnote{This is because more frequent model aggregation generally leads to better training convergence as demonstrated by both theoretical analysis and experiments~\cite{wang2019adaptive}.}, whereas client-side sub-model aggregation generates substantial communication traffic from edge devices, and therefore, should be performed \textit{less frequently}. To demonstrate the impact of client-side MA, Fig.~\ref{fig:motivation_1} shows that aggregation interval $I$ (client-side sub-models are aggregated every $I$ training rounds) balances the converged accuracy and communication overhead: A smaller $I$ usually leads to higher converged accuracy while incurring more communication costs, implying client-side MA should be carefully controlled. {On the other hand, MS also plays a vital role in SFL as it not only affects the computing and communication workloads but also directly impacts the convergence speed of the training process.} To show the effect of MS, Fig.~\ref{fig:motivation_2} reveals that a shallow cut layer (i.e., smaller $L_c$) enhances training accuracy since a larger portion of the model serves as the server-side sub-model that can be synchronized in every training round. However, this may result in increased communication overhead for transmitting cut-layer activations/gradients because a shallower layer in CNNs often involves a larger dimension of layer output. Overall, optimizing MS and client-side MA requires fundamental tradeoff analysis to accelerate SFL under communication and computing constraints.

%Fig.~\ref{sfig:motivation_2_cut_comput_commu} illustrates how MS balances computing workload and communication overhead, and Fig.~\ref{sfig:motivation_2_different_cut} presents the impact of MS on converged accuracy and time.
\begin{figure}[t!]
\centering
\includegraphics[width=8.9cm]{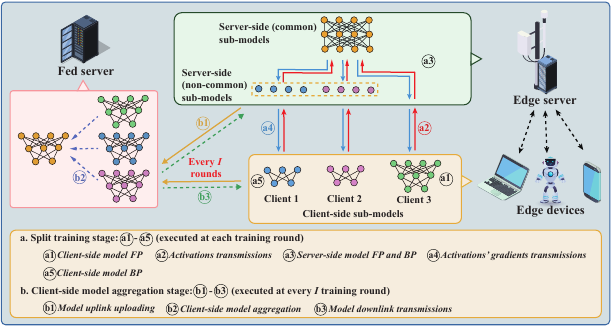}
\vspace{-0.5cm}
\caption{The illustration of AdaptSFL over edge computing systems.
}% \textcolor[rgb]{1.00,0.00,0.00}{edge clouds  at or linked to the SBSs through optical fiber}
\label{AdaptSFL}
\end{figure}

To fill the above void, this paper aims to minimize the training latency of SFL for achieving target learning performance under resource-constrained edge computing systems. To this end, we propose AdaptSFL, a novel resource-adaptive SFL framework with adaptive client-side MA and MS under communication-computing resource constraints. To establish the foundation, we first derive a convergence bound of AdaptSFL that characterizes the effect of client-side MA and MS on its training convergence. This derivation differs from the analysis of FL because client-side and server-side sub-models in SFL can be aggregated at different frequencies. Based on this, we formulate and solve the optimization problem of MS and client-side MA to minimize the total latency under training loss and resource constraints. The main contributions of this paper are summarized as follows.

%To guide the design, we derive a convergence bound of AdaptSFL as a basis for optimizing the above decision variables. Note that the convergence analysis of AdaptSFL differs from FL because client-side and server-side sub-models should be aggregated at different frequencies. 
\begin{itemize}
\item We investigate the system optimization for SFL over resource-constrained edge systems. We theoretically analyze the convergence bound of AdaptSFL, providing some intriguing insights that can lay the foundation for SFL optimization. To our best knowledge, this is the first convergence analysis of SFL that characterizes the effect of MS and client-side MA on its training convergence.
\item {We leverage the derived convergence bound to formulate a joint optimization problem for client-side MA and MS to minimize the total training time for model convergence under resource-constrained edge computing systems.}
\item We decompose the optimization problem into two tractable subproblems of client-side MA and MS, and develop efficient algorithms to obtain the optimal solutions for them, respectively. Subsequently, we develop an alternating optimization method to obtain an efficient suboptimal solution from the joint problem.
\item  We conduct extensive simulations across various datasets to validate our theoretical analysis and demonstrate the superiority of the proposed solution over the counterpart without optimization.
\end{itemize}

\begin{figure}[t]
\vspace{-.5ex}
\setlength\abovecaptionskip{3pt}
\centering
\subfigure[Test accuracy versus epochs.]{
    \includegraphics[height=3.7cm,width=4.13cm]{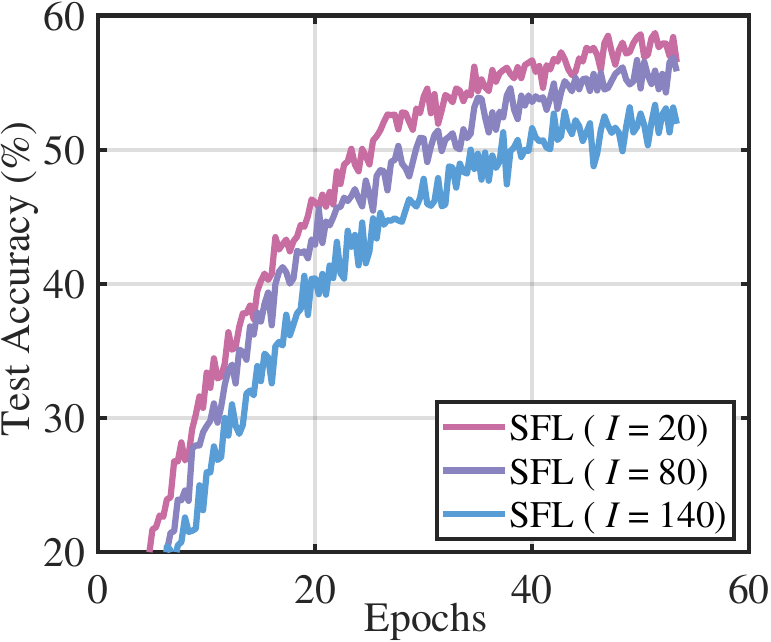}
    \label{sfig:motivation_1_different_I}
}
% \hspace{.1cm}
\subfigure[Test accuracy versus rounds of model aggregation.]{
    \includegraphics[height=3.7cm,width=4.12cm]{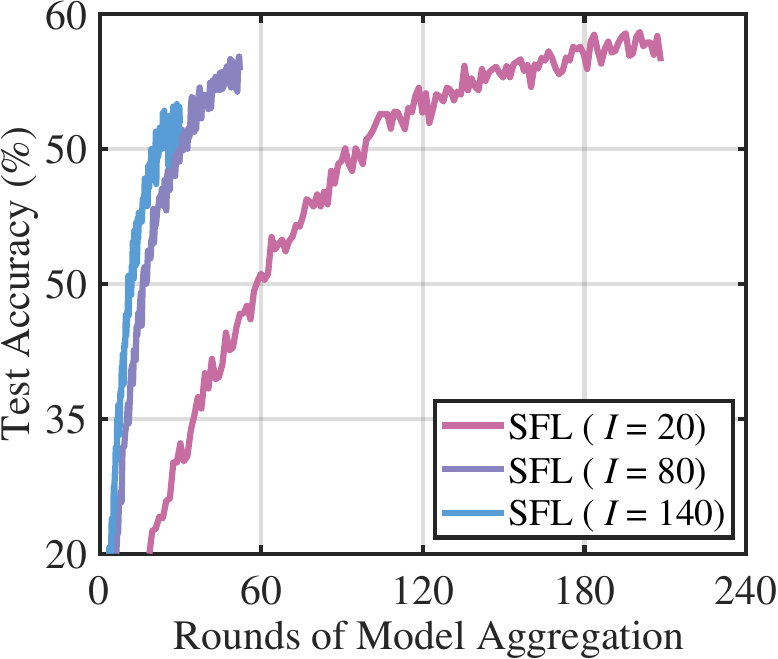}
    \label{sfig:moti_1_accuracy_communication_overhead}
}
    \caption{ The impact of client-side MA on training performance and communication overhead. 
    Fig.~\ref{sfig:motivation_1_different_I} and Fig.~\ref{sfig:moti_1_accuracy_communication_overhead} show the performance for test accuracy versus number of epochs and rounds of model aggregation. The experiment is conducted on the CIFAR-10 dataset under the non-IID setting with $L_c=8$.}
    \label{fig:motivation_1}
    \vspace{-2ex}
\end{figure}

\begin{figure}[t]
\vspace{-.5ex}
\setlength\abovecaptionskip{3pt}
\centering
\subfigure[Computing and communication overhead.]{
    \includegraphics[height=3.68cm,width=4.23cm]{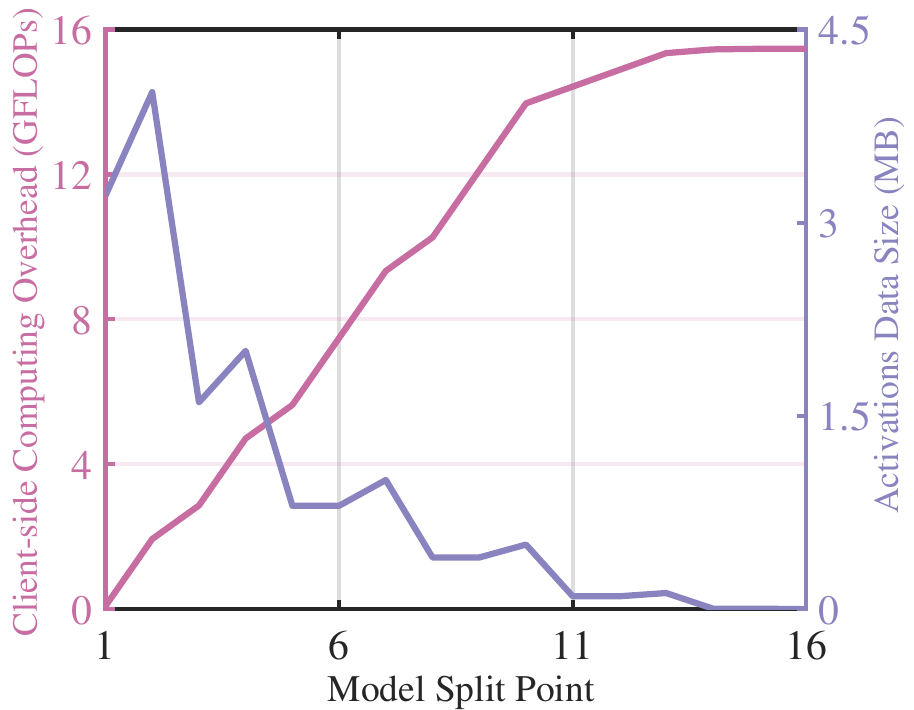}
    \label{sfig:motivation_2_cut_comput_commu}
}
% \hspace{.1cm}
\subfigure[Test accuracy versus epochs.]{
    \includegraphics[height=3.7cm,width=4.02cm]{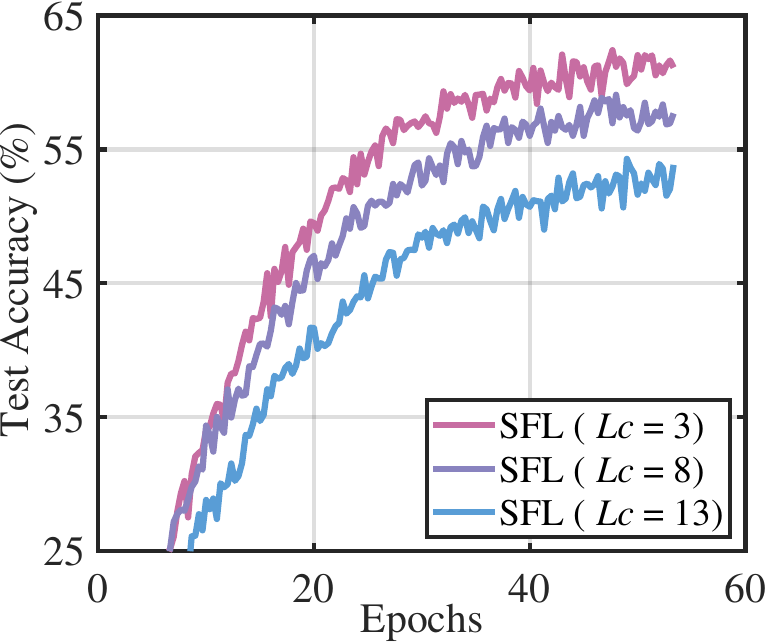}
    \label{sfig:motivation_2_different_cut}
}
    \caption{ The impact of MS on computing and communication overhead as well as training performance. 
    Fig.~\ref{sfig:motivation_2_cut_comput_commu} shows the computing and communication overhead of SFL at different model split points.  Fig.~\ref{sfig:motivation_2_different_cut} presents the performance for test accuracy versus the number of epochs. The experiment is conducted on the CIFAR-10 dataset under the non-IID setting with $I=15$. }
    \label{fig:motivation_2}
    \vspace{-2ex}
\end{figure}

The remainder of this paper is organized as follows. Section~\ref{Rel_Work} introduces related work. Section~\ref{sec_sfl} elaborates on system model and AdaptSFL framework. Section~\ref{convergence_adaptSFL} provides the convergence analysis. We formulate the optimization problem in Section~\ref{prob_formu} and offer the corresponding solution approach in Section~\ref{solu_appro}. Section~\ref{simu_results} provides the simulation results. Finally, concluding remarks are presented in Section~\ref{conclu}.

\section{Related Work}\label{Rel_Work}
% time minimization of FL
% model splitting/SL or split inference
% why lack SFL research
% 1. lack convergence analysis 2. cut layer affect convergence

Tremendous research efforts have been made to improve the learning performance of distributed ML by controlling the interval of model aggregation~\cite{mcmahan2017communication,wang2019adaptive,wu2023faster,2018dont,yu2019computation,xiang2023federated}. 
% These works can be categorized into two directions: heuristic-driven design~\cite{pal2021server,liao2024parallelsfl,guo2021lightfed} and convergence-based design~\cite{mcmahan2017communication,wang2019adaptive,wu2023faster,2018dont,yu2019computation,xiang2023federated}. \rev{The heuristic-driven design empirically optimizes model aggregation without convergence guarantees~\cite{pal2021server,liao2024parallelsfl,guo2021lightfed}. Pal \textit{et al.}~\cite{pal2021server} propose a server-side local gradient averaging mechanism and adaptive learning rate acceleration to enhance the scalability and efficiency of SL. Liao \textit{et al.}~\cite{liao2024parallelsfl} mitigates system heterogeneity by clustering workers and assigning different local update frequencies to each cluster, which improves both training efficiency and model accuracy. Guo \textit{et al.}~\cite{guo2021lightfed} designs a lightweight MA method that selectively transmits critical parameters while dynamically reconstructing the global model, effectively reducing communication overhead and enhancing data privacy.} 
The design often lies in convergence analysis, which serves as the basis for optimizing model aggregation. McMahan \textit{et al.}~\cite{mcmahan2017communication} develop a pioneering FedAvg algorithm that enables each participant to conduct multiple local stochastic gradient descent iterations in a communication round, which aims to reduce the global synchronization frequency without considerably compromising training accuracy. Wang \textit{et al.}~\cite{wang2019adaptive} propose a control algorithm that determines the best tradeoff between local update and global parameter aggregation to minimize the loss function under a given resource budget. Wu \textit{et al.}~\cite{wu2023faster} devise an adaptive momentum-based FL algorithm to achieve the optimal MA interval, which lowers the time consumed for achieving the target accuracy. Smith \textit{et al.}~\cite{2018dont} and Yu \textit{et al.}~\cite{yu2019computation} empirically design dynamic batch size schemes to reduce MA interval to establish a more communication-efficient FL framework. Xiang \textit{et al.}~\cite{xiang2023federated} propose an adaptive epoch adjustment method to reduce communication rounds by dynamically adjusting MA interval of end devices. Nevertheless, the aforementioned schemes do not apply to SFL as server-side sub-models, which are co-located at the same place, can always be synchronized in SFL.

On the other hand, MS heavily impacts training latency of SL, as it leads to varied computing workloads between devices and the server, varied communication overhead due to the diverse output data sizes across layers, and varied convergence speed dependent on the proportion of the server-side common model. Several studies have delved into determining the optimal model split point for SL~\cite{wu2023split,wang2021hivemind,lin2023efficient}. Wu \textit{et al.}~\cite{wu2023split} develop a cluster-based parallel SL framework and a joint cut layer selection, device clustering, and subchannel allocation optimization algorithm to minimize the training overhead considering device heterogeneity and network dynamics. Wang \textit{et al.}~\cite{wang2021hivemind} present a time-efficient multi-split ML framework, which enables real-time optimal split decisions among multiple computing nodes and dynamically adapts split decisions in response to instantaneous network dynamics. Lin \textit{et al.}~\cite{lin2023efficient} propose an efficient parallel SL framework featuring last-layer gradient aggregation, lowering the dimension of activations’ gradients and server-side computing workload. Based on this framework, a joint resource management and MS strategy has been proposed to jointly optimize subchannel allocation, power control, and cut layer selection to minimize the training latency. Unfortunately, these works fail to characterize the effect of MS on training convergence.

In a nutshell, the problem of optimizing MA and MS to enhance SFL convergence has not been investigated. The problem is non-trivial for two reasons: First, a unified convergence analysis that quantifies the impact of client-side MA and MS on the convergence of SFL is still lacking. Second, considering the heterogeneous bandwidth and computing resources of edge devices, MS and client-side MA need to be jointly optimized to accelerate training convergence, resulting in a challenging optimization problem.

\rev{In addition to MS and MA optimization, it is worth mentioning that there are research efforts to improve the resource efficiency of SFL from other perspectives. For instance, Pal \textit{et al.}~\cite{pal2021server} propose a server-side local gradient averaging mechanism and adaptive learning rate acceleration to enhance the scalability and efficiency of SL. Liao \textit{et al.}~\cite{liao2024parallelsfl} mitigate system heterogeneity by clustering workers for split training and assigning different local update frequencies to each cluster, thus improving training efficiency and model accuracy. Guo et al. Guo \textit{et al.}~\cite{guo2021lightfed} design a selective parameter transmission method for model aggregation while dynamically reconstructing the global model to enhance communication efficiency and data privacy. However, these innovative frameworks often lack theoretical convergence guarantees. In contrast, our scheme focuses on vanilla SFL and provides a fundamental trade-off analysis of MA and MS under resource-constrained edge networks.}

% Similarly, Yu \textit{et al.}~\cite{yu2019computation} first derives the convergence rate of FL, and then based on this, develops an exponentially increasing batch size strategy to reduce the global update frequency, thereby improving communication efficiency.
\section{The Resource-adaptive SFL Framework}\label{sec_sfl}
{In this section, we present the system model in Section~\ref{Sys_Model} and the proposed resource-adaptive SFL framework, named AdaptSFL, in Section~\ref{subsec_adapt}.}

\subsection{System Model}\label{Sys_Model}
In this section, we introduce the system model of AdaptSFL to provide a theoretical foundation for the following sections. As illustrated in Fig.~\ref{AdaptSFL}, we consider a typical scenario of AdaptSFL over edge networks, which consists of three fundamental components:

\begin{itemize}
\item \textbf{Edge devices:} We consider that each client possesses an edge device with computing capability to execute the client-side forward propagation (FP) and back-propagation (BP). The set of participating edge devices is denoted by $\mathcal{N} = \left\{ {1,2,...,N} \right\}$, where $N$ is the number of edge devices. Each edge device $i$ has its local dataset ${\mathcal{D}_i} = \left\{ {{{\bf{x}}_{i,k}},{y_{i,k}}} \right\}_{k = 1}^{{D_i}}$ with ${D_i}$ data samples, where ${{\bf{x}}_{i,k}}$  and ${{{y}}_{i,k}} $ are the $k$-th input data and its corresponding label. The client-side sub-model of edge device $i$ is represented as ${{\bf{w}}_{c,i}}$.

\item \textbf{Edge server:} The edge server is a powerful central computing entity responsible for performing server-side sub-model training. The server-side sub-model of edge device $i$ is denoted by ${{\bf{w}}_{s,i}} = \left[{{\bf{h}}_{s}};{{\bf{h}}_{m,i}}\right]$, where ${\bf{h}}_{s}$ and ${\bf{h}}_{m,i}$ denote the server-side common and server-side non-common sub-models, respectively. The common sub-model means it is shared by all clients and synchronized in every round. The non-common sub-model stems from the fact that some clients have more layers than others on the server, which is aggregated every $I$ rounds together with the client-side sub-models. Furthermore, the edge server also takes charge of collecting important network information, such as the device computing capabilities and channel states, to support optimization decisions.

\item \textbf{Fed server:} The fed server takes charge of client-side sub-model synchronization, periodically aggregating the client-side sub-models from all participating edge devices. For privacy concerns, fed and edge servers usually do not belong to the identical party because a malicious server can recover the raw data if getting both client-side sub-models and smashed data~\cite{pasquini2021unleashing}. 
\end{itemize}

\begin{table}[t]\label{notation}
\caption{{Summary of Important Notations.}}
  \renewcommand{\arraystretch}{1.15}{
  \setlength{\tabcolsep}{1mm}{
\begin{tabular}{ll}
\hline
\textbf{Notation}                                                              & ~~~\textbf{Description}  \\ \hline
~~~$\mathcal{N}$ &  ~~~The set of participating edge devices     \\
~~~${\mathcal{D}_i}$           &  ~~~The local dataset of edge device $i$  \\
~~~$f\left( {\bf{w}} \right)$     &  ~~~The global loss function with parameter ${{\bf{w}}}$ \\
~~~$\mathcal{R}$     &  ~~~The set of training rounds \\
~~~${\mathcal{B}_i}$     &  ~~~The mini-batch drawn from edge device $i$'s local \\
&~~~dataset\\
~~~$b$     &  ~~~The mini-batch size \\
~~~${{\bf{w}}_{c,i}}$/${{\bf{w}}_{s,i}}$            &  ~~~The client-side/server-side sub-model of edge device $i$  \\
~~~${\bf{ h}}_{s}$           &  ~~~The server-side common sub-model  \\
~~~${{\bf{ h}}_{m,i}}$           &  ~~~The server-side non-common sub-model of edge device $i$  \\
 ~~~${{\bf{h}}_{c,i}}$           &  ~~~The forged client-specific model of edge device $i$  \\
~~~$L_c$           &  ~~~The maximum number of layers of client-side sub-models  \\
~~~$L$           &  ~~~The total number of global model layers  \\
~~~${{f _s}}$/${{f _i}}$           &  ~~~The computing capability of server/edge device $i$  \\
~~~${\rho _j}$/${\varpi _j}$           &  ~~~The FP/BP computing workload of propagating the\\
& ~~ first $j$ layer neural  network for one data sample\\
~~~${\psi _j}$/${\chi _j}$           &  ~~~The data size of the activations/activations' gradients\\
&  ~~ at the cut layer $j$\\
~~~${\delta _j}$           &  ~~ The data size of client-side sub-model with the cut layer $j$ \\
~~~${r_i^{U}}$/$r_{i,f}^U$       &  ~~~The uplink transmission rate from edge device $i$ to \\
&  ~~ edge server/fed server.\\
~~~${r_i^{D}}$/$r_{i,f}^D$    &  ~~~The downlink transmission rate from edge server/ \\
&  ~~ fed server to edge device $i$.\\
~~~$r_{s,f}$    &  ~~~The transmission rate from the edge server to fed  \\
&  ~~ server. \\
~~~$r_{f,s}$    &  ~~~The transmission rate from fed server to edge server. \\
~~~$\beta, {\sigma _j^2}, {G _j^2}$     &  ~~ The loss function constants (detailed in Section~\ref{convergence_adaptSFL})\\
~~~$I$, $\mu_{i,j}$           &  ~~~The decision variables (explained in Section~\ref{prob_formu})  \\
~~~$T_1$-$T_6$           &  ~~~The auxiliary variables (to linearize problem~\eqref{problem_1})  \\\hline
\end{tabular}}}
\end{table}

% For edge device $i$, the activation obtained from ${{{\bf{x}}_{i,k}}}$ is represented as ${{\bf{a}}_{i,k}} = \varphi\left( {{{\bf{x}}_{i,k}};{{\bf{w}}_{c,i}}} \right)$, where $\varphi\left( {{{x}};{{w}}} \right)$ maps the relationship between input data ${{x}}$ and its predicted value given model parameter ${{w}}$.  Similarly, based on activation ${{{\bf{a}}_{i,k}}}$, we denote its corresponding predicted value as ${{\hat y}_{i,k}} = \varphi\left( {{{\bf{a}}_{i,k}};{{\bf{w}}_{s,i}}} \right)$. 

% The global model is denoted by ${\bf{w}} = \left[ {{{\bf{w}}_{s,i}};{{\bf{w}}_{c,i}}} \right] $. For edge device $i$, local loss function is represented as  ${f_i}\left( {\bf{w}} \right) = {\mathbb{E}_{{\xi _i} \sim {\mathcal{D}_i}}}[{F_i}\left( {{\bf{w}};{\xi _i}} \right)]$

The global model is denoted by ${\bf{w}} = \left[ {{{\bf{w}}_{s,i}};{{\bf{w}}_{c,i}}} \right] $. The objective of SL is to find the optimal global model ${{\bf{w}}^{\bf{*}}}$ that achieves good performance across all participating devices, which can be formulated as minimizing a finite-sum non-convex global loss function:
\begin{align}\label{minimiaze_loss_function}
\mathop {\min }\limits_{\bf{w}} f\left( {\bf{w}} \right) \buildrel \Delta \over = \mathop {\min }\limits_{\bf{w}} {\frac{{{1}}}{N}} \sum\limits_{i = 1}^N {f_i}({\bf{w}}),
\end{align}
where ${f_i}\left( {\bf{w}} \right)$ represents the local loss function of edge device $i$ over local dataset $\mathcal{D}_i$, i.e., ${f_i}\left( {\bf{w}} \right) \buildrel \Delta \over = {\mathbb{E}_{{\xi _i} \sim {\mathcal{D}_i}}}[{F_i}\left( {{\bf{w}};{\xi _i}} \right)]$, and $\xi _i$ is training randomness from local dataset $\mathcal{D}_i$, {i.e., stochasticity of data order during training caused by the sampling of mini-batch data}. We consider uniformity in the sizes of local datasets across all edge devices for concise expression. Following the standard stochastic optimization setting~\cite{Karimired2018,yu2019linear,karimireddy2020scaffold}, it is assumed that the stochastic gradient is an unbiased estimator of the true gradient, i.e., $\mathbb{E}_{\xi_{i}^{t}\sim \mathcal{D}_{i}}[\nabla F_{i}(\mathbf{w}^{t-1}_{i};\xi_{i}^{t}) \vert \boldsymbol{\xi}^{[t-1]}] = \nabla f_{i}(\mathbf{w}^{t-1}_{i})$, where $\boldsymbol{\xi}^{[t-1]} \overset{\Delta}{=} [\xi_{i}^{\tau}]_{i\in\{1,2,\ldots,N\}, \tau\in\{1,\ldots,t-1\}}$ denotes all the randomness up to training round $t-1$.

To solve problem~\eqref{minimiaze_loss_function}, the conventional SFL framework aggregates both client-side and server-side sub-models every training round. However, aggregating client-side sub-models at each training round leads to communication inefficiency, consuming considerable communication resources, especially for implementing SFL over mobile edge devices. Moreover, the MS has not been studied by jointly considering its impact on training convergence and communication-computing workload. Motivated by this, we propose a state-of-the-art AdaptSFL framework capable of resource-adaptive client-side MA and MS. For readers' convenience, the important notations in this paper are summarized in Table I.

\subsection{The AdaptSFL Framework}\label{subsec_adapt}
This section details the workflow of the proposed AdaptSFL framework. The distinctive feature of AdaptSFL lies in resource-adaptive client-side MA and MS. By optimizing the client-side MA and MS, training latency can be significantly reduced for achieving target learning performance.

Before model training begins, the edge server initializes the ML model, partitions it into client-side and server-side sub-models via layer-wise MS (shown in Section~\ref{solu_appro}), and determines the optimal client-side MA (discussed in Section~\ref{solu_appro}). Afterward, AdaptSFL is executed in $I$ consecutive training rounds. This process loops until the model converges. The training process of AdaptSFL involves two primary stages, i.e.,  split training and client-side MA. The split training is performed in each training round, while the client-side MA occurs every $I$ training round. As depicted in Fig.~\ref{AdaptSFL}, for a training round $t \in \mathcal{R} = \left\{ {1,2,...,R} \right\}$,  the training process of AdaptSFL is detailed as follows.

\textit{a. Split Training Stage:} The split training stage involves model updates for participating edge devices and an edge server every training round. This stage comprises the following five steps.

\textit{a1) Client-side Model Forward Propagation:} In this step, all participating edge devices execute client-side FP in parallel. To be specific, edge device $i$ randomly draws a mini-batch ${\mathcal{B}_i} \subseteq {\mathcal{D}_i}$ with $b$ data samples from its local dataset for model training. The input data and corresponding label of mini-batch in training round $t$ are denoted by ${{\bf{x}}^t_i}$ and ${{\bf{y}}^t_i}$, and ${\bf{w}}^{t-1}_{c,i}$ represents the client-side sub-model of edge device $i$. After feeding a mini-batch of data into the client-side sub-model, activations are generated at the cut layer. The activations of edge device $i$ are represented as
\begin{align}\label{stage_1_1}
{{\bf{a}}^t_i} = \varphi \left( {{\bf{x}}^t_i};{{\bf{w}}^{t-1}_{c,i}} \right), 
\end{align}
where $\varphi\left( {{{x}};{{w}}} \right)$ maps the relationship between input data ${{x}}$ and its predicted value given model parameter ${{w}}$.

\textit{a2) Activations Transmissions:} 
After completing client-side FP, each edge device sends its respective activations and labels to the edge server (usually over wireless channels). The collected activations from participating edge devices are then utilized to fuel server-side model training.

\textit{a3) Server-side Model Forward Propagation and Back-propagation:} After receiving activations from participating edge devices, the edge server feeds these activations into the server-side sub-models to execute the server-side FP. For edge device $i$, the predicted value is expressed as
\begin{align}\label{stage_1_3}
{\bf{\hat y}}^t_i = \varphi\left( {{\bf{a}}^t_i;{{\bf{w}}^{t-1}_{s,i}}} \right), 
\end{align}
where ${{\bf{w}}^{t-1}_{s,i}} = \left[{{\bf{ h}}^{t-1}_{s}};{{\bf{ h}}^{t-1}_{m,i}}\right]$, ${\bf{ h}}^{t-1}_{s}$ and ${\bf{h}}^{t-1}_{m,i}$ denote the server-side common and server-side non-common sub-models, respectively. The predicted value and labels are utilized to calculate the loss function value and further derive the server-side sub-model's gradients.
% \begin{align}\label{stage_5_2}
% {\bf{g}}_{s,i}^t = \left[ {{\nabla _{{{\bf{h}}_s}}}{F_i}({\bf{h}}_s^{t - 1};\xi _i^t);{\nabla _{{{\bf{h}}_m}}}{F_i}({\bf{h}}_{m,i}^{t - 1};\xi _i^t)} \right],
% \end{align}

For the server-side common sub-model, the server updates it\footnote{{The edge server can execute model update for multiple edge devices in either a serial or parallel fashion, which does not affect training performance. Here, we consider the parallel fashion.}} based on
\begin{align}\label{stage_5_2}
{\bf{h}}_s^t = \frac{1}{N}\sum\limits_{i = 1}^N {{\bf{h}}_{s,i}^t},
\end{align}
where ${\bf{h}}_{s,i}^t \leftarrow {\bf{h}}_{s,i}^{t - 1} - \gamma {\nabla _{{{\bf{h}}_s}}}{F_i}({\bf{h}}_{s,i}^{t - 1};\xi _i^t)$ is server-side common sub-model for edge device $i$ and {${\nabla _{\bf{w}}}F({\bf{w}}; \xi)$ represents the gradient of the function $F({\bf{w}}; \xi)$ with respect to the model parameter ${\bf{w}}$.} Since the aggregation in Eqn.~\eqref{stage_5_2} does not incur any communication overhead, we assume it is conducted every training round to improve training convergence. In this way, the server-side common sub-model is equivalent to centralized training (i.e., stochastic gradient descent or SGD).

Due to the heterogeneous cut layers, some edge devices can leave more layers on the server side, resulting in server-side non-common parts of server-side sub-models. Each edge device updates its respective server-side non-common sub-model individually, i.e., 
\begin{align}\label{non_commen_update}
{\bf{h}}_{m,i}^t \leftarrow {\bf{h}}_{m,i}^{t - 1} - \gamma {\nabla _{{{\bf{h}}_m}}}{F_i}({\bf{h}}_{m,i}^{t - 1};\xi _i^t), 
\end{align}
where ${\nabla _{{{\bf{h}}_s}}}{F_i}({\bf{h}}_{s}^{t - 1};\xi _i^t)$ and ${\nabla _{{{\bf{h}}_m}}}{F_i}({\bf{h}}_{m,i}^{t - 1};\xi _i^t)$ represent the stochastic gradients of server-side common and non-common sub-models of edge device $i$, and $\gamma$ is the learning rate.

\textit{a4) Activations' Gradients Transmissions:} After the server-side BP is completed, the edge server transmits the activations' gradients to its corresponding edge devices.

\textit{a5) Client-side Model Back-propagation:} In this step, each edge device updates its client-side sub-model parameters based on the received activations' gradients. For edge device $i$, the client-side sub-model is updated through
\begin{align}\label{client_side_update}
{{\bf{w}}^t_{c,i}} \leftarrow {{\bf{w}}^{t-1}_{c,i}} - {\gamma } {\nabla _{{{\bf{w}}_c}}}{F_i}({\bf{w}}_{c,i}^{t - 1};\xi _i^t).
\end{align}

\textit{b. Client-side Model Aggregation Stage:} The client-side MA stage primarily focuses on aggregating forged client-specific sub-models (including client-side sub-models and server-side non-common sub-models specific to each client) on the fed server, which is executed every $I$ training round. This stage consists of the following three steps.

\textit{b1) Model Uplink Uploading:} In this step, edge devices and edge server simultaneously send the client-side sub-models and server-side non-common sub-models to the fed server over the wireless/wired links.

\textit{b2) Client-side Model Aggregation:} The fed server pairs and assembles the received client-side sub-models and server-side non-common sub-models to forge the client-specific models. Afterwards, these forged client-specific models are aggregated, denoted by
\begin{align}\label{h_c_define}
{\bf{h}}_c^t = \frac{{{1}}}{N} \sum\limits_{i = 1}^N {{\bf{h}}_{c,i}^t} ,
\end{align}
where ${{\bf{h}}^t_{c,i}} = \left[{{\bf{h}}^t_{m,i}};{{\bf{ w}}^t_{c,i}}\right]$ is forged client-specific model of edge device $i$.

\textit{b3) Model Downlink Transmissions:} After completing the client-side MA, the fed server sends updated client-side sub-models and server-side non-common sub-models to corresponding edge devices and the edge server, respectively.

The AdaptSFL training framework is outlined in~\textbf{Algorithm~\ref{AdaptSFL_procedure}}, where $E$ is the total number of stochastic gradients involved in model training.

\begin{algorithm}[t]
	%\textsl{}\setstretch{1.8}
	\renewcommand{\algorithmicrequire}{\textbf{Input:}}
	\renewcommand{\algorithmicensure}{\textbf{Output:}}
	\caption{The AdaptSFL Training Framework}\label{AdaptSFL_procedure}
	\begin{algorithmic}[1]
 \REQUIRE   $b$, $\gamma$, $E$, ${\cal N}$ and $\cal{D}$.
		\ENSURE ${{\bf{w}}^{\bf{*}}}$. 
		\STATE Initialization: ${{\bf{w}}^{{0}}}$, ${{\bf{w}}_i^{{0}}} \leftarrow {{\bf{w}}^{{0}}}$, $I^0 \leftarrow 1$, $\tau \leftarrow 0$ and $\rho  \leftarrow 0$.
          \WHILE{$b\sum\limits_{\eta  = 0}^\tau  {{I^\eta }}  \le E$}
          \STATE Determine $I^\tau$ and ${\boldsymbol{\mu}}^\tau$ based on~\textbf{Algorithm~\ref{BCD-based}}
          \FOR {$t=\rho +1,\rho +2,...,\rho +I^{\tau}$}
          \STATE
          \STATE /** {Runs} {on} {edge} {devices} **/
          \FORALL {edge device ${i \in \,{\cal N}}$ in parallel}
            \STATE  ${{\bf{a}}^t_i} \leftarrow \varphi \left( {{\bf{x}}^t_i};{{\bf{w}}^{t-1}_{c,i}} \right)$
            \STATE Send $\left( {{{\bf{a}}^t_i},{\bf{y}}_i^t} \right)$ to the edge server
          \ENDFOR

	   \STATE
          \STATE /** {Runs} {on} {edge} {server} **/
          \STATE${\bf{\hat y}}^t_i = \varphi\left( {{\bf{a}}^t_i;{{\bf{w}}^{t-1}_{s,i}}} \right)$
          \STATE Calculate loss function value $f\left( {{\bf{w}}^{t - 1}} \right)$
          \STATE ${\bf{h}}_s^t \leftarrow {\bf{h}}_s^{t - 1} - \gamma \frac{1}{N}\sum\limits_{i = 1}^N {{\nabla _{{{\bf{h}}_s}}}{F_i}({\bf{h}}_s^{t - 1};\xi _i^t)} ,$ 
          \STATE ${\bf{h}}_{m,i}^t \leftarrow {\bf{h}}_{m,i}^{t - 1} - \gamma {\nabla _{{{\bf{h}}_m}}}{F_i}({\bf{h}}_{m,i}^{t - 1};\xi _i^t)$
          \STATE Send activations' gradients  to corresponding edge devices

	   \STATE
          \STATE /** {Runs} {on} {edge} {devices} **/
         \FORALL {edge device ${i \in \,{\cal N}}$ in parallel}
           \STATE ${{\bf{w}}^t_{c,i}} \leftarrow {{\bf{w}}^{t-1}_{c,i}} - {\gamma } {\nabla _{{{\bf{w}}_c}}}{F_i}({\bf{w}}_{c,i}^{t - 1};\xi _i^t)$
        \ENDFOR
        \STATE
        \STATE /** {Runs} {on} the {fed} {server} **/
        \IF{$\left( {t - \rho } \right)$ mod $I^\tau$ $=0$}
          \STATE Forge client-side sub-models ${{\bf{h}}^t_{c,i}} = \left[{{\bf{h}}^t_{m,i}};{{\bf{ w}}^t_{c,i}}\right]$
          \STATE ${\bf{h}}_c^t = \frac{{{1}}}{N} \sum\limits_{i = 1}^N {{\bf{h}}_{c,i}^t} $
          \STATE ${\bf{h}}_{c,i}^t \leftarrow {\bf{h}}_c^t$
        \ENDIF
        \ENDFOR
        \STATE
        \STATE Update $\rho  \leftarrow \rho+I^\tau$
        \STATE Update $\tau \leftarrow \tau+1$
        
          \ENDWHILE          
	\end{algorithmic}  
\end{algorithm}

\section{Convergence Analysis of AdaptSFL}\label{convergence_adaptSFL}

In this section, we provide the first convergence analysis of SFL that characterizes the effect of both client-side MA and MS on its training convergence, which serves as the basis for developing an efficient iterative method in Section~\ref{solu_appro}.

Before analysis, we first clarify several notations to simplify the derivation process and results. Considering the variance in MA interval, we define the global model of edge device $i$ as ${\bf{w}}_i^t = \left[ {{\bf{h}}_{s,i}^t;{\bf{h}}_{c,i}^t} \right]$, which contains server-side common sub-model and forged client-specific model. The server-side common sub-models are aggregated each training round, while the client-specific model is aggregated every $I$ training round. The model split point between these two sub-models is $L_c$, defined as the maximum number of layers of client-side sub-models due to discrepancies in model split points among different edge devices. The corresponding gradients of these two sub-models are determined as
\begin{equation}\label{g_ci}
{\bf{g}}_{c,i}^t = \left[ {{\nabla _{{{\bf{h}}_m}}}{F_i}({\bf{h}}_{m,i}^{t - 1};\xi _i^t);{\nabla _{{{\bf{w}}_c}}}{F_i}({\bf{w}}_{c,i}^{t - 1};\xi _i^t)} \right]
\end{equation}
and 
\begin{equation}
{\bf{g}}_{s,i}^t = \sum\limits_{i = 1}^N {{\nabla _{{{\bf{h}}_s}}}{F_i}({\bf{h}}_{s}^{t - 1};\xi _i^t)},
\end{equation}
where ${\bf{h}}_s^t = \frac{{{1}}}{N} \sum\limits_{i = 1}^N {{\bf{h}}_{s,i}^t}$. Therefore, the gradient of the global model is represented as ${\bf{g}}_i^t = \left[ {{\bf{g}}_{s,i}^t; {\bf{g}}_{c,i}^t} \right]$.

In line with the seminal works in distributed stochastic optimization~\cite{zhang2012communication,lian2017can,mania2017perturbed,lin2018don} where convergence analysis is conducted on an aggregated version of individual solutions, this paper focuses on the convergence analysis of ${{\bf{w}}^t} = \frac{{{1}}}{N} \sum\limits_{i = 1}^N {\bf{w}}_i^t$. To analyze the convergence rate of {\bf{Algorithm~\ref{AdaptSFL_procedure}}}, we consider the following two standard assumptions on the loss functions.

\begin{assumption}[\textit{Smoothness}]\label{asp:1}
\textit{Each local loss function ${f_i}\left( {\bf{w}} \right)$ is differentiable and $\beta $-smooth., i.e., for any $\mathbf{w}$ and ${{\bf{w'}}}$, we have}
    \begin{equation}
\left\| {\nabla_{\bf{w}} {f_i}\left( {\bf{w}} \right) - \nabla_{\bf{w}} {f_i}\left( {\bf{w'}} \right)} \right\| \le \beta \left\| {{\bf{w}} - {\bf{w'}}} \right\|,\;\forall i.
    \end{equation}
\end{assumption}

\begin{assumption}[\textit{Bounded variances and second moments}]\label{asp:2}
\textit{The variance and second moments of stochastic gradients for each layer have upper bound:}
    \begin{equation}
\mathbb{E}_{\xi_{i}\sim \mathcal{D}_{i}} \Vert \nabla_{\bf{w}} F_{i}(\mathbf{w}; \xi_{i}) - \nabla_{\bf{w}} f_{i}(\mathbf{w})\Vert^{2} \leq  \sum\limits_{j = 1}^l  {\sigma _j^2}, \forall \mathbf{w}, \;\forall i.
    \end{equation}
        \begin{equation}
\mathbb{E}_{\xi_{i}\sim \mathcal{D}_{i}} \Vert \nabla_{\bf{w}} F_{i}(\mathbf{w}; \xi_{i}) \Vert^{2} \leq \sum\limits_{j = 1}^l  {G _j^2}, \forall \mathbf{w}, \;\forall i ,
    \end{equation}
\textit{where $l$ is number of layers for model  $\mathbf{w}$, ${\sigma _j^2}$ and ${G_j^2}$ denote the bounded variance and second order moments for the $j$-th layer of model $\bf w$.}
\end{assumption}

\begin{lemma} \label{lm:diff-avg-per-node}
Under {\bf{Assumption \ref{asp:1}}, \bf{Algorithm \ref{AdaptSFL_procedure}}} ensures
\begin{align*}
\mathbb{E} [\Vert {\mathbf{h}}_c^{t} - \mathbf{h}^{t}_{c,i}\Vert^{2}] \leq {\mathbbm{1}}_{\{I > 1\}} 4\gamma^{2} I^{2} \sum\limits_{j = 1}^{L_c}  {G _j^2}, \forall i, \forall t,
\end{align*}
\textit{where $I$ denotes client-side model aggregation interval and ${\mathbf{h}}_c^{t}$ is defined in {Eqn. \eqref{h_c_define}}.}
\end{lemma}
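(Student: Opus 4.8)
The plan is to exploit the fact that the per-client forged models $\mathbf{h}_{c,i}^t$ drift apart only \emph{between} consecutive client-side aggregations, and to control that drift by unrolling the local SGD recursion back to the most recent synchronization point. Fix a round $t$ and let $t_0 \le t$ denote the last round at which client-side MA was performed (Step b2 of Algorithm~\ref{AdaptSFL_procedure}). At that round the fed server sets $\mathbf{h}_{c,i}^{t_0} = \mathbf{h}_c^{t_0}$ for every $i$, so all clients share a common anchor. When $I=1$ every round is an aggregation round, whence $\mathbf{h}_{c,i}^t = \mathbf{h}_c^t$ for all $i$ and the left-hand side is identically zero; this is exactly what the indicator $\mathbbm{1}_{\{I>1\}}$ records. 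For the remainder I treat the case $I>1$, where the number of local steps since synchronization, $m = t - t_0$, satisfies $m \le I$.

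Next I would unroll the recursions in Eqns.~\eqref{non_commen_update} and~\eqref{client_side_update}, which combine into the single forged-model update $\mathbf{h}_{c,i}^t = \mathbf{h}_{c,i}^{t-1} - \gamma \mathbf{g}_{c,i}^t$ with $\mathbf{g}_{c,i}^t$ as in Eqn.~\eqref{g_ci}. Telescoping from $t_0$ gives $\mathbf{h}_{c,i}^t = \mathbf{h}_c^{t_0} - \gamma \sum_{s=t_0+1}^{t} \mathbf{g}_{c,i}^s$, and averaging over $i$ yields $\mathbf{h}_c^t = \mathbf{h}_c^{t_0} - \gamma \frac{1}{N}\sum_{j=1}^N \sum_{s=t_0+1}^{t} \mathbf{g}_{c,j}^s$. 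Since both expressions are measured from the \emph{same} anchor $\mathbf{h}_c^{t_0}$, I would bound the target quantity by splitting around that anchor via $\|\mathbf{h}_c^t - \mathbf{h}_{c,i}^t\|^2 \le 2\|\mathbf{h}_{c,i}^t - \mathbf{h}_c^{t_0}\|^2 + 2\|\mathbf{h}_c^t - \mathbf{h}_c^{t_0}\|^2$.

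It then remains to bound each accumulated-gradient term. For the first, $\|\mathbf{h}_{c,i}^t - \mathbf{h}_c^{t_0}\|^2 = \gamma^2 \|\sum_{s=t_0+1}^t \mathbf{g}_{c,i}^s\|^2 \le \gamma^2 m \sum_{s=t_0+1}^t \|\mathbf{g}_{c,i}^s\|^2$ by Cauchy--Schwarz over the $m$ summands; taking expectations and invoking the bounded-second-moment bound of Assumption~\ref{asp:2} (noting the forged client-specific model spans the first $L_c$ layers, so $\mathbb{E}\|\mathbf{g}_{c,i}^s\|^2 \le \sum_{j=1}^{L_c} G_j^2$) gives $\mathbb{E}\|\mathbf{h}_{c,i}^t - \mathbf{h}_c^{t_0}\|^2 \le \gamma^2 m^2 \sum_{j=1}^{L_c} G_j^2 \le \gamma^2 I^2 \sum_{j=1}^{L_c} G_j^2$. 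For the averaged term I would first apply Jensen's inequality to pull the client average outside the squared norm, then repeat the same Cauchy--Schwarz-plus-Assumption~\ref{asp:2} estimate, obtaining the identical bound $\gamma^2 I^2 \sum_{j=1}^{L_c} G_j^2$. Adding the two contributions with the factor-two weights yields $4\gamma^2 I^2 \sum_{j=1}^{L_c} G_j^2$, as claimed.

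I expect the main obstacle to be bookkeeping rather than a deep estimate: correctly identifying the synchronization anchor $t_0$ and the step count $m \le I$ so that the $I^2$ factor (and nothing larger) emerges, and ensuring Assumption~\ref{asp:2} is applied only to the $L_c$ layers actually carried by the forged client-specific model. It is worth remarking that, although the statement cites Assumption~\ref{asp:1}, the drift bound itself only requires the bounded-second-moment hypothesis of Assumption~\ref{asp:2}; $\beta$-smoothness is not needed here and is presumably listed for uniformity with the companion results.
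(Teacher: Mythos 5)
Your proof is correct and follows essentially the same route as the paper's: both anchor at the last synchronization round $t_0$ (with $t-t_0\le I$), unroll the SGD recursion so the deviation becomes a difference of accumulated gradients, split it via $\Vert a-b\Vert^2\le 2\Vert a\Vert^2+2\Vert b\Vert^2$, and finish with Cauchy--Schwarz over the at most $I$ rounds, Jensen over clients, and the bounded-second-moment assumption, yielding the same $4\gamma^2 I^2\sum_{j=1}^{L_c}G_j^2$ constant. Your closing remark is also consistent with the paper's own argument, which invokes only Assumption~\ref{asp:2} in its final step despite the lemma statement citing Assumption~\ref{asp:1}.
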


\begin{proof}
See Appendix~\ref{aa}.
\end{proof}

\vspace{0.15cm}
\begin{theorem}\label{theorem1}
Under {\bf{Assumption \ref{asp:1}}} and {\bf{Assumption \ref{asp:2}}}, if $0 < \gamma \leq \frac{1}{\beta}$ in {\bf{Algorithm \ref{AdaptSFL_procedure}}}, then for all $R\geq 1$, we have 
{ \footnotesize \begin{equation}\label{convergence_bound}
\begin{aligned}
\frac{1}{R}\sum\limits_{t = 1}^R \mathbb{E}  [{\Vert\nabla _{\bf{w}}}f({{\bf{w}}^{t - 1}})\Vert{^2}] \le \frac{2\vartheta}{{\gamma R}} \!+\! \frac{{\beta \gamma \sum\limits_{j = 1}^L {\sigma _j^2} }}{N}
 + {\mathbbm{1}}_{\{I > 1\}} 4{\beta ^2}{\gamma ^2}{I^2}\sum\limits_{j = 1}^{{L_c}}\! {G_j^2} ,
\end{aligned}
% \frac{1}{T} \sum_{t=1}^{T} \mathbb{E} [\Vert \nabla_{\bf{w}} f({\mathbf{w}}^{t-1})\Vert^{2}] \leq \frac{2}{\gamma T} \left(f({\mathbf{w}}^{0}) -f^{\ast}\right) +\frac{\beta\gamma \sum\limits_{j = 1}^{L_c}  {G _j^2} }{N}  \nonumber + 4\beta^2\gamma^{2} I^{2} L_c G^{2}
\end{equation}}%
where $\vartheta  = f({{\bf{w}}^0}) - {f^ * }$, $L$ and $f^{\ast}$ represent the total number of global model layers and minimum value of problem~\eqref{minimiaze_loss_function}. 
\end{theorem}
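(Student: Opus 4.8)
The plan is to treat the virtual average $\mathbf{w}^t = \frac{1}{N}\sum_{i=1}^N \mathbf{w}_i^t$ as if it were generated by a single perturbed SGD recursion and then run the standard non-convex descent argument on it. First I would verify that, despite the two different aggregation frequencies, this average obeys the clean recursion $\mathbf{w}^t = \mathbf{w}^{t-1} - \gamma\frac{1}{N}\sum_{i=1}^N \mathbf{g}_i^t$ with $\mathbf{g}_i^t = [\nabla_{\mathbf{h}_s}F_i(\mathbf{h}_s^{t-1};\xi_i^t);\,\mathbf{g}_{c,i}^t]$. The key observation is that the common sub-model is averaged every round, so all devices share the identical $\mathbf{h}_s^{t-1}$ (behaving exactly like centralized SGD and contributing zero drift), whereas the client-specific part is synchronized only every $I$ rounds; since the per-round client update is affine, the average $\frac{1}{N}\sum_i \mathbf{h}_{c,i}^t$ still satisfies the average-gradient recursion in every round, and the periodic synchronization merely resets each $\mathbf{h}_{c,i}^t$ to that average without altering it. Hence $\mathbf{w}^t$ evolves smoothly across aggregation boundaries.

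Next I would apply the $\beta$-smoothness descent inequality (Assumption~\ref{asp:1}) to $f$ at $\mathbf{w}^{t-1}$ and $\mathbf{w}^t$, take the conditional expectation given the history $\boldsymbol{\xi}^{[t-1]}$, and use the unbiasedness of the stochastic gradients to replace $\mathbb{E}[\mathbf{g}_i^t]$ by $\nabla f_i(\mathbf{w}_i^{t-1})$ with $\mathbf{w}_i^{t-1}=[\mathbf{h}_s^{t-1};\mathbf{h}_{c,i}^{t-1}]$. Two terms then require care. For the linear term I would write $\nabla f(\mathbf{w}^{t-1})=\frac{1}{N}\sum_i\nabla f_i(\mathbf{w}^{t-1})$ and apply the polarization identity $2\langle a,b\rangle=\|a\|^2+\|b\|^2-\|a-b\|^2$, which yields $-\frac{\gamma}{2}\|\nabla f(\mathbf{w}^{t-1})\|^2$, a non-positive $-\frac{\gamma}{2}\|\frac{1}{N}\sum_i\nabla f_i(\mathbf{w}_i^{t-1})\|^2$, and a client-drift error that smoothness bounds by $\frac{\gamma\beta^2}{2N}\sum_i\|\mathbf{w}^{t-1}-\mathbf{w}_i^{t-1}\|^2$. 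For the quadratic term I would split $\mathbb{E}\|\frac{1}{N}\sum_i\mathbf{g}_i^t\|^2$ into mean plus variance; independence of the mini-batch randomness across devices collapses the variance to $\frac{1}{N^2}\sum_i\mathbb{E}\|\mathbf{g}_i^t-\nabla f_i(\mathbf{w}_i^{t-1})\|^2\le\frac{1}{N}\sum_{j=1}^L\sigma_j^2$ via Assumption~\ref{asp:2}.

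The two $\|\frac{1}{N}\sum_i\nabla f_i(\mathbf{w}_i^{t-1})\|^2$ contributions then combine into a factor $-\frac{\gamma}{2}(1-\beta\gamma)$, which the step-size condition $\gamma\le 1/\beta$ makes non-positive, so it can be discarded. Crucially, $\|\mathbf{w}^{t-1}-\mathbf{w}_i^{t-1}\|^2=\|\mathbf{h}_c^{t-1}-\mathbf{h}_{c,i}^{t-1}\|^2$ because the common coordinates coincide, so Lemma~\ref{lm:diff-avg-per-node} bounds it by $\mathbbm{1}_{\{I>1\}}4\gamma^2 I^2\sum_{j=1}^{L_c}G_j^2$, which is exactly the source of both the indicator and the $L_c$ summation in the final bound. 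What remains is bookkeeping: rearranging to isolate $\frac{\gamma}{2}\mathbb{E}\|\nabla f(\mathbf{w}^{t-1})\|^2$, telescoping the $f(\mathbf{w}^{t-1})-\mathbb{E}[f(\mathbf{w}^t)]$ differences over $t=1,\dots,R$, using $f(\mathbf{w}^R)\ge f^*$, and dividing by $\gamma R/2$. I expect the main obstacle to be the first step, namely rigorously justifying that the averaged iterate follows a single SGD recursion unaffected by the periodic client synchronization and that the common sub-model contributes no drift; once Lemma~\ref{lm:diff-avg-per-node} supplies the drift estimate, the remainder follows the well-trodden local-SGD template.
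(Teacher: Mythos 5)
Your proposal is correct and follows essentially the same route as the paper's proof: smoothness descent on the averaged iterate, unbiasedness via conditioning on the history, the polarization identity for the inner-product term, a mean--variance split of $\mathbb{E}\|\frac{1}{N}\sum_i \mathbf{g}_i^t\|^2$, discarding the $-\frac{\gamma}{2}(1-\beta\gamma)\|\frac{1}{N}\sum_i \nabla f_i(\mathbf{w}_i^{t-1})\|^2$ term under $\gamma \le 1/\beta$, and bounding the drift by smoothness plus Lemma~\ref{lm:diff-avg-per-node} (with the server-side drift vanishing because the common sub-model is synchronized every round). Your opening observation that the average obeys a clean SGD recursion across aggregation boundaries is exactly what the paper exploits implicitly through Eqn.~\eqref{lemma_eq_1} and Eqn.~\eqref{h_c_define}, and your constants reproduce the stated bound.
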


\begin{proof}
See Appendix B.
\end{proof}
\rev{{\bf{Remark:}} Compared to conventional FL~\cite{mcmahan2017communication,wang2019adaptive,wu2023faster}, the convergence bound of AdaptSFL exhibits two key differences. First, the varied client-side and server-side MA intervals introduce an additional error term proportional to $I^2$, as client-side updates are aggregated every $I$ rounds, causing model staleness and slowing model convergence. Second, the MS decision variable $L_c$ governs the synchronization efficiency: a smaller $L_c$ increases the proportion of the model updated each round, improving convergence speed, whereas a larger $L_c$ degrades convergence performance.}

% For Theorem 1, we can derive the following corollary.
Substituting Eqn.~\eqref{convergence_bound} into Eqn.~\eqref{accuracy_cons_corollary} yields {\bf{Corollary 1}}, revealing a lower bound on the number of training rounds for achieving target convergence accuracy.
\begin{corollary}\label{theorem1}
The number of {training rounds $R$} for achieving target convergence accuracy $\varepsilon$, i.e., satisfying
{ \begin{equation}\label{accuracy_cons_corollary}
\frac{1}{R} \sum_{t=1}^{R} \mathbb{E} [\Vert \nabla_{\bf{w}} f({\mathbf{w}}^{t-1})\Vert^{2}] \le \varepsilon,
\end{equation}}
is given by
{ \begin{equation}\label{lowest_com_num}
R  \ge \frac{2 \vartheta}{{\gamma \bigg( {\varepsilon  - \frac{{\beta \gamma \sum\limits_{j = 1}^L {\sigma _j^2}}}{N} - {\mathbbm{1}}_{\{I > 1\}} 4{\beta ^2}{\gamma ^2}{I^2}\sum\limits_{j = 1}^{{L_c}} {G_j^2}} \bigg)}}.
\end{equation}}
\end{corollary}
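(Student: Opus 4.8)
The plan is to run the standard non-convex SGD descent argument on the \emph{averaged} iterate $\mathbf{w}^t = \frac{1}{N}\sum_{i=1}^N \mathbf{w}_i^t$, treating the server-side common block (synchronized every round) and the forged client-specific block (aggregated every $I$ rounds) jointly, but invoking \textbf{Lemma~\ref{lm:diff-avg-per-node}} to control the drift that only the latter block incurs. The first step is to observe that, because each block of $\mathbf{w}_i^t$ advances by a single gradient step and averaging commutes with the per-device updates, the averaged model obeys the clean recursion $\mathbf{w}^t = \mathbf{w}^{t-1} - \gamma \bar{\mathbf{g}}^t$ with $\bar{\mathbf{g}}^t = \frac{1}{N}\sum_{i=1}^N \mathbf{g}_i^t$. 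Crucially, the per-round aggregation of the server block leaves the average unchanged and the periodic client aggregation only resets individual copies to a value they already average to, so this recursion holds uniformly in $t$ with no special bookkeeping at aggregation rounds.

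Next I would apply $\beta$-smoothness (\textbf{Assumption~\ref{asp:1}}) to write the descent inequality $f(\mathbf{w}^t) \le f(\mathbf{w}^{t-1}) - \gamma\langle \nabla_{\mathbf{w}} f(\mathbf{w}^{t-1}), \bar{\mathbf{g}}^t\rangle + \frac{\beta\gamma^2}{2}\|\bar{\mathbf{g}}^t\|^2$ and take the conditional expectation given $\boldsymbol{\xi}^{[t-1]}$. Unbiasedness of the stochastic gradients gives $\mathbb{E}[\bar{\mathbf{g}}^t] = \frac{1}{N}\sum_i \nabla f_i(\mathbf{w}_i^{t-1})$, which differs from $\nabla f(\mathbf{w}^{t-1}) = \frac{1}{N}\sum_i \nabla f_i(\mathbf{w}^{t-1})$ only through the client-specific block. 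I would split the inner product with the polarization identity $\langle a,b\rangle = \frac{1}{2}(\|a\|^2 + \|b\|^2 - \|a-b\|^2)$ to expose a favorable $-\frac{\gamma}{2}\|\nabla f(\mathbf{w}^{t-1})\|^2$ term, a gradient-divergence term $\frac{\gamma}{2}\|\frac{1}{N}\sum_i(\nabla f_i(\mathbf{w}^{t-1}) - \nabla f_i(\mathbf{w}_i^{t-1}))\|^2$, and a $-\frac{\gamma}{2}\|\mathbb{E}[\bar{\mathbf{g}}^t]\|^2$ term earmarked to combine with the second-moment term.

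The second-moment term is handled by decomposing $\mathbb{E}\|\bar{\mathbf{g}}^t\|^2 = \|\mathbb{E}[\bar{\mathbf{g}}^t]\|^2 + \mathrm{Var}(\bar{\mathbf{g}}^t)$, where independence across devices and the variance bound in \textbf{Assumption~\ref{asp:2}} cap the variance by $\frac{1}{N}\sum_{j=1}^L \sigma_j^2$. The hypothesis $\gamma \le 1/\beta$ is then exactly what makes $-\frac{\gamma}{2}\|\mathbb{E}[\bar{\mathbf{g}}^t]\|^2 + \frac{\beta\gamma^2}{2}\|\mathbb{E}[\bar{\mathbf{g}}^t]\|^2 \le 0$, permitting that term to be discarded. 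For the gradient-divergence term, $\beta$-smoothness bounds it by $\frac{\beta^2}{N}\sum_i \mathbb{E}\|\mathbf{w}^{t-1} - \mathbf{w}_i^{t-1}\|^2$; since the server block is identical across devices this collapses to $\frac{\beta^2}{N}\sum_i \mathbb{E}\|\mathbf{h}_c^{t-1} - \mathbf{h}_{c,i}^{t-1}\|^2$, and \textbf{Lemma~\ref{lm:diff-avg-per-node}} caps each summand by $\mathbbm{1}_{\{I>1\}} 4\gamma^2 I^2 \sum_{j=1}^{L_c} G_j^2$, which is precisely the source of the $I^2$ term.

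Finally I would assemble the surviving pieces into the one-step inequality $\frac{\gamma}{2}\mathbb{E}\|\nabla f(\mathbf{w}^{t-1})\|^2 \le \mathbb{E}[f(\mathbf{w}^{t-1})] - \mathbb{E}[f(\mathbf{w}^t)] + (\text{variance term}) + (\text{drift term})$, sum over $t = 1,\dots,R$ to telescope the function values into $\vartheta = f(\mathbf{w}^0) - f^*$, and divide by $\gamma R/2$ to reach the stated bound. I expect the main obstacle to be the careful accounting in the middle steps: correctly certifying that the server block contributes \emph{zero} drift by virtue of per-round synchronization (so only the client-specific block feeds the divergence term), and arranging the polarization split so that the $\gamma \le 1/\beta$ cancellation removes exactly the term that Lemma~\ref{lm:diff-avg-per-node} is \emph{not} built to bound, leaving the lemma's bound to absorb the divergence. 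The telescoping and constant-chasing afterward are routine.
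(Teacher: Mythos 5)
Your proposal is correct and takes essentially the same route as the paper: the four paragraphs you outline reproduce the paper's Appendix B proof of Theorem 1 (smoothness descent step, polarization split, variance decomposition with the $\gamma \le 1/\beta$ cancellation, Lemma 1 controlling only the client-block drift since the server block is synchronized every round, then telescoping), and the paper obtains this corollary from that theorem by a one-line substitution. The only step to make explicit is the final inversion your write-up elides: having reached $\frac{1}{R}\sum_{t=1}^{R}\mathbb{E}[\|\nabla_{\mathbf{w}} f(\mathbf{w}^{t-1})\|^2] \le \frac{2\vartheta}{\gamma R} + \frac{\beta\gamma\sum_{j=1}^{L}\sigma_j^2}{N} + \mathbbm{1}_{\{I>1\}}4\beta^2\gamma^2 I^2 \sum_{j=1}^{L_c} G_j^2$, impose that the right-hand side be at most $\varepsilon$ and solve for $R$ (which implicitly requires $\varepsilon$ to exceed the two $R$-independent terms so that the denominator is positive), yielding the stated lower bound on $R$.
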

{\textbf{Insights:}  Eqn.~\eqref{lowest_com_num} shows that the number of {training rounds} for achieving target convergence accuracy $\varepsilon$ decreases as $I$ decreases and the cut layer becomes shallow, indicating a faster model convergence rate. Similarly, for a given {training round $R$}, more frequent client-side MA and a shallower cut layer result in higher converged accuracy (i.e., smaller $\varepsilon$). These observations align with the experimental results illustrated in Fig.~\ref{fig:motivation_1} and Fig.~\ref{fig:motivation_2}. However, the improved training performance comes at the cost of increasing communication overhead, leading to longer latency per training round. Therefore, it is essential to optimize MS and client-side MA to accelerate SFL under communication and computing constraints.}

\section{Problem Formulation}\label{prob_formu}

The convergence analysis results quantify the impact of client-side MA and MS on model convergence and communication efficiency. In this section, guided by the convergence bounds in Section~\ref{convergence_adaptSFL}, we formulate a joint client-side MA and MS optimization problem. The objective is to minimize the training latency of AdaptSFL over edge networks with heterogeneous participating devices. Subsequently, we propose a resource-adaptive client-side MA and MS strategy in Section~\ref{solu_appro}. For clarity, the decision variables and definitions are listed below.

\begin{itemize}
\item $I$: $I \in {\mathbb{N}^ + }$ denotes the client-side MA decision variable, which indicates that the forged client-side sub-model is aggregated every $I$ training rounds.
\item ${\boldsymbol\mu}$: $\mu_{i,j} \in \left\{ {0,1} \right\}$ is the MS decision variable, where $\mu_{i,j}=1$ indicates that the $j$-th neural network layer is selected as the cut layer for edge device $i$, and 0 otherwise.
${\boldsymbol\mu}  = \left[ {{\mu _{1,1}},{\mu _{1,2}},...,{\mu _{N,L}}} \right]$ represents the collection of MS decisions.
\end{itemize}

\subsection{Training Latency Analysis}
In this section, we provide the training latency analysis for each step of AdaptSFL. Without loss of generality, we focus on one training round for analysis. The training round number index $t$ is omitted for notational simplicity. To begin, we analyze the latency of split training stage for one training round as follows\footnote{{Consistent with prior studies~\cite{yang2020energy,wu2023split,wang2021hivemind,chen2022federated,lin2023efficient}, we ignore the cost of network and computing capacity probing in our analysis, as these costs are negligible compared to the excessive communication and computation costs of the training process. Moreover, lightweight tools such as iperf~\cite{tirumala1999iperf} for bandwidth measurement and Linux utilities like top for monitoring computing load are highly efficient, enabling
efficient probing with minimal overhead. }}.

\textit{a1) Client-side Model Forward Propagation Latency:}
In this step, each edge device utilizes its local dataset to conduct the client-side FP. The computing workload (in FLOPs\footnote{\rev{Consistent with prior distributed machine learning literature~\cite{zhang2022scalable,deng2022low,lin2023efficient,wu2023split}, we adopt FLOPs as a standardized, hardware-independent metric to quantify computing cost, thus enabling fair comparisons across different model architectures and simulation settings.}
}) of the client-side FP for edge device $i$ per data sample is denoted by $\Phi_{c,i}^{F}\! \left( \boldsymbol{\mu}  \right) = \sum\limits_{j = 1}^{L} {{\mu _{i,j}}} {\rho _j}$, where ${\rho _j}$ is the FP computing workload of propagating the first $j$ layer of neural network for one data sample. Considering that each edge device randomly draws a mini-batch containing $b$ data samples for executing the client-side FP, the client-side FP latency of edge device $i$ is calculated as 
\begin{align}\label{client_FP_latency}
T_i^{F} = \frac{{b\,\Phi_{c,i}^{F}\! \left( \boldsymbol{\mu}  \right)}}{{{f_i}}} ,\;\forall i \in \mathcal{N},
\end{align}
where ${{f _i}}$ denotes the computing capability of edge device $i$, i.e., the number of float-point operations per second (FLOPS).

\textit{ a2) Activations Transmissions Latency:}
After completing the client-side FP, each edge device sends the activations generated at the cut layer to the edge server. Let $\Gamma_{a,i} \left( \boldsymbol{\mu}  \right) = \sum\limits_{j = 1}^{L} {{\mu _{i,j}}} {\psi _j}$ represent the data size (in bits) of activations for edge device $i$, where ${\psi _j}$ denotes the data size of activations at the cut layer $j$. Since each edge device employs a mini-batch with $b$ data samples, the activations transmissions latency of edge device $i$ is determined as
\begin{align}\label{smashed_trans_latency}
T_{a,i}^{U} = \frac{{b\Gamma_{a,i} \left( \boldsymbol{\mu}  \right)}}{{r_i^{U}}},\;\forall i \in \mathcal{N}.
\end{align}
where ${r_i^{U}}$ is uplink transmission rate from edge device $i$ to the edge server.

\textit{a3) Server-side Model Forward Propagation and Back-propagation Latency:}
This step involves executing the server-side FP and BP with collected activations from all participating edge devices. Let $\Phi _s^F\left( {\boldsymbol{\mu }} \right) = \sum\limits_{i = 1}^N {\sum\limits_{j = 1}^L {{\mu _{i,j}}} \left( {{\rho _L} - {\rho _j}} \right)} $ represent the computing workload of the server-side FP for each data sample. Similarly, the computing workload of the server-side BP per data sample is denoted by $\Phi _s^{B}\left( \boldsymbol{\mu}  \right) = \sum\limits_{i = 1}^N {\sum\limits_{j = 1}^{L} {{\mu _{i,j}}} \left( {{\varpi_L} - {\varpi _j}} \right)}$, where ${\varpi _j}$ is the BP computing workload of propagating the first $j$ layer of neural network for one data sample. Consequently, the server-side model FP and BP latency can be obtained from
\begin{align}\label{server_FP_latency}
T_s^{F} = \frac{{b\,\Phi _s^{F}\left( \boldsymbol{\mu}  \right)}}{{{f_s}}}
\end{align}
and 
\begin{align}\label{server_BP_latency}
T_s^{B} = \frac{{b\,\Phi _s^{B}\left( \boldsymbol{\mu}  \right)}}{{{f_s}}},
\end{align}
where ${{f _s}}$ denotes the computing capability of the edge server.

\textit{a4) Activations' Gradients Transmissions Latency:}
After the server-side BP is completed, the edge server sends the activations' gradients to the corresponding edge devices. Let $\Gamma_{g,i} \left( \boldsymbol{\mu}  \right) = \sum\limits_{j = 1}^{L} {{\mu _{i,j}}} {\chi _j}$ represent the data size of activations' gradients for edge device $i$, where ${\chi _j}$ denotes the data size of activations' gradients at the cut layer $j$. Therefore, the activations' gradients transmissions latency of edge device $i$  is expressed as
\begin{align}\label{downlink_latency}
T_{g,i}^D = \frac{{ {b } {\Gamma _{g,i}}\left( {\boldsymbol{\mu} } \right)}}{{r_i^D}},\;\forall i \in {\cal N}.
\end{align}
where ${r_i^{D}}$ is downlink transmission rate from edge server to edge device $i$.

\textit{a5) Client-side Model Back-propagation Latency:}
In this step, each edge device executes client-side BP according to the received activations' gradients. Let $\Phi _{c,i}^{B}\left( \boldsymbol{\mu}  \right) = \sum\limits_{j = 1}^{L} {{\mu _{i,j}}} {\varpi _j}$ denote the computing workload of the client-side BP for edge device $i$ per data sample. As a result, the client-side BP latency of edge device $i$ can be obtained by
\begin{align}\label{client_BP_latency}
T_i^{B} = \frac{{b\, \Phi _{c,i}^{B}\left( \boldsymbol{\mu}  \right)}}{{{f_i}}},\;\forall i \in \mathcal{N}.
\end{align}

Then, we analyze the latency of client-side MA stage every $I$ training round as follows.

\textit{b1) Model Uplink Transmissions Latency:} In this step, each edge device transmits its client-side sub-model to the fed server, while edge server uploads the server-side non-common sub-models to the fed server for client-side MA. Let $\Lambda_{c,i} \left( \boldsymbol{\mu}  \right) = \sum\limits_{j = 1}^{L} {{\mu _{i,j}}} {\delta _j}$ represent the data size of client-side sub-model for edge device $i$, where ${\delta _j}$ is the data size of client-side sub-model with the cut layer $j$. The total data size of the exchanged server-side non-common sub-models is denoted by ${\Lambda _{s}}\left( {\boldsymbol{\mu }} \right) = N \mathop {\max }\limits_i \left\{ { \sum\limits_{j = 1}^L {{\mu _{i,j}}} {\delta _j}} \right\} - \sum\limits_{i = 1}^N \sum\limits_{j = 1}^L {{\mu _{i,j}}} {\delta _j}$.  Therefore, the uplink transmissions latency for the client-side sub-model and server-side non-common sub-models are expressed as 
\begin{align}\label{smashed_trans_latency}
T_{c,i}^U = \frac{{{\Lambda _{c,i}}\left( {\boldsymbol{\mu }} \right)}}{{r_{i,f}^U}}, \;\forall i \in \mathcal{N}
\end{align}
and 
\begin{align}\label{smashed_trans_latency}
T_s^U = \frac{{{\Lambda _{s}}\left( {\boldsymbol{\mu }} \right)}}{{r_{s,f}}},
\end{align}
where $r_{i,f}^U$ and $r_{s,f}$ represent the transmissions rate from edge device $i$ and edge server to the fed server, respectively.

\textit{b2) Client-side Model Aggregation:} The fed server aggregates the client-side sub-models and server-side non-common sub-models to forge the aggregated client-specific model. For simplicity, the latency for this part is ignored since the aggregation delay is typically much smaller than other steps~\cite{shi2020joint,xia2021federated}.

\textit{b3) Model Downlink Transmissions Latency:} After completing client-side MA, the fed server sends the updated client-side sub-model and server-side non-common sub-model to the corresponding edge devices and edge server. Similarly, the downlink transmission latency for the client-side sub-model and server-side non-common sub-models are calculated by 
\begin{align}\label{smashed_trans_latency}
T_{c,i}^D = \frac{{{\Lambda _{c,i}}\left( {\boldsymbol{\mu }} \right)}}{{r_{i,f}^D}}, \;\forall i \in \mathcal{N}
\end{align}
and 
\begin{align}\label{smashed_trans_latency}
T_s^D = \frac{{{\Lambda _{s}}\left( {\boldsymbol{\mu }} \right)}}{{r_{f,s}}},
\end{align}
where $r_{i,f}^D$ and $r_{f,s}$ denote the transmission rate from the fed server to edge device $i$ and edge server, respectively.

\subsection{Joint Client-side Model Aggregation and Model Splitting Problem Formulation}

\begin{figure}[t]
\vspace{-.5ex}
\setlength\abovecaptionskip{3pt}
\centering
\subfigure[Split training.]{
    \includegraphics[height=2.7cm,width=4cm]{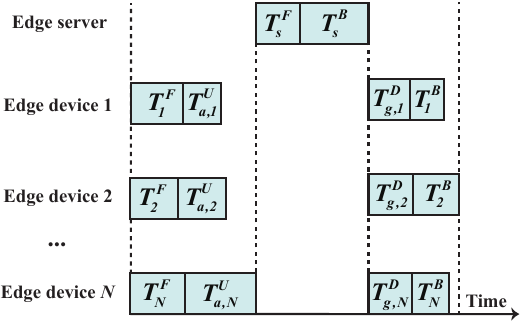}
    \label{sfig:split_training}
}
% \hspace{.1cm}
\subfigure[Client-side MA.]{
    \includegraphics[height=2.7cm,width=2.8cm]{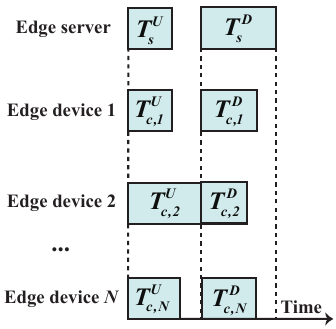}
    \label{sfig:client_side_aggrega}
}
    \caption{An illustration of split training and client-side MA stages.}
    \label{fig:training_2_stage}
    \vspace{-2ex}
\end{figure}

This section presents the problem formulation of client-side MA and MS with the objective of minimizing the time consumed for model convergence over edge networks. As illustrated in Fig.~\ref{fig:training_2_stage}, considering that split training is executed per training round and client-side MA occurs every $I$ training round, the total training latency 
for $I$ training rounds can be derived as
{{ \begin{align}\label{total_latency}
T\!\left( {I,\boldsymbol{\mu} } \right) &\!=\!I\!\left\{ \!{\mathop {\max }\limits_i\! \left\{ {T_i^F \!+\! T_{a,i}^U} \right\} \!\!+\! T_s^F \!+\! T_s^B \!+\! \mathop {\max }\limits_i \left\{ {T_{g,i}^D + T_i^B} \right\}} \!\right\} \nonumber\\&+\mathop {\max }\limits_i \left\{ {T_{c,i}^U,T_s^U} \right\} + \mathop {\max }\limits_i \left\{ {T_{c,i}^D,T_s^D} \right\}.
\end{align}}}

Recalling the derived convergence bound in~\eqref{convergence_bound},  client-side MA interval balances the tradeoffs between communication overhead and training convergence, while MS significantly impacts communication-computing overhead, as well as model convergence. Hence, jointly optimizing the client-side MA interval and MS is imperative to expedite the training process. Observing these facts, we formulate the following optimization problem to minimize the time consumed for model convergence:
\begin{align}\label{time_minimize_problem}
\mathcal{P}:&\mathop {{\rm{min}}}\limits_{{I},{\boldsymbol{\mu}}}  MT({I},{\boldsymbol{\mu }})   \\
&\mathrm{s.t.} ~\mathrm{C1:}~\frac{1}{R}\sum\limits_{t = 1}^R \mathbb{E} [{\Vert\nabla _{\bf{w}}}f({{\bf{w}}^{t - 1}})\Vert{^2}] \le \varepsilon, \nonumber \\
&~\mathrm{C2:}~{\mu _{i,j}} \in \left\{ {0,1} \right\}, \quad\forall i \in \mathcal{N}, \; j = 1,2,...,L, \nonumber\\
&~\mathrm{C3:}~\sum\limits_{j = 1}^{L} {{\mu _{i,j}}}  = 1, \quad \forall i \in \mathcal{N}, \nonumber \\
&~\mathrm{C4:}~I \in {\mathbb{N}^ + }. \nonumber
\end{align}
where {$M = \frac{R}{I}$} denotes the number of rounds of model aggregation for achieving model convergence. Constrain $\mathrm C1$ guarantees global convergence accuracy; $\mathrm C2$ and $\mathrm C3$ ensure that each edge device has a single cut layer, so that the global model is partitioned into client-side and server-side sub-models; $\mathrm C4$ denotes that the client-side MA decision variable is a positive integer.

Problem~\eqref{time_minimize_problem} is a combinatorial problem with a non-convex mixed-integer non-linear objective function, which is NP-hard in general. In this case, achieving an optimal solution via polynomial-time algorithms is infeasible.

\section{Solution Approach}\label{solu_appro}

In this section, we present an efficient iterative algorithm by decoupling the problem~\eqref{time_minimize_problem} into MA and MS subproblems. For each subproblem, the optimal solution can be obtained. We first utilize {\bf{Corollary 1}} to explicitly express $M$. It is clear that $M$ is proportional to the objective function, and thus the objective function is minimized if and only if inequality~\eqref{lowest_com_num} holds as an equality. Thus, the original problem~\eqref{time_minimize_problem} can be converted into 
\begin{align}\label{problem_1}
\mathcal{P'}:&\mathop {{\rm{min}}}\limits_{{I},{\boldsymbol{\mu}}} \Theta ( I, {\boldsymbol{\mu }} )   \\
&\mathrm{s.t.} ~\mathrm{C2}-\mathrm{C4}, \nonumber
\end{align}
where 
\begin{align}
\Theta ( I, {\boldsymbol{\mu }} )= {\frac{{2\vartheta }}{{\!\gamma I \bigg( {\!\varepsilon \! - \!\frac{{\beta \gamma \sum\limits_{j = 1}^L {\sigma _j^2} }}{N} \!- 4{\beta ^2}{\gamma ^2}{I^2}\sum\limits_{j = 1}^{{L_c}} {G_j^2} } \bigg)}}}T({I},{\boldsymbol{\mu }}).\nonumber
\end{align}

Problem~\eqref{problem_1} is still a mixed-integer non-linear programming, and the non-convexity of the objective function makes it still intractable. To solve this problem, we introduce a set of constants ${\bf{\widetilde G}} = \left[ {{{\widetilde G}^2_1},{{\widetilde G}^2_2},...,{{\widetilde G}^2_L}} \right]$, where ${\widetilde G}^2_j$ denotes the cumulative sum of the bounded second order moments for the first $j$ layers of neural network, defined as
\begin{align}\label{substitution_variable}
{{\widetilde G}^2_j} = \sum\limits_{k = 1}^j {G_k^2}. 
\end{align}

Therefore, $\sum\limits_{j = 1}^{{L_c}} {G_j^2}$ in the objective function can be reformulated as $\mathop {\max }\limits_i \left\{ {\sum\limits_{j = 1}^L {{\mu _{i,j}}{\widetilde G}_j^2} } \right\} $. Additionally, we introduce a set of auxiliary variables ${\bf{T}} = \left[ {{T_1},{T_2},...,{T_6}} \right]$ to linearize the objective function, i.e., $\mathop {\max }\limits_i \big \{ {\sum\limits_{j = 1}^L {{\mu _{i,j}}\widetilde G_j^2} } \big\} \le {T_1}$, $\mathop {\max }\limits_i \big\{ {\sum\limits_{j = 1}^L {{\mu _{i,j}}} {\delta _j}} \big\} \le {T_2}$, $\mathop {\max }\limits_i \left\{ {T_i^F + T_{a,i}^U} \right\} \le {T_3}$ ,$\mathop {\max }\limits_i \left\{ {T_{g,i}^D + T_i^B} \right\} \le {T_4}$, $\mathop {\max }\limits_i \left\{ {T_{c,i}^U,T_s^U} \right\} \le {T_5}$, and $\mathop {\max }\limits_i \left\{ {T_{c,i}^D,T_s^D} \right\} \le {T_6}$. Problem~\eqref{problem_1} can therefore be transformed into
{ \begin{align}\label{problem_2}
\mathcal{P''}:&\mathop {{\rm{min}}}\limits_{{I}, {\boldsymbol{\mu}}, \bf{T}, } \Theta' ( I, {\boldsymbol{\mu }}, \bf{T} )   \\
&\mathrm{s.t.} ~\mathrm{C2}-\mathrm{C4}, \nonumber\\
&~\mathrm{R1:}~\sum\limits_{j = 1}^L {{\mu _{i,j}}\widetilde G_j^2}  \le {T_1}, \quad\forall i \in \mathcal{N}, \nonumber\\
&~\mathrm{R2:}~\sum\limits_{j = 1}^L {{\mu _{i,j}}} {\delta _j} \le {T_2}, \quad\forall i \in \mathcal{N}, \nonumber\\
&~\mathrm{R3:}~\frac{{b{\mkern 1mu} \sum\limits_{j = 1}^L {{\mu _{i,j}}} {\rho _j}}}{{{f_i}}} + \frac{{b\sum\limits_{j = 1}^L {{\mu _{i,j}}} {\psi _j}}}{{r_i^U}} \le {T_3}, \quad\forall i \in \mathcal{N}, \nonumber\\
&~\mathrm{R4:}~\frac{{b\sum\limits_{j = 1}^L {{\mu _{i,j}}} {\chi _j}}}{{r_i^D}} + \frac{{b{\mkern 1mu} \sum\limits_{j = 1}^L {{\mu _{i,j}}} {\varpi _j}}}{{{f_i}}} \le {T_4}, \quad\forall i \in \mathcal{N},  \nonumber\\
&~\mathrm{R5:}~\frac{{\sum\limits_{j = 1}^L {{\mu _{i,j}}} {\delta _j}}}{{r_{i,f}^U}} \le {T_5}, \quad\forall i \in \mathcal{N},  \nonumber\\
&~\mathrm{R6:}~\frac{{N{T_2} - \sum\limits_{i = 1}^N {\sum\limits_{j = 1}^L {{\mu _{i,j}}} } {\delta _j}}}{{r_{s,f}}} \le {T_5}, \nonumber\\
&~\mathrm{R7:}~\frac{{\sum\limits_{j = 1}^L {{\mu _{i,j}}} {\delta _j}}}{{r_{i,f}^D}} \le {T_6} \quad\forall i \in \mathcal{N},  \nonumber\\
&~\mathrm{R8:}~\frac{{N{T_2} - \sum\limits_{i = 1}^N {\sum\limits_{j = 1}^L {{\mu _{i,j}}} } {\delta _j}}}{{r_{f,s}}} \le {T_6},  \nonumber
\end{align}}
where
{ \begin{align}
\Theta' ( I, {\boldsymbol{\mu }}, {\bf{T}} ) =\frac{{2\vartheta \left\{ {I\left\{ {{T_3} + T_s^F + T_s^B + {T_4}} \right\} + {T_5} +\! {T_6}} \right\}}}{{\gamma I\bigg( {\varepsilon  - \frac{{\beta \gamma \sum\limits_{j = 1}^L {\sigma _j^2} }}{N} - 4{\beta ^2}{\gamma ^2}{I^2}{T_1}} \bigg)}}. 
\end{align}}

From problem~\eqref{problem_2}, we observe that the introduced auxiliary variables are tightly coupled with the original decision variables, which makes it difficult to solve the problem directly. Hence, we decompose the original problem~\eqref{problem_2} into two less complicated subproblems based on different decision variables and develop efficient algorithms to tackle each subproblem.

We fix the variables $\boldsymbol{\mu}$ and $\bf{T}$ to investigate the subproblem involving client-side MA interval, which is expressed as
\begin{align}\label{subproblem_1}
\mathcal{P}_1:&\mathop {{\rm{min}}}\limits_{{I}} \Theta' ( I )  \\
&\mathrm{s.t.} ~\mathrm{C4}. \nonumber
\end{align}
Then, we have the following theorem:
\begin{theorem}\label{theorem2}
The optimal client-side MA interval is given by
\begin{equation}\label{accuracy_cons}
{I^*} = \left\{ {\begin{array}{*{20}{c}}
1&{I' \le 1}\\
{\arg {{\min }_{I \in \left\{ {\left\lfloor {I'} \right\rfloor ,\left\lceil {I'} \right\rceil } \right\}}}\Theta' \left( I \right)}&{I' > 1}
\end{array}}, \right.
\end{equation}
where $I'$ with $\Xi \left( I' \right) = 0$ can be easily obtained by Newton-Raphson method and $\Xi \left( I \right)$ is defined in~\eqref{E_I}.
\end{theorem}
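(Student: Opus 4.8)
The plan is to solve the continuous relaxation of $\mathcal{P}_1$ first and then recover the integer optimum by exploiting unimodality of the objective. With $\boldsymbol{\mu}$ and $\mathbf{T}$ held fixed, $\Theta'$ becomes a single-variable rational function of $I$. Introducing the shorthands $A \triangleq T_3 + T_s^F + T_s^B + T_4$, $B \triangleq T_5 + T_6$, $C \triangleq \varepsilon - \tfrac{\beta\gamma\sum_{j=1}^L\sigma_j^2}{N}$, and $D \triangleq 4\beta^2\gamma^2 T_1$ (all strictly positive, with $C>0$ being exactly the feasibility condition for $\mathrm{C1}$), the objective reads
\[
\Theta'(I)=\frac{2\vartheta(AI+B)}{\gamma\,I\,(C-DI^2)},
\]
which is positive and smooth on the open interval $I\in(0,\sqrt{C/D})$ imposed by $\mathrm{C1}$. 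First I would record the boundary behaviour: $\Theta'(I)\to+\infty$ both as $I\to0^+$ (the $CI$ term of the denominator vanishes) and as $I\to\sqrt{C/D}^-$ (the factor $C-DI^2$ vanishes), so the minimizer must be an interior stationary point.

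Next I would differentiate. A quotient-rule computation shows that $\mathrm{d}\Theta'/\mathrm{d}I$ has the same sign as the cubic
\[
\Xi(I)\triangleq 2ADI^3+3BDI^2-BC,
\]
so the stationarity condition is exactly $\Xi(I)=0$, matching the definition of $\Xi$ in~\eqref{E_I}. The crucial structural fact is that $\Xi$ is strictly increasing on $(0,\infty)$, since $\Xi'(I)=6DI(AI+B)>0$; together with $\Xi(0)=-BC<0$ this yields a unique positive root $I'$. Evaluating at the right endpoint gives $\Xi(\sqrt{C/D})=2AC^{3/2}D^{-1/2}+2BC>0$, which certifies $I'\in(0,\sqrt{C/D})$, so $I'$ is feasible and can be located by the Newton-Raphson iteration. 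Because $\Xi$ changes sign only once, from negative to positive at $I'$, the derivative shows $\Theta'$ is strictly decreasing on $(0,I')$ and strictly increasing on $(I',\sqrt{C/D})$; hence $I'$ is the unique continuous minimizer and $\Theta'$ is unimodal.

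Finally I would project onto $\mathbb{N}^+$. Unimodality forces the integer minimizer to be adjacent to $I'$: when $I'>1$, the optimum is whichever of $\lfloor I'\rfloor$ and $\lceil I'\rceil$ yields the smaller value, i.e.\ $\arg\min_{I\in\{\lfloor I'\rfloor,\lceil I'\rceil\}}\Theta'(I)$; when $I'\le 1$, every feasible integer satisfies $I\ge 1\ge I'$ and thus lies on the non-decreasing branch, forcing $I^*=1$. This reproduces exactly the piecewise formula~\eqref{accuracy_cons}. I expect the main obstacle to be the derivative simplification: one must verify that the unwieldy quotient-rule numerator collapses to the clean cubic $\Xi(I)$, with the $ACI$ cross terms cancelling and the $ADI^3$ terms combining to $2ADI^3$; once this cancellation is confirmed, the monotonicity of $\Xi$ and the two endpoint sign checks make the remainder routine.
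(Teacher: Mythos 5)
Your proof is correct and follows essentially the same route as the paper's: a quotient-rule differentiation showing $\mathrm{d}\Theta'/\mathrm{d}I$ has the sign of the cubic $\Xi(I)=2ADI^{3}+3BDI^{2}-BC$ (identical to the paper's $\Xi$ after substituting $D=4\beta^{2}\gamma^{2}T_{1}$), monotonicity of $\Xi$ via $\Xi'(I)=6DI(AI+B)>0$, a unique sign change giving unimodality of $\Theta'$, and projection onto $\mathbb{N}^{+}$ by comparing the two neighbors of $I'$ (or taking $I^{*}=1$ when $I'\le 1$). Your additional endpoint check $\Xi\bigl(\sqrt{C/D}\bigr)>0$, certifying that the root $I'$ lies inside the region where $C-DI^{2}>0$, is a small but worthwhile refinement the paper leaves implicit.
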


\begin{proof}
See Appendix C.
\end{proof}

Therefore, we obtain an optimal solution for problem~\eqref{subproblem_1}.

\begin{algorithm}[t]
	%\textsl{}\setstretch{1.8}
	\renewcommand{\algorithmicrequire}{\textbf{Input:}}
	\renewcommand{\algorithmicensure}{\textbf{Output:}}
	\caption{Dinkelbach Method for Solving Problem~\eqref{subproblem_2}}\label{dinkelback_method}
	\begin{algorithmic}[1]
        \REQUIRE  $\lambda^{(0)}$ satisfying $\Upsilon ( {\lambda^{(0)}, {\boldsymbol{\mu }}^{(0)},{\bf{T}}^{(0)}} ) \!\ge\! 0$, ${\boldsymbol{\mu }}^{(0)}$, ${\bf{T}}^{(0)}$ and ${\varepsilon _d}$.
		\ENSURE ${{\boldsymbol{\mu }}^{\bf{*}}}$ and $\lambda^*$. 
          \STATE Initialization:  $n  \leftarrow 1$.
          \REPEAT     
          \STATE Solve problem~\eqref{subproblem_2_linear} with $\lambda=\lambda^{(n)}$ to obtain ${\boldsymbol{\mu }}^{(n)}$ and ${\bf{T}}^{(n)}$ 
          \STATE Calculate $\Upsilon ( {\lambda^{(0)},{\boldsymbol{\mu }}^{(n)},{\bf{T}}^{(n)}} ) $      
          \STATE ${\lambda ^{(n+1)}} \leftarrow \frac{{Q\left( {{\boldsymbol{\mu }}^{(n)},{{\bf{T}}^{(n)}}} \right)}}{{P\left( {\boldsymbol{\mu }}^{(n)},{{\bf{T}}^{(n)}} \right)}}$,
          \STATE $n  \leftarrow n+1$
         \UNTIL{$|\Upsilon ( {\lambda^{(n)},{\boldsymbol{\mu }}^{(n)},{\bf T}^{(n)}} )|\le {\varepsilon _d}$}    
	\end{algorithmic}  
\end{algorithm}

By fixing the decision variable $I$, we convert problem~\eqref{problem_2} into a standard mixed-integer linear fractional  programming (MILFP) with respect to $\boldsymbol{\mu}$ and ${\bf{T}}$, which is given by
\begin{align}\label{subproblem_2}
\mathcal{P}_2:&\mathop {{\rm{min}}}\limits_{{\boldsymbol{\mu}, {\bf{T}}}} \Theta' ( {\boldsymbol{\mu }}, {\bf{T}} )   \\
&\mathrm{s.t.} ~\mathrm{C2}, ~\mathrm{C3},~\mathrm{R1}-\mathrm{R8}. \nonumber
\end{align}

We utilize the Dinkelbach algorithm~\cite{dinkelbach1967nonlinear} to efficiently solve problem~\eqref{subproblem_2} by reformulating it as mixed-integer linear programming, which achieves the optimal solution~\cite{yue2013reformulation,rodenas1999extensions}. Introducing the fractional parameter $\lambda$, we reformulate problem~\eqref{subproblem_2} as
\begin{align}\label{subproblem_2_linear}
\mathcal{P}'_2:&\mathop {{\rm{min}}}\limits_{{\boldsymbol{\mu}, {\bf{T}}}}\Upsilon ( {\lambda, {\boldsymbol{\mu }},{\bf{T}}} ) = Q( {\boldsymbol{\mu }},{\bf{T}} ) -\lambda P ( {\boldsymbol{\mu }}, {\bf{T}} )   \\
&\mathrm{s.t.} ~\mathrm{C2}, ~\mathrm{C3},~\mathrm{R1}-\mathrm{R8}, \nonumber
\end{align}
where 
{\small\begin{align}
&Q\!\left( {{\boldsymbol{\mu }}, {\bf{T}}} \right) = 2\vartheta \left\{ {I\left\{ {{T_3}\! +\! T_s^F\! + \!T_s^B \!+ \!{T_4}} \right\}\! + \!{T_5}\! +\! {T_6}\!} \right\}, \nonumber\\
&P\!\left( {{\boldsymbol{\mu }}, {\bf{T}}} \right) = {{\gamma I\Bigg( {\varepsilon  - \frac{{\beta \gamma \sum\limits_{j = 1}^L {\sigma _j^2} }}{N} - 4{\beta ^2}{\gamma ^2}{I^2}{T_1}} \Bigg)}}. \nonumber
\end{align}}

The Dinkelbach algorithm iteratively solves problem~\eqref{subproblem_2_linear}, which is described as follows: We first initialize the fractional parameter $\lambda^{(0)}$ such that $\Upsilon ( {\lambda^{(0)}, {\boldsymbol{\mu }}^{(0)},{{\bf{T}}}^{(0)}} ) \!\ge\! 0$. At each iteration, we obtain the optimal solution by solving problem~\eqref{subproblem_2_linear} with $\lambda^{(n)}$. After that, we update the fractional parameter $\lambda^{(n+1)}$. We repeat the iterations until $|\Upsilon ( {\lambda^{(n)},{\boldsymbol{\mu }}^{(n)},{\bf{T}}^{(n)}} )| < {\varepsilon _d}$, where ${\varepsilon _d}$ is the convergence tolerance. The pseudo-code of the Dinkelbach algorithm for solving problem~\eqref{subproblem_2} is given in \textbf{Algorithm~\ref{dinkelback_method}}.

As aforementioned, we decompose the original problem~\eqref{time_minimize_problem} into two tractable subproblems $\mathcal{P}1$ and $\mathcal{P}2$ according to different decision variables and develop efficient algorithms to solve
each subproblem optimally. Based on this, we propose an iterative block-coordinate descent (BCD)-based algorithm~\cite{tseng2001convergence} to solve problem~\eqref{time_minimize_problem}, which is shown in~\textbf{Algorithm~\ref{BCD-based}}. \rev{The key parameters required for executing algorithm (e.g., $\beta$, ${G _j^2}$ and ${\sigma _j^2}$) are estimated following the approach in~\cite{wang2019adaptive} and lightweight tools can be utilized to measure network resources, such as iperf~\cite{tirumala1999iperf} for bandwidth estimation and system monitoring tools (e.g., Linux top or Android performance APIs) for computing speed.} It is noted that, given dynamic network and training conditions, \textbf{Algorithm~\ref{BCD-based}} can be conducted every $I$ rounds or longer to accelerate SFL under time-varying edge environments, as illustrated in \textbf{Algorithm~\ref{AdaptSFL_procedure}}.

\begin{algorithm}[t]
	%\textsl{}\setstretch{1.8}
	\renewcommand{\algorithmicrequire}{\textbf{Input:}}
	\renewcommand{\algorithmicensure}{\textbf{Output:}}
	\caption{BCD-based Algorithm.}\label{BCD-based}
	\begin{algorithmic}[1]
        \REQUIRE  ${I}^{(0)}$, ${\boldsymbol{\mu }}^{(0)}$, ${{\bf{T}}}^{(0)}$ and ${\varepsilon _b}$.
		\ENSURE  ${I^{\bf{*}}}$ and ${{\boldsymbol{\mu }}^{\bf{*}}}$  
          \STATE Initialization:  $\tau  \leftarrow 0$.
          \REPEAT 
          \STATE$\tau  \leftarrow \tau+1$
           \STATE Update $I^{(\tau)}$ based on \textbf{Theorem 2}
           \STATE Update ${\boldsymbol{\mu }}^{(\tau)}$ and  ${{\bf{T}}}^{(\tau)}$ based on \textbf{Algorithm 2} 
         \UNTIL{\small{$|\Theta' ( I^{(\tau)}, {{\boldsymbol{\mu }}^{(\tau)}}, {{\bf{T}}^{(\tau)}} ) - \Theta' ( I^{(\tau-1)}, {\boldsymbol{\mu }}^{(\tau-1)}, {{\bf{T}}^{(\tau-1)}}| \!\le \!{\varepsilon _b}$}}    
         
	\end{algorithmic}  
\end{algorithm}

\section{Performance Evaluation}\label{simu_results}
This section provides numerical results to evaluate the learning performance of the proposed AdaptSFL framework and the effectiveness of tailored client-side MA and MS strategy.

\subsection{Simulation Setup}\label{simu_setup}
We implement AdaptSFL using Python 3.7 and PyTorch 1.9.1., and train it on a ThinkPad P17 Gen1 laptop equipped with an NVIDIA Quadro RTX 3000 GPU, Intel i9-10885H CPUs and 4TB SSD. In our experiments, we deploy $N$ edge devices and $N$ is set to 20 by default unless specified otherwise. The computing capability of each edge device is uniformly distributed within $[1, 2]$ TFLOPS, and the computing capability of the server is set to $20$ TFLOPS. The uplink transmission rates from edge device $i$ to the edge server and fed server follow uniform distribution wit hin $[75, 80]$ Mbps, and the corresponding downlink rates are set to $370$ Mbps. \rev{It is noted that the computing capability of each edge device and its uplink transmission rates to both edge server and fed server vary across every training round, reflecting dynamic variation in network resources throughout the training process.} For convenience, the inter-server transmission rate, namely ${r_s^{U}}$ and ${r_s^{D}}$, are identically set to $400$ Mbps. The mini-batch size and learning rate are set to 16 and $5\times {10^{-4}}$. For readers' convenience, the detailed simulation parameters are summarized in Table II.

\begin{figure}[t]
\vspace{-.5ex}
\setlength\abovecaptionskip{3pt}
\centering
\subfigure[\rev{CIFAR-10 on VGG-16 under IID setting.}]{
\includegraphics[width=.44\columnwidth]{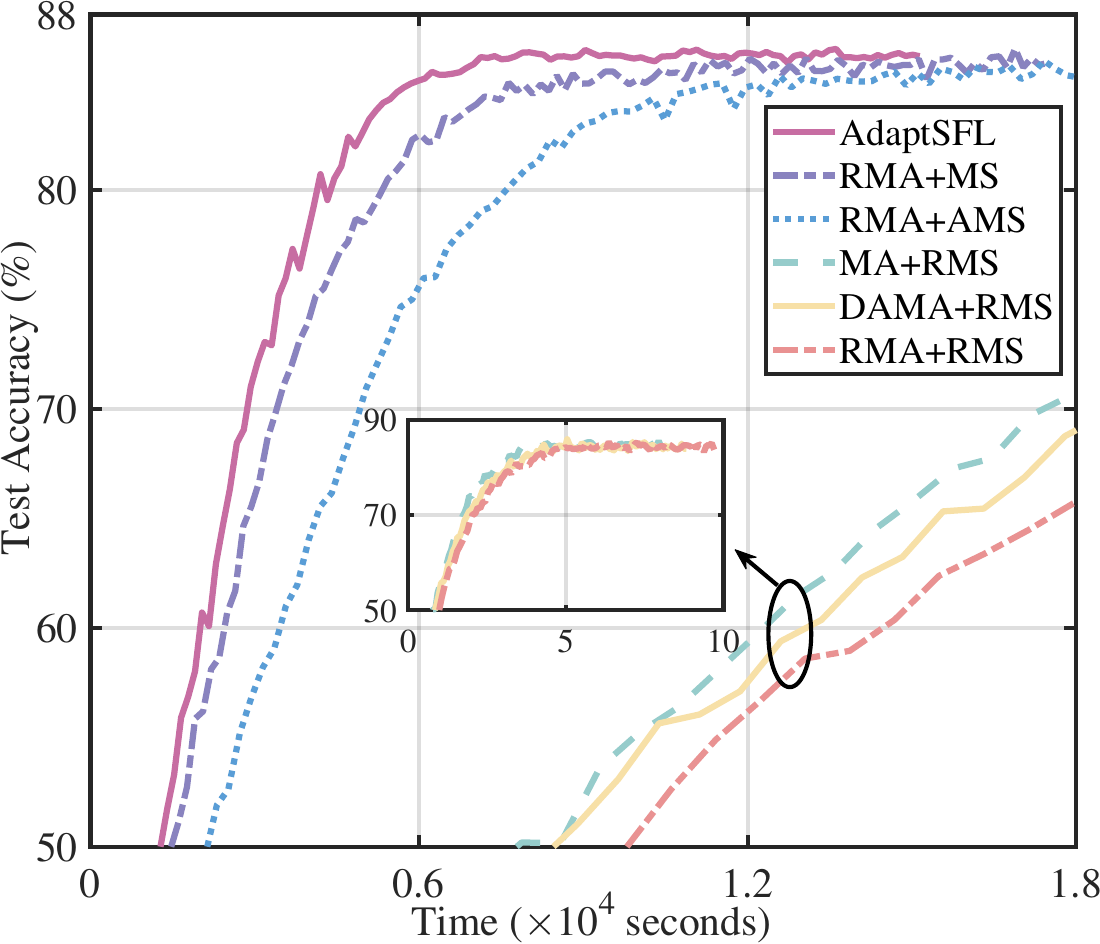}
\label{sfig:cifar_iid_test_accuracy}
}
\subfigure[\rev{CIFAR-10 on VGG-16 under non-IID setting.}]{ \includegraphics[width=.44\columnwidth]{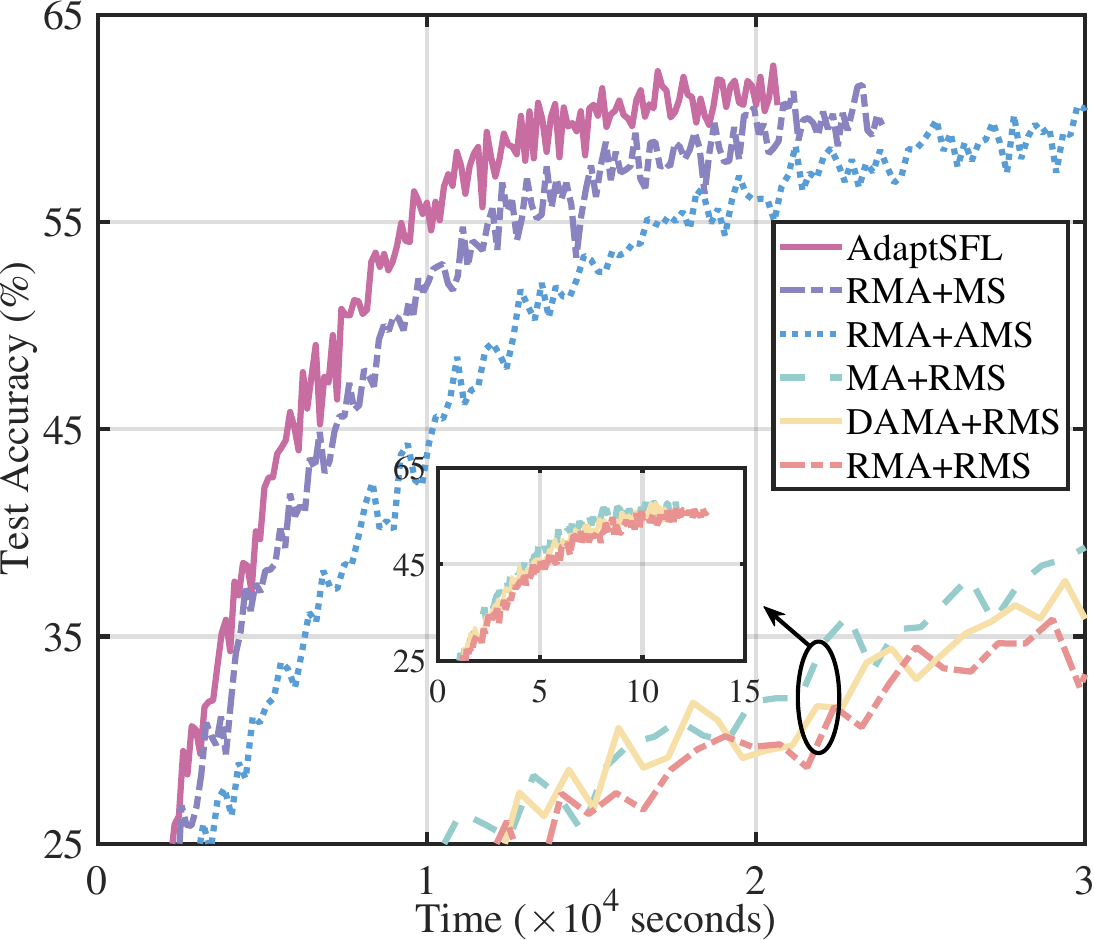}
\label{sfig:cifar_non_iid_test_accuracy}
}
\subfigure[ \rev{CIFAR-100 on ResNet-18 under IID setting.}]{
\includegraphics[width=.44\columnwidth]{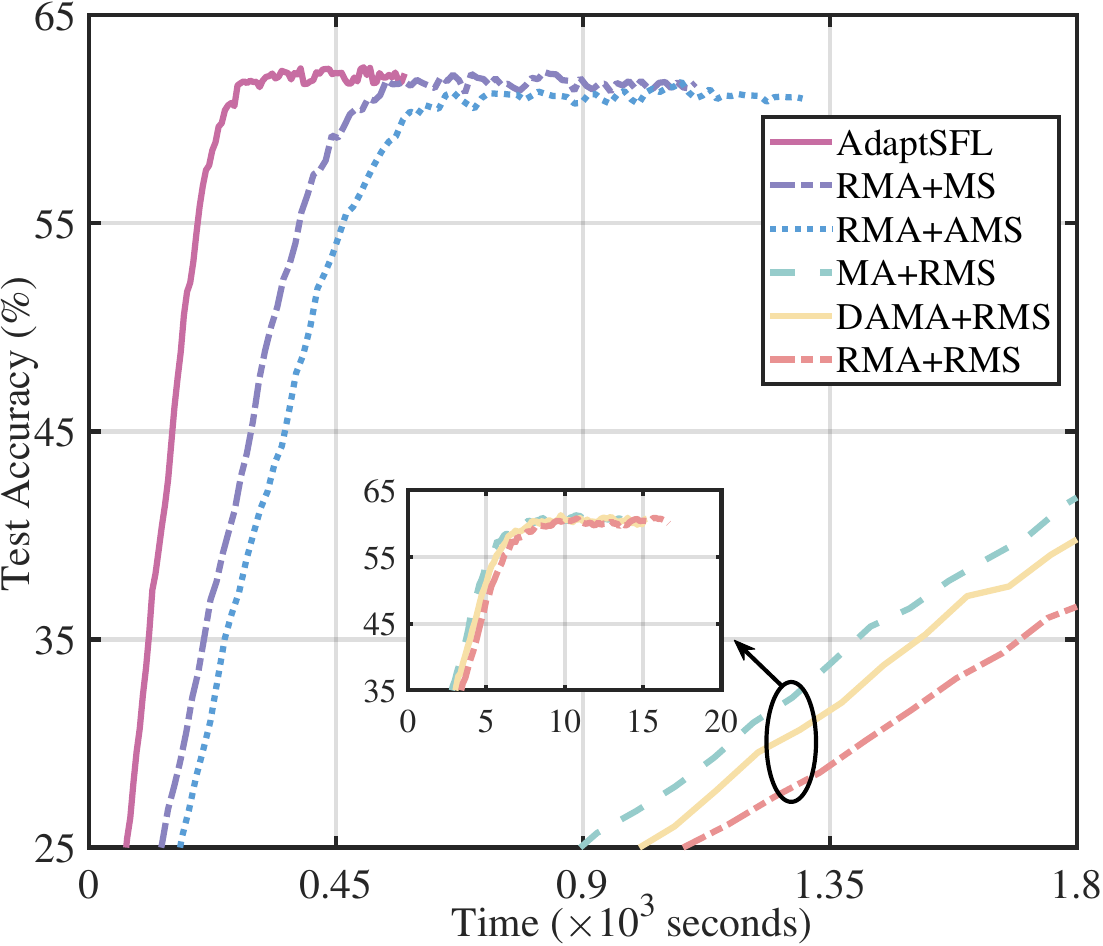}
\label{sfig:mnist_iid_test_accuracy}}
\subfigure[\rev{CIFAR-100 on ResNet-18 under non-IID setting.}]{
    \includegraphics[width=.44\columnwidth]{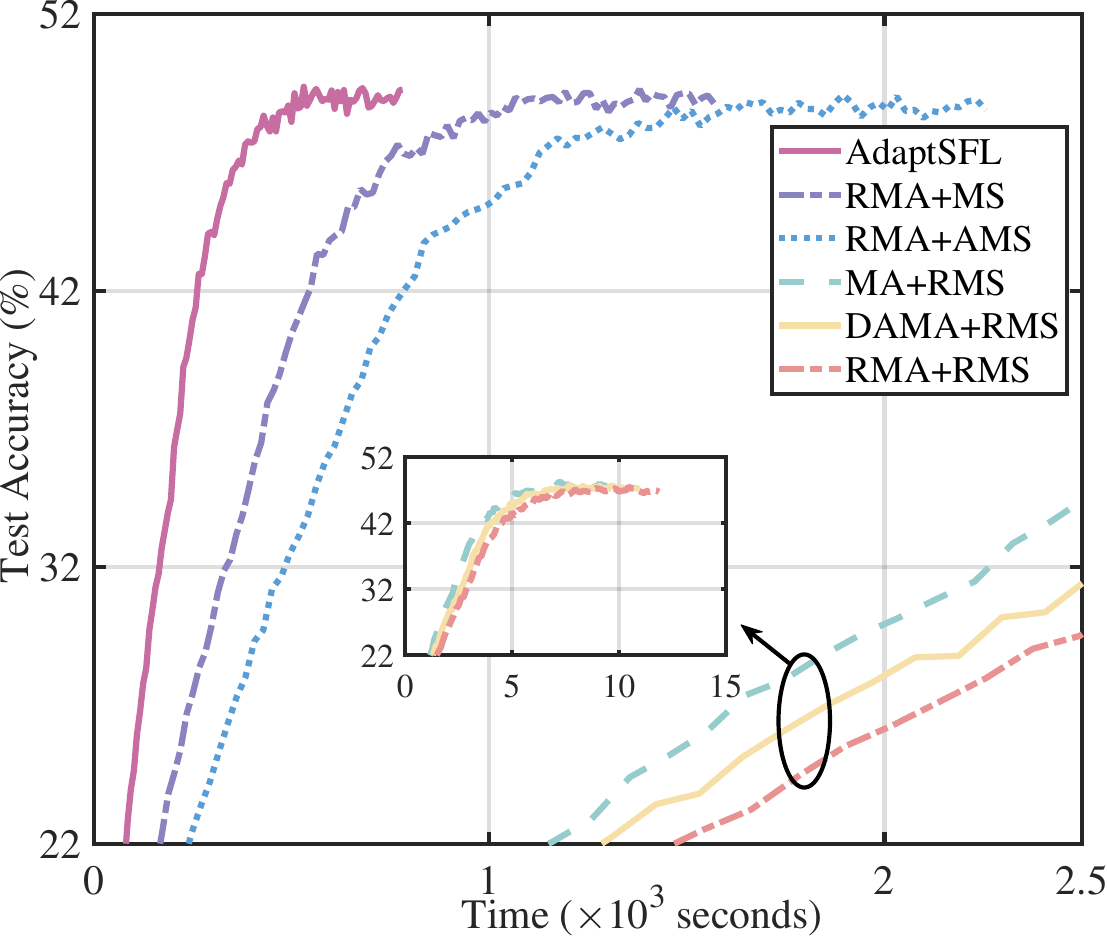}
    \label{sfig:mnist_non_iid_test_accuracy}
}
    \caption{\rev{The training performance for CIFAR-10 and CIFAR-100 datasets under IID and non-IID settings using VGG-16 and ResNet-18.}}
    \label{fig:test_accuracy}
    \vspace{-2ex}
\end{figure}

\begin{table}[t]\label{table_2}
  \centering
  \caption{Simulation Parameters.}
  \renewcommand{\arraystretch}{1.3}{
  \setlength{\tabcolsep}{1.3mm}{
\begin{tabular}{|c|c|c|c|}
\hline
\textbf{Parameter}          & \textbf{Value} & \textbf{Parameter} & \textbf{Value}  \\ \hline
$f_s$             & $20$ TFLOPS             & $f_i$                 & $[1, 2]$ TFLOPS                          \\ \hline
$N$             & 20              & ${r_i^{U}}$/$r_{i,f}^U$           & $[75, 80]$ Mbps                   \\ \hline
${r_i^{D}}$/$r_{i,f}^D$               & $370$ Mbps              & ${r_s^{U}}$/${r_s^{D}}$                  & $400$ Mbps                      \\ \hline
$b$             & 16           & $\gamma$             & $5\times {10^{-4}}$                      \\ \hline

\end{tabular}}}
\end{table}

{We adopt the object detection dataset CIFAR-10 and CIFAR-100~\cite{krizhevsky2009learning} to evaluate the learning performance of AdaptSFL. The CIFAR-10 dataset contains 10 distinct categories of object images, such as airplanes, automobiles and trucks, and comprises 50000 training samples as well as 10000 test samples. The CIFAR-100 dataset includes object images from 100 categories, with each category containing 500 training samples and 100 test samples.} Furthermore, we conduct experiments under IID and non-IID data settings. The data samples are shuffled and distributed evenly to all edge devices in the IID setting. In the non-IID setting~\cite{zhu2019broadband,yang2020energy,lin2024fedsn}, we sort the data samples by labels, divide them into 40 shards, and distribute 2 shards to each of the 20 edge devices.

{We deploy the well-known VGG-16~\cite{simonyan2014very} and ResNet-18~\cite{he2016deep} neural networks. VGG-16 is a classical deep convolutional neural network comprised of 13 convolution layers and 3 fully connected layers, while ResNet-18 is a residual neural network composed of 17 convolutional layers and 1 fully connected layer. } 

% It leverages the stacking of multiple convolution layers to effectively extract features from images, while the fully connected layers are responsible for classifying and predicting the extracted features. 

\subsection{Performance Evaluation of AdaptSFL Framework}\label{simu_setup}

\begin{figure}[t]
% \vspace{-.5ex}
\setlength\abovecaptionskip{3pt}
\centering
\subfigure[CIFAR-10 on VGG-16 under IID setting.]{
\includegraphics[height=3.5cm,width=3.9cm]{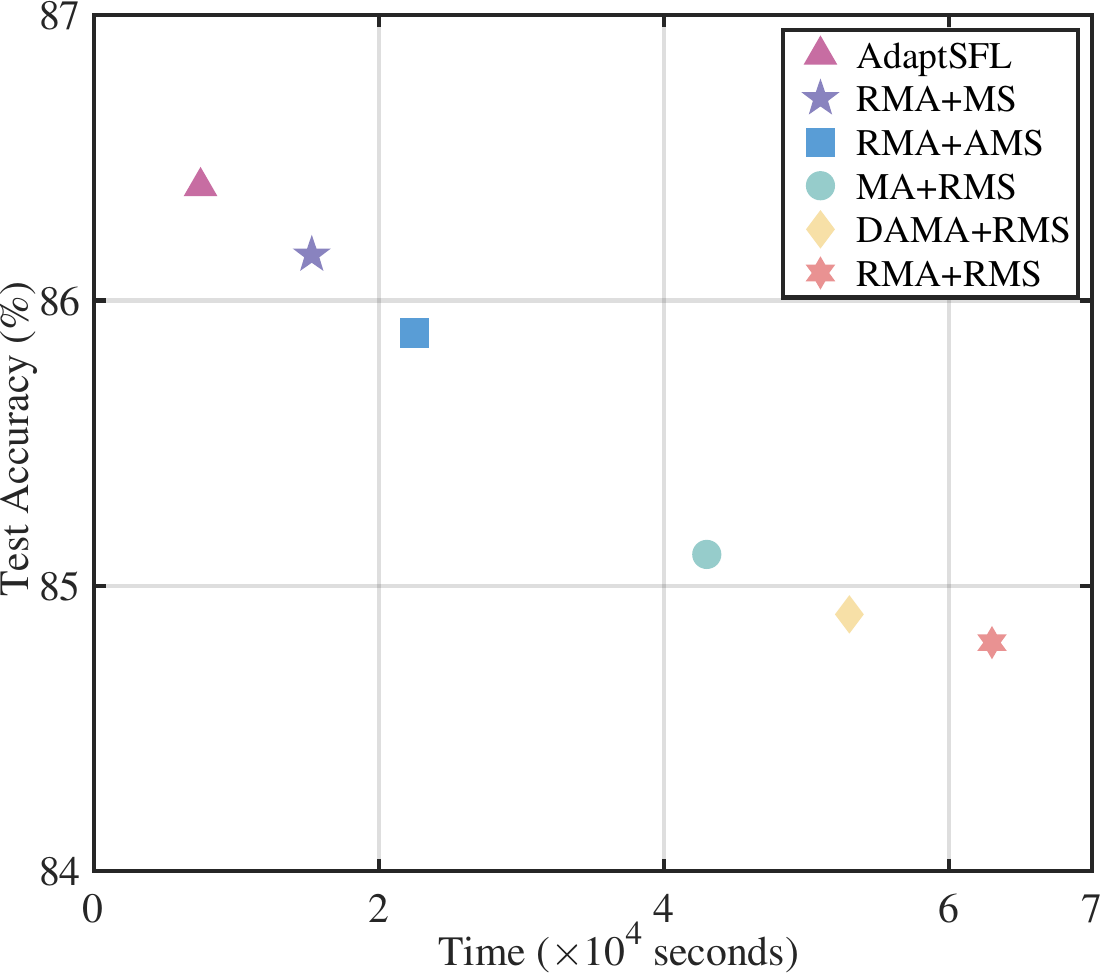}
    \label{sfig:cifar_iid_time_accuracy}
}
\subfigure[CIFAR-10 on VGG-16 under non-IID setting.]{
    \includegraphics[height=3.5cm,width=3.9cm]{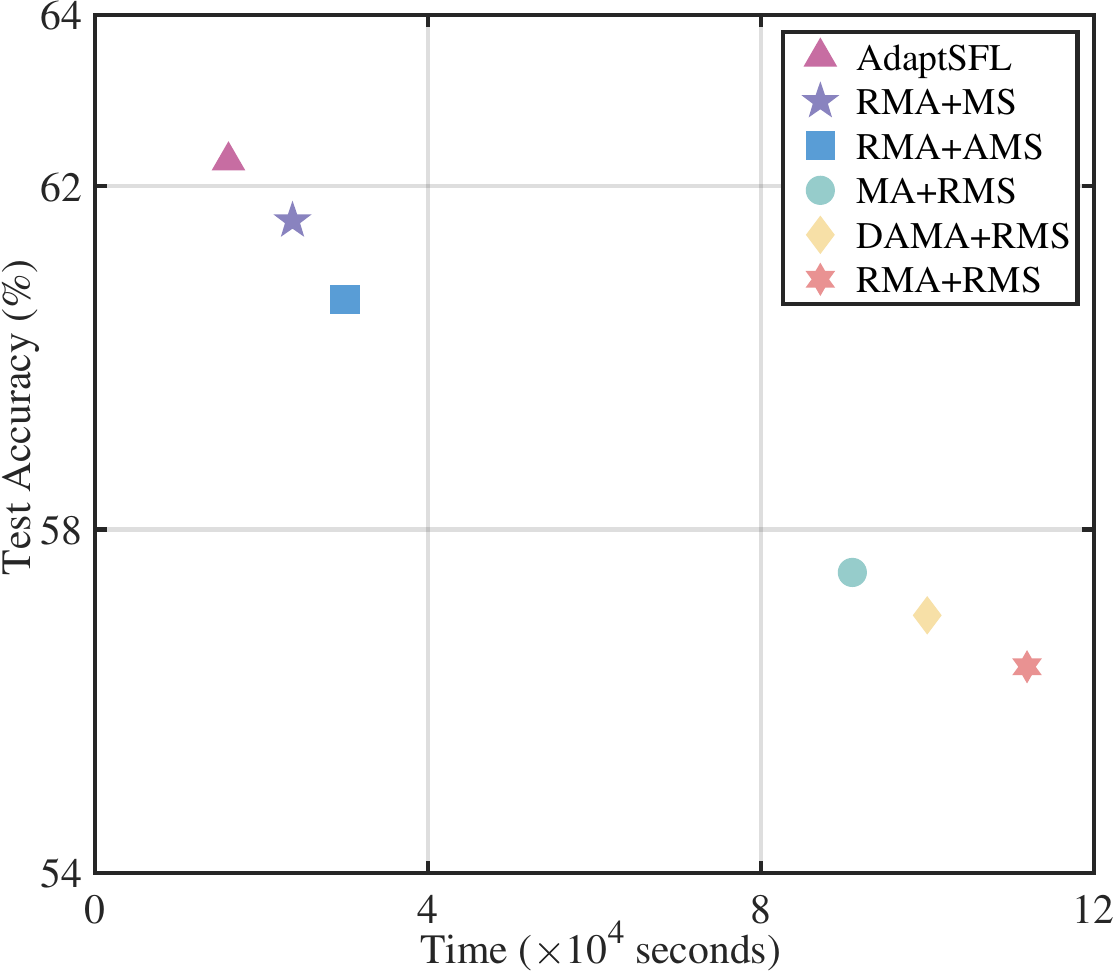}
    \label{sfig:cifar_non_iid_time_accuracy}
}
\subfigure[CIFAR-100 on ResNet-18 under IID setting.]{
\includegraphics[height=3.5cm,width=3.9cm]{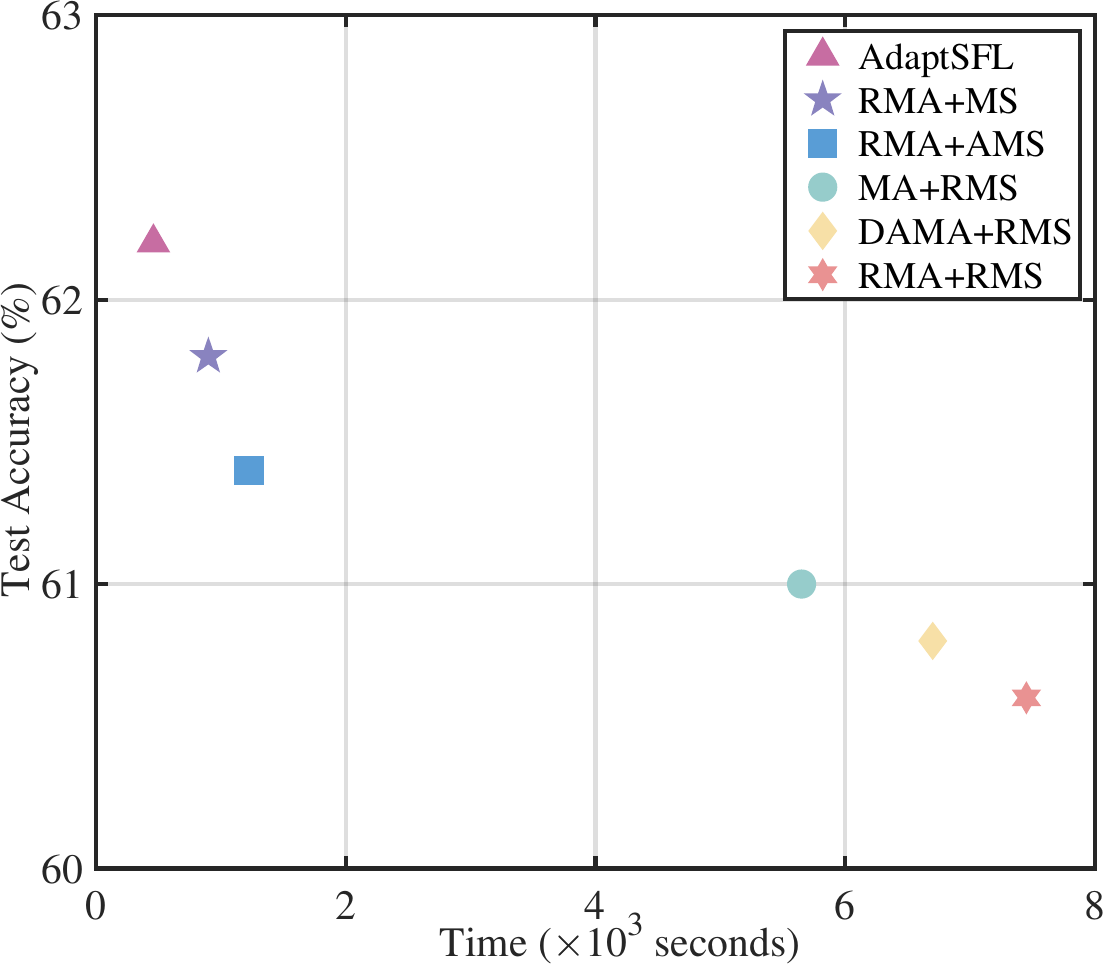}
    \label{sfig:mnist_iid_time_accuracy}
}
\subfigure[CIFAR-100 on ResNet-18 under non-IID setting.]{
    \includegraphics[height=3.5cm,width=3.9cm]{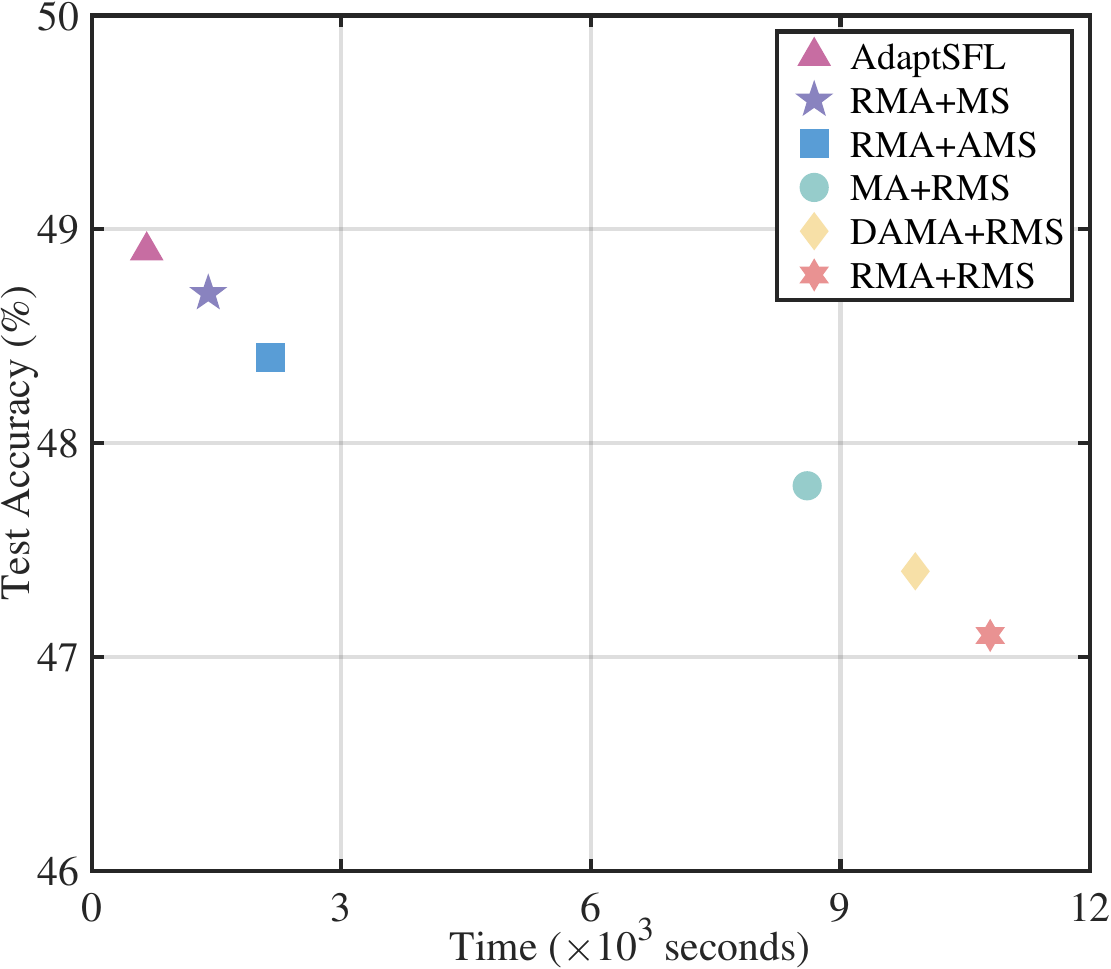}
    \label{sfig:mnist_non_iid_time_accuracy}
}
    \caption{The converged accuracy and time for CIFAR-10 and CIFAR-100 datasets under IID and non-IID settings using VGG-16 and ResNet-18.}
    \label{fig:time_accuracy}
    \vspace{-2ex}
\end{figure}

In this section, we assess the overall performance of AdaptSFL framework in terms of test accuracy and convergence speed. In addition, we evaluate the robustness of AdaptSFL to varying network resources. To investigate the advantages of AdaptSFL framework, we compare it with five benchmarks:
\begin{itemize}
    \item {\bf{RMA+MS:}} The RMA+MS benchmark deploys a random client-side MA strategy (i.e., the client-side MA interval $I$ is randomly drawn from 1 to 25 during model training.), and employs the adaptive MS scheme in Section~\ref{solu_appro}.
    \item {\bf{MA+RMS:}} The MA+RMS benchmark utilizes the adaptive client-side MA strategy in Section~\ref{solu_appro} and adopts a random MS scheme (i.e., randomly selecting model split points during model training).
    \item {\bf{RMA+RMS:}} The RMA+RMS benchmark employs the random client-side MA and MS strategy. 
    \item {{\bf{DAMA+RMS:}} The DAMA+RMS benchmark employs the depth-aware adaptive client-side MA scheme~\cite{you2023aifed} and utilizes the RMS strategy.}
    \item {{\bf{RMA+AMS:}} The RMA+HMS benchmark employs the RMA scheme and utilizes a adaptive resource-heterogeneity-aware MS strategy~\cite{wang2023coopfl}.}

\end{itemize}

{Fig.~\ref{fig:test_accuracy} illustrates the training performance of AdaptSFL and five benchmarks on the CIFAR-10 and CIFAR-100 datasets.} AdaptSFL exhibits faster convergence speed and higher test accuracy compared to the other five benchmarks as the model converges. {Notably, AdaptSFL, RMA+MS, and RMA+AMS converge significantly faster than MA+RMS, DAMA+RMS, and RMA+RMS owing to adaptive MS, which strikes a good balance between communication-computing overhead and training convergence speed.} Furthermore, the comparison between AdaptSFL and RMA+MS and RMA+AMS, shows that the tailored client-side MA strategy can further accelerate model training without deteriorating training accuracy. {The performance gap between RMA+MS and RMA+AMS, as well as MA+RMS and DAMA+RMS, is mainly because RMA+AMS and DAMA+RMS are heuristic and have not been systematically optimized based on the interplay between model convergence and client-side model aggregation and model splitting.} By comparing Fig.~\ref{sfig:cifar_iid_test_accuracy} with Fig.~\ref{sfig:cifar_non_iid_test_accuracy}, and Fig.~\ref{sfig:mnist_iid_test_accuracy} with Fig.~\ref{sfig:mnist_non_iid_test_accuracy}, reveals that the convergence speed of AdaptSFL and other five benchmarks are slower under non-IID setting than under IID setting.

\begin{figure}[t!]
% \vspace{-.5ex}
\setlength\abovecaptionskip{3pt}
\centering
\subfigure[Computing capabilities of edge devices.]{
\includegraphics[height=3.8cm,width=4.1cm]{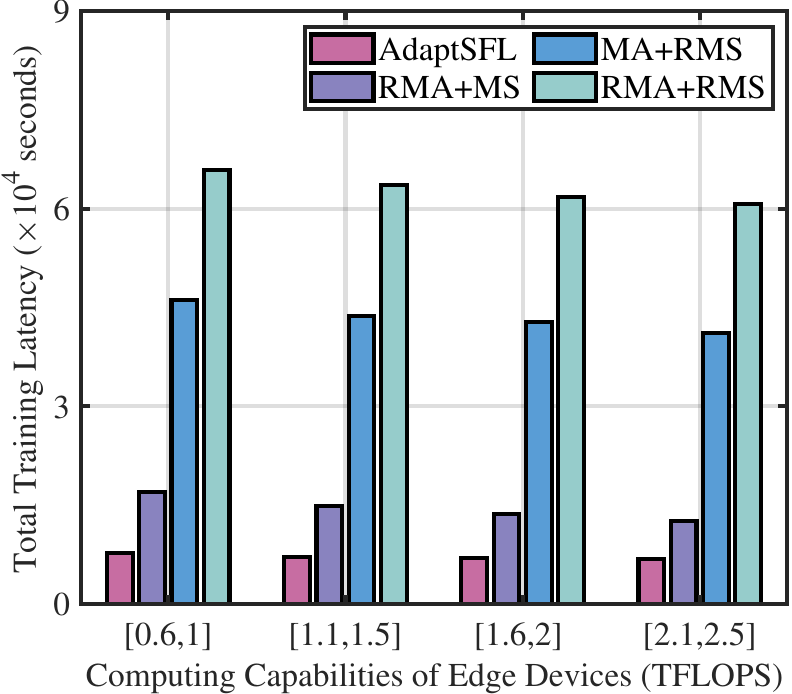}
    \label{sfig:device_comput_accuracy}
}
\subfigure[Computing capabilities of edge server.]{
    \includegraphics[height=3.8cm,width=4.1cm]{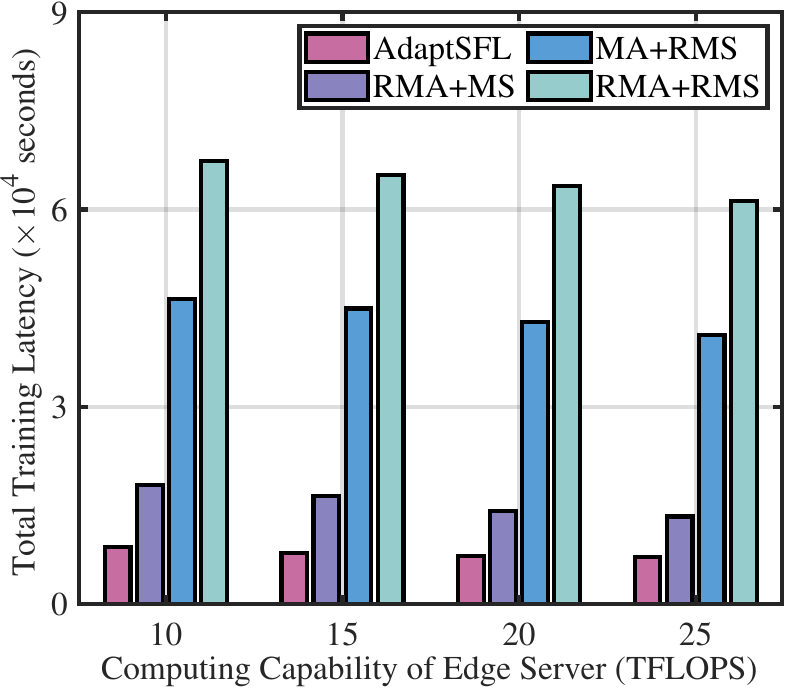}
    \label{sfig:server_comput_accuracy}
}
    \caption{ The converged time versus network computing resources on CIFAR-10 under IID setting using VGG-16.}
    \label{fig:comput_accuracy}
    \vspace{-2ex}
\end{figure}

\begin{figure}[t]
% \vspace{-.5ex}
\setlength\abovecaptionskip{3pt}
\centering
\subfigure[Uplink rates of edge devices.]{
\includegraphics[height=3.7cm,width=4.1cm]{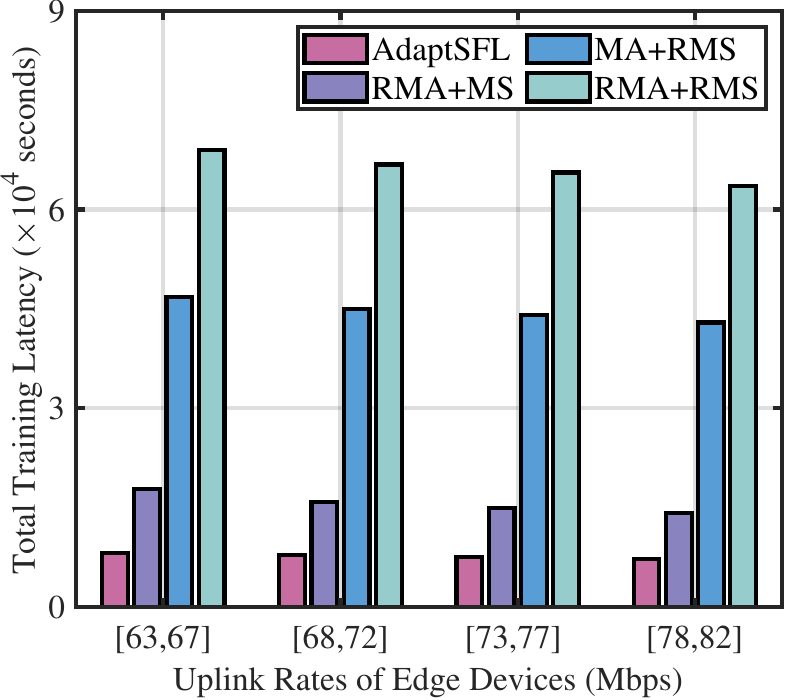}
    \label{sfig:device_commu_accuracy}
}
\subfigure[Inter-server communication rate.]{
    \includegraphics[height=3.7cm,width=4.1cm]{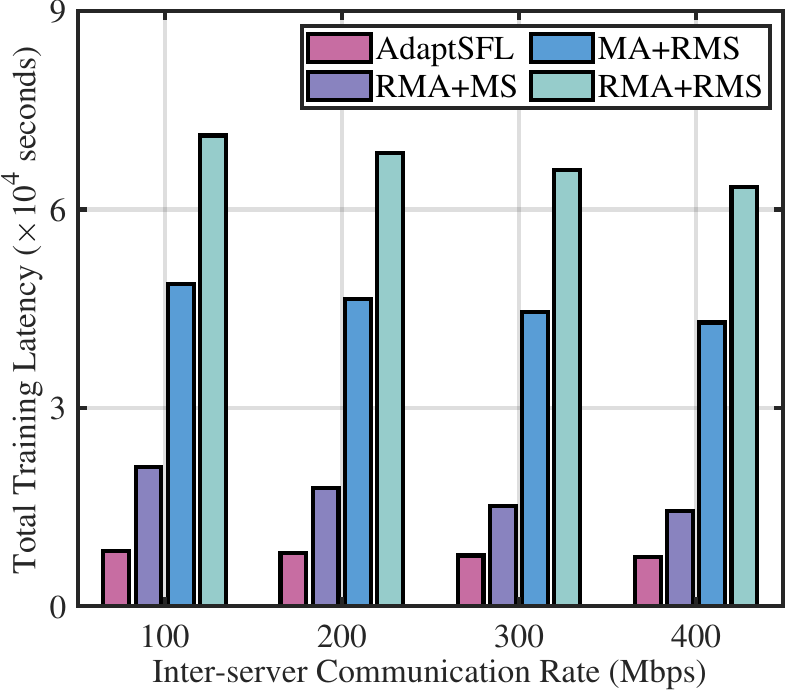}
    \label{sfig:server_commu_accuracy}
}
    \caption{  The converged time versus network communication resources on CIFAR-10 under IID setting using VGG-16.}
    \label{fig:commu_accuracy}
    \vspace{-2ex}
\end{figure}

\begin{figure}[t]
% \vspace{-.5ex}
\setlength\abovecaptionskip{3pt}
\centering
\subfigure[\rev{Computing capability uncertainty.}]{
    \includegraphics[height=3.70cm,width=4.13cm]{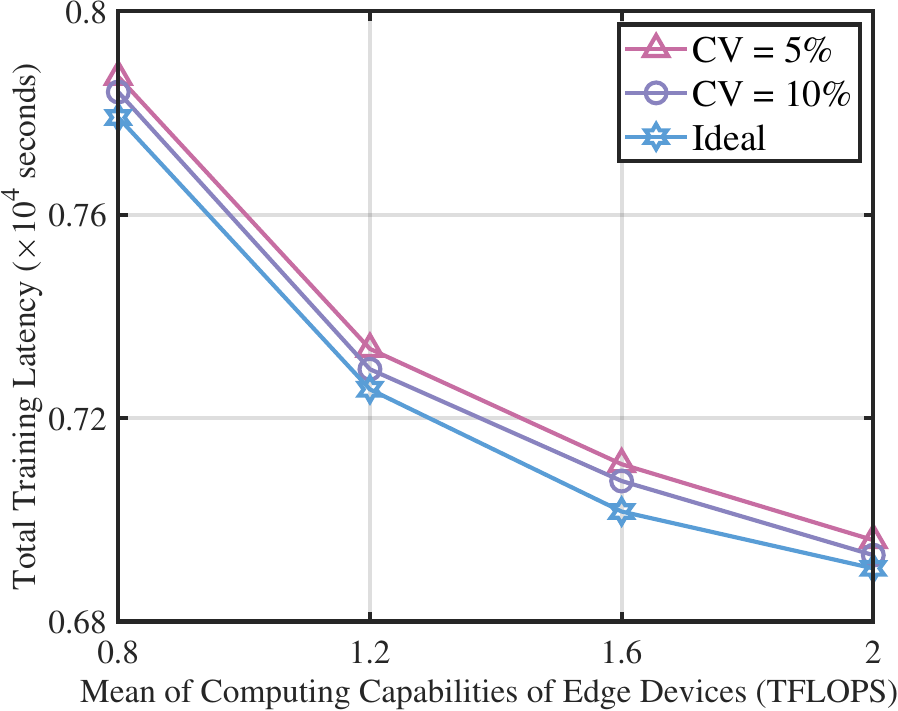}
    \label{sfig:server_commu_accuracy_var}
}
\subfigure[\rev{Uplink transmission rate uncertainty.}]{
\includegraphics[height=3.69cm,width=4.11cm]{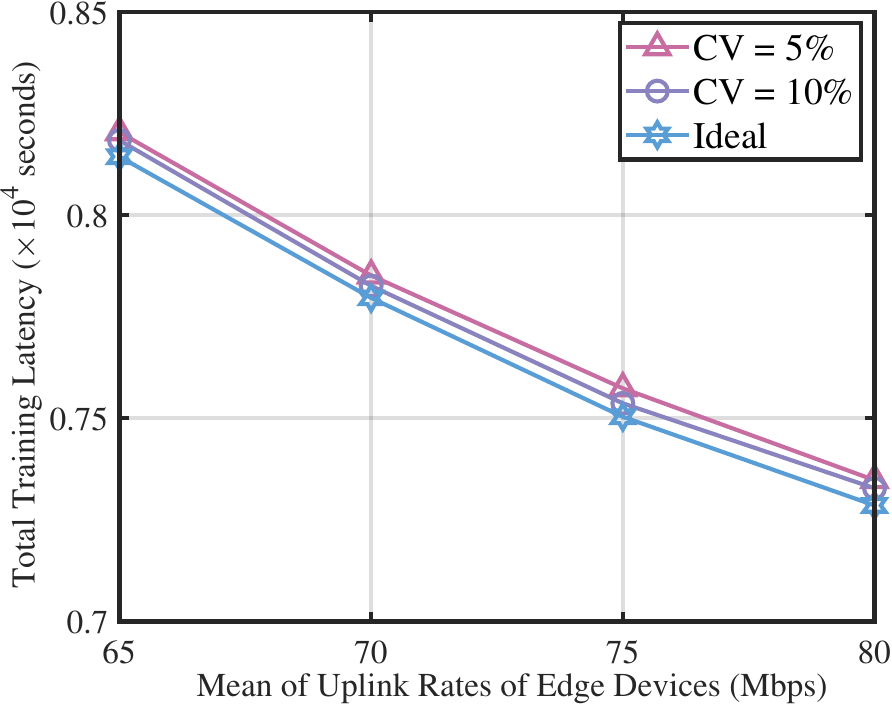}
    \label{sfig:device_commu_accuracy_var}
}
    \caption{ \rev{The impact of network resource measurement uncertainty on converged time on CIFAR-10 under IID setting using VGG-16. }}
    \label{fig:commu_accuracy_var}
    \vspace{-2ex}
\end{figure}

{Fig.~\ref{fig:time_accuracy} presents converged accuracy and time {(i.e., the incremental increase in test accuracy lower than 0.02$\%$ for 5 consecutive training rounds)} for CIFAR-10 and CIFAR-100 datasets.} Comparing AdaptSFL with RMA+MS and MA+RMS reveals a substantial impact of client-side MA interval and MS on converged accuracy and time. Moreover, the impact of MS on model training outweighs that of MA interval. This is because the choice of model split points directly determines the overall aggregation interval of the global model, e.g., shallower model split points (fewer client-side sub-model layers) imply larger portions of the model are aggregated at each training round. In the IID setting, unoptimized MS leads to $1.5 \%$ accuracy degradation and nearly five-fold convergence deceleration on the CIFAR-10 dataset, which is 6.2 and 4.6 times that of unoptimized MA. Furthermore, this difference is more pronounced under the non-IID setting, reaching approximately 7.3 and 10 times, due to model bias caused by discrepancies across local datasets, consistent with FL. The comparison between AdaptSFL and RMA+RMS shows that AdaptSFL achieves a convergence speed improvement of at least 7.7 times over its counterpart without optimization while guaranteeing training accuracy, demonstrating the superior performance of the AdaptSFL framework. {RMA+AMS and DAMA+RMS exhibit slower model convergence and lower accuracy compared to RMA+MS and MA+RMS, primarily because they are not designed based on model convergence.}

Fig.~\ref{fig:comput_accuracy}-\ref{fig:commu_accuracy} show the converged time versus network computing and communication resources on CIFAR-10 under IID setting. AdaptSFL exhibits significantly shorter converged times than the other five benchmarks across varying computing and communication resources. As network resources decline, the convergence speed of RMA+RMS noticeably slows down. This is because it neither optimizes model split points to strike a balance between computing and communication overheads nor optimizes the client-side MA interval for expediting model training. The comparison between RMA+MS and MA+RMS reveals that optimizing MA interval or MS somewhat mitigates the rise in convergence time with reduced network resources.
In contrast, the converged time of AdaptSFL only experiences a slight increase with diminishing network resources, owing to its resource-adaptive client-side MA and MS design. Specifically, as network resources vary, AdaptSFL can optimize MS according to resource conditions and accelerate model convergence by adjusting client-side MA interval to achieve the minimum rounds of model aggregation required for model convergence. This demonstrates the robustness of AdaptSFL to network resources and highlights the adaptability of client-side MA and MS strategies to changes in network resources. \rev{However, in real-world scenarios, computing and communication resources fluctuate during training, leading to discrepancies between measured and actual network resource conditions. To investigate the impact of measurement error, we introduce Gaussian noise with diverse coefficient of variation (CV) to model the fluctuation in data rates and computing capabilities~\cite{esposito2022dts,yoo2024modeling}.  Fig.~\ref{fig:commu_accuracy_var} illustrates that AdaptSFL remains robust across different CVs, causing only minor variations in converged time. This validates the effectiveness of our adaptive MA and MS strategies in dynamic resource fluctuations.}

\begin{figure}[t]
% \vspace{-.5ex}
\setlength\abovecaptionskip{3pt}
\centering
\subfigure[{CIFAR-10 under IID setting.}]{
\includegraphics[height=3.75cm,width=4.1cm]{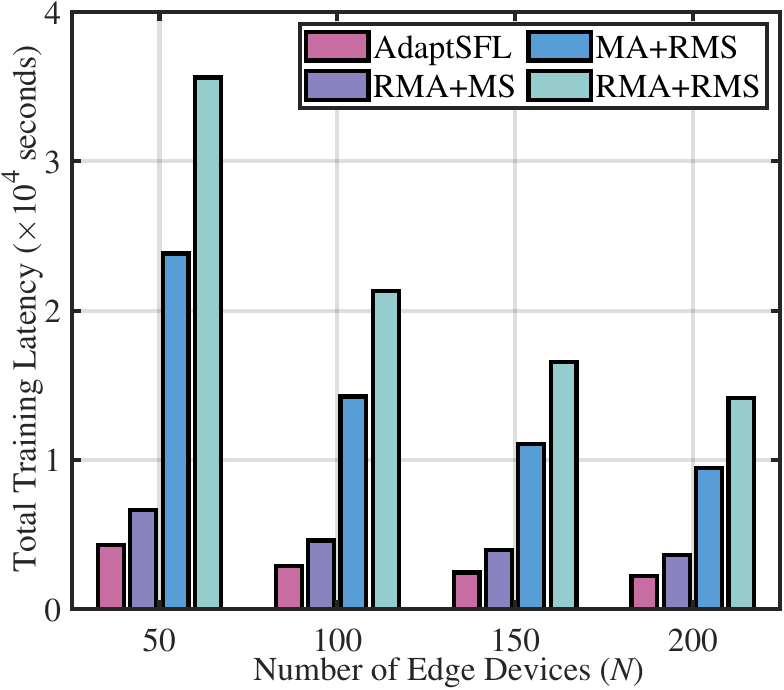}
    \label{sfig:cifar_iid_num}
}
\subfigure[{CIFAR-10 under non-IID setting.}]{
    \includegraphics[height=3.75cm,width=4.1cm]{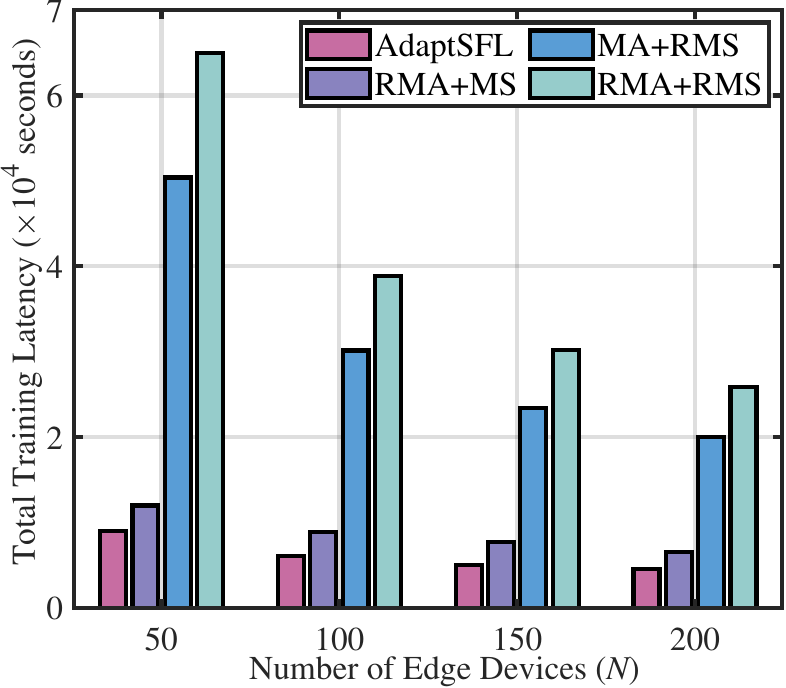}
    \label{sfig:cifar_non_iid_num}
}
    \caption{{The converged time versus number of edge devices on CIFAR-10 under IID and non-IID setting using VGG-16.}}
    \label{fig:cifar_deivce_num}
    \vspace{-2ex}
\end{figure}

{Fig.~\ref{fig:cifar_deivce_num} presents the converged time versus the number of edge devices on CIFAR-10 under IID and non-IID settings. It is seen that AdaptSFL consistently achieves significantly lower training latency compared to MA+RMS, RMA+MS, and RMA+RMS across all configurations of $N$. Notably, as the number of edge devices grows, the performance gap between AdaptSFL and all baselines becomes increasingly pronounced, underscoring the scalability of AdaptSFL in handling large-scale edge networks. Additionally, the total training latency of the non-IID setting is higher than that of the IID setting. This behavior reflects the inherent complexity of training under non-IID data distributions, which often leads to slower convergence due to increased heterogeneity among client-side models. Nonetheless, AdaptSFL maintains its advantage over the baseline methods, demonstrating its robustness and adaptability in tackling diverse and non-uniform data distributions.}

\subsection{Ablation Study of The AdaptSFL Framework}\label{simu_setup}

In this section,  we conduct ablation experiments to illustrate the effectiveness of each component in AdaptSFL.

Fig.~\ref{fig:cifar_cut_aba} shows the impact of MA interval on training performance for the CIFAR-10 dataset. PSL~\cite{lin2023split} exhibits the slowest convergence rate and lowest convergence accuracy due to its lack of client-side MA, leading to inferior generalization performance of local client-side sub-models on the global dataset. In contrast, the proposed resource-adaptive MA strategy expedites model convergence while retaining accuracy comparable to SFL with $I=1$ (equivalent to centralized learning performance). The proposed client-side MA strategy achieves $1.5 \%$ and $5.1 \%$ higher convergence accuracy than PSL, and the convergence time is improved by a factor of 1.8 and 1.9 compared to SFL with $I=1$ under IID and non-IID settings. This improvement stems from its ability to dynamically adjust MA interval to achieve the minimum communication-computing latency required for model convergence. Hence, the effectiveness of the proposed MA scheme is demonstrated.

\begin{figure}[t]
% \vspace{-.5ex}
\setlength\abovecaptionskip{3pt}
\centering
\subfigure[CIFAR-10 under IID setting.]{
\includegraphics[height=3.7cm,width=4.1cm]{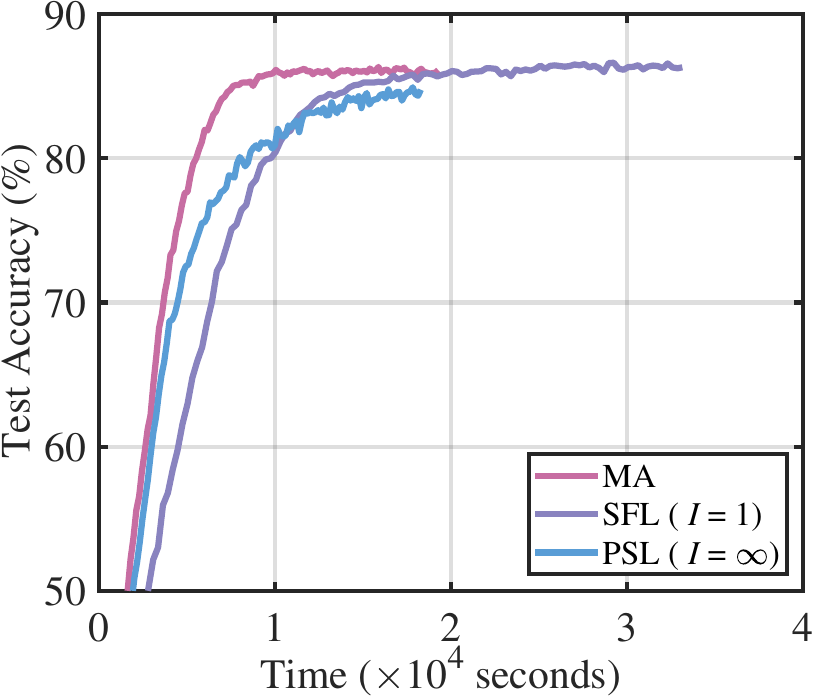}
    \label{sfig:cifar_iid_cut_aba}
}
\subfigure[CIFAR-10 under non-IID setting.]{
    \includegraphics[height=3.7cm,width=4.1cm]{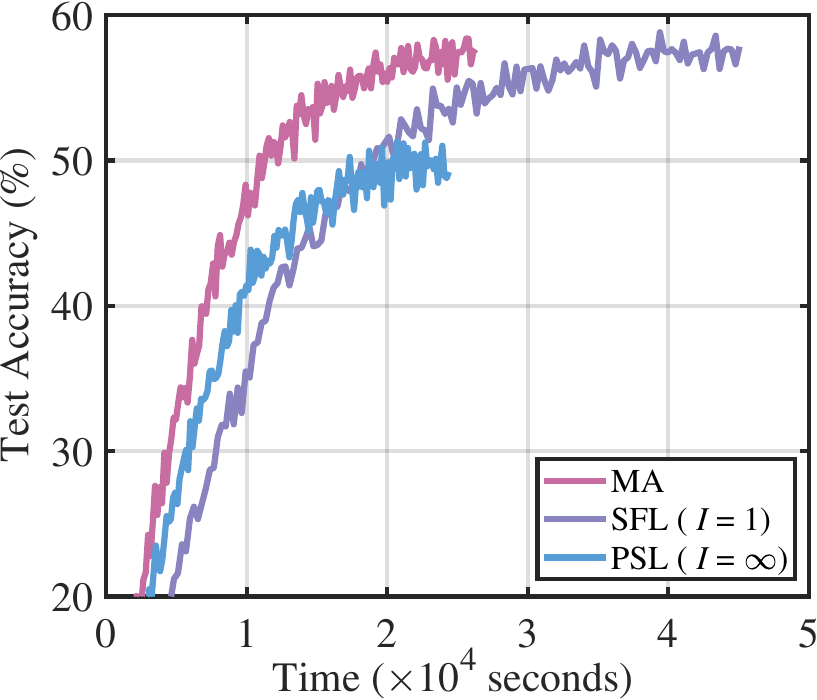}
    \label{sfig:cifar_non_iid_cut_aba}
}
    \caption{Ablation experiments for MA strategy on the CIFAR-10 dataset under IID and non-IID setting using VGG-16 with $L_c=8$, {where parallel split learning (PSL)~\cite{lin2023split} is a special case of SFL when $I = \infty$.}}
    \label{fig:cifar_cut_aba}
    \vspace{-2ex}
\end{figure}

\begin{figure}[t]
% \vspace{-.5ex}
\setlength\abovecaptionskip{3pt}
\centering
\subfigure[CIFAR-10 under IID setting.]{
\includegraphics[height=3.7cm,width=4.1cm]{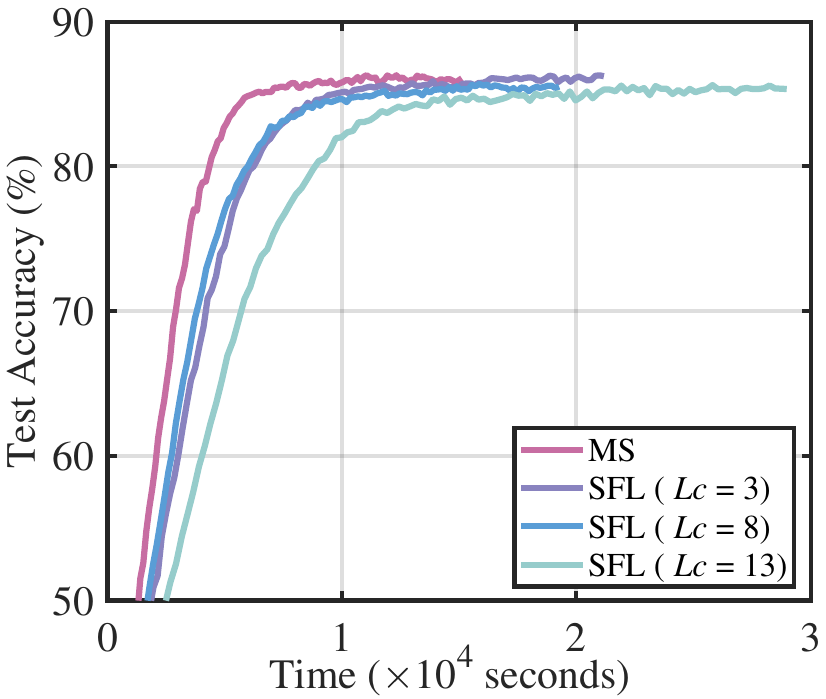}
    \label{sfig:cifar_iid_I_15_different_cut}
}
\subfigure[CIFAR-10 under non-IID setting.]{
    \includegraphics[height=3.7cm,width=4.1cm]{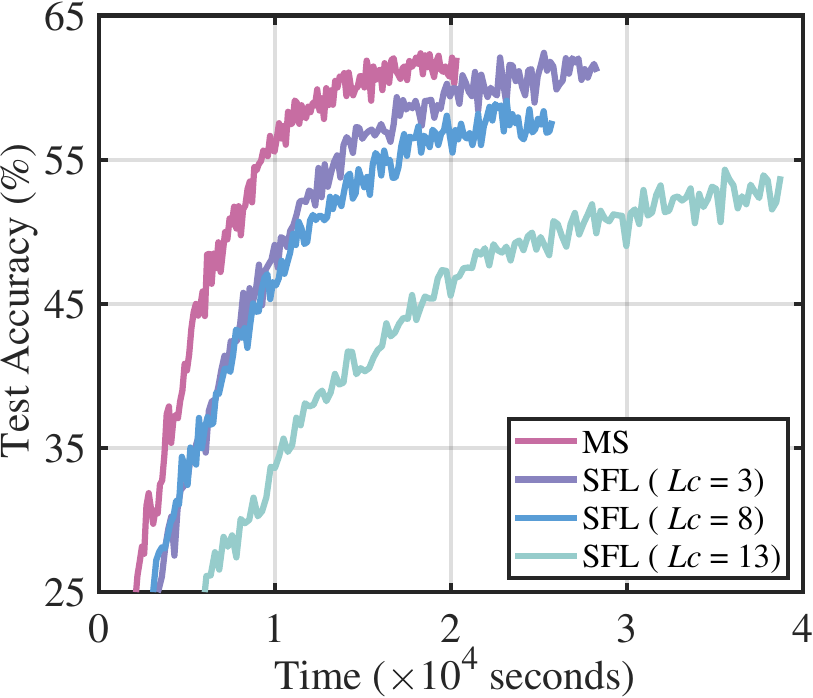}
    \label{sfig:cifar_non_iid_I_15_different_cut}
}
    \caption{Ablation experiments for MS scheme on the CIFAR-10 dataset under IID and non-IID setting using VGG-16 with $I=15$.}
    \label{fig:I_15_different_cut}
    \vspace{-2ex}
\end{figure}

Fig.~\ref{fig:I_15_different_cut} presents the impact of MS on training performance for the CIFAR-10 dataset. It can be seen that the convergence accuracy and speed decrease with the deepening of model split point $L_c$, consistent with the derived convergence boundary in Eqn.~\eqref{convergence_bound}. This is because deeper model split points result in the smaller portion of global model being aggregated per training round, leading to a lower overall update frequency of the whole model. Furthermore, the impact of MS on convergence accuracy and speed is more significant under non-IID than IID settings. This is attributed to the higher sensitivity of local dataset distribution discrepancies to MS under non-IID setting. The comparisons between MS and the other five benchmarks reveal that the proposed MS strategy can accelerate model convergence while guaranteeing training performance. This demonstrates the superior performance of the tailored MS strategy.

\section{Conclusions}\label{conclu}

In this paper, we proposed a split federated learning (SFL) framework, named AdaptSFL, to minimize the training latency of SFL over edge networks. We first derived a convergence bound of SFL in terms of model aggregation intervals and model split points. Then, we devised a resource-adaptive client-side model aggregation (MA) scheme and a model splitting (MS) scheme to minimize end-to-end training latency subject to a training loss constraint under resource and data heterogeneity. Simulation results demonstrated that our proposed AdaptSFL framework achieves the target accuracy with significantly less time budget compared to benchmarks.

While this work has demonstrated the potential of integrating SFL and edge computing paradigm, more research efforts could be made to further enhance the efficiency of SFL. The optimization of other key training hyperparameters, such as batch size and learning rate, is worth further exploration. Moreover, future work could also incorporate network and computing probing costs into the optimization framework. This is particularly pertinent in highly dynamic network environments, where frequent state changes might exacerbate the relative impact of probing overhead. {\rev{In addition, further research is needed to prototype SFL in real-world edge computing systems to further validate training performance under practical deployment constraints.}} All these considerations could further enhance the efficiency of SFL under resource-constrained systems.

% In this paper, we have proposed a novel SL framework,
% efficient parallel split learning (EPSL), to accelerate the model
% training. EPSL parallelizes the client-side model training, and
% lowers the dimension of activations’ gradients and server’s
% computation workload by performing the last-layer gradient
% aggregation, leading to significant reduction in training latency.
% By considering the heterogeneity in channel conditions and
% computing capabilities at client devices, we have designed
% a resource management and layer split strategy to jointly
% optimize subchannel allocation, power control, and cut layer
% selection to minimize the training latency for ESPL over the
% wireless edge networks. While the formulated problem is a
% mixed-integer nonlinear programming, we decompose it into
% four less complicated subproblems based on different decision
% variables, and present an efficient BCD-based algorithm to
% solve this problem. Simulation results demonstrate that our
% proposed EPSL framework takes significantly less time budget
% to achieve a target accuracy than existing benchmarks, and the
% effectiveness of the tailored resource management and layer
% split strategy.
% This work has demonstrated the potential of integrating
% EPSL and edge computing paradigm. However, more research
% attention needs to be paid to addressing label privacy issue
% (especially for privacy-sensitive data, such as disease diagnosis
% data) resulting from the label sharing in SL. Furthermore,
% convergence analysis of EPSL is worth further exploration.

\appendix

% \begin{appendices}
%       \section{ \textit{A. Proof of Lemma 1} }\label{aa}
%       some text in Appendix A
%       \section{  }
%       some text in Appendix B
% \end{appendices}

\section*{{A. Proof of Lemma 1}} \label{aa}
Fix training round $t\geq 1$. Considering the largest $t_{0} \leq  t$ that satisfies $t_0 \bmod I = 0$ (Note that such $t_{0}$ must exist and $t - t_{0} \leq I$.) Recalling the Eqn.~\eqref{non_commen_update}, Eqn.~\eqref{client_side_update} and Eqn.~\eqref{g_ci} for updating the model weights, that
\begin{align}\label{lemma_eq_1}
\mathbf{h}_{c,i}^{t}=  \mathbf{h}_{c}^{t_0}  -\gamma \!\!\sum_{\tau = t_{0}+1}^{t}\mathbf{g}_{c,i}^{\tau}
\end{align}
{By \eqref{h_c_define},} we have
\begin{align*}
{\mathbf{h}}_c^{t}  = {\mathbf{h}}_c^{t_0} -\gamma\!\!\sum_{\tau = t_{0}+1}^{t} \frac{1}{N}\sum_{i=1}^{N}\mathbf{g}_{c,i}^{\tau}
\end{align*}

Thus, we have
{
\begin{align*}
&\mathbb{E} [\Vert \mathbf{h}^{t}_{c} - {\mathbf{h}}^{t}_{c,i} \Vert^{2}] \\=&  {\mathbbm{1}}_{\{I > 1\}} \mathbb{E} [\Vert \gamma\!\! \sum_{\tau = t_{0}+1}^{t} \frac{1}{N}\sum_{i=1}^{N}\mathbf{g}_{c,i}^{\tau} -\gamma\!\! \sum_{\tau = t_{0}+1}^{t}\!\!\!\!\mathbf{g}_{c,i}^{\tau} \Vert^{2}]\\
=& {\mathbbm{1}}_{\{I > 1\}} \gamma^{2} \mathbb{E}  [\Vert \sum_{\tau = t_{0}+1}^{t} \frac{1}{N}\sum_{i=1}^{N}\mathbf{g}_{c,i}^{\tau} -\!\!\sum_{\tau = t_{0}+1}^{t}\!\!\!\!\mathbf{g}_{c,i}^{\tau}\Vert^{2}]\\
\overset{(a)}{\leq}& {\mathbbm{1}}_{\{I > 1\}} 2\gamma^{2} \mathbb{E} [\Vert \sum_{\tau = t_{0}+1}^{t} \!\frac{1}{N}\sum_{i=1}^{N}\mathbf{g}_{c,i}^{\tau}\Vert^{2} + \Vert \!\!\!\sum_{\tau = t_{0}+1}^{t}\!\!\!\!\mathbf{g}_{c,i}^{\tau}\Vert^{2} ]\\
\overset{(b)}{\leq}& {\mathbbm{1}}_{\{I > 1\}} 2\gamma^{2} (t-t_{0}) \mathbb{E} [ \sum_{\tau = t_{0}+1}^{t}\!\Vert  \frac{1}{N}\sum_{i=1}^{N}\mathbf{g}_{c,i}^{\tau}\Vert^{2} + \!\!\!\!\sum_{\tau = t_{0}+1}^{t}\!\!\Vert \mathbf{g}_{c,i}^{\tau}\Vert^{2}]\\
\overset{(c)}{\leq}& {\mathbbm{1}}_{\{I > 1\}} 2\gamma^{2} (t-t_{0}) \mathbb{E} [\sum_{\tau = t_{0}+1}^{t} (\frac{1}{N}\sum_{i=1}^{N}\Vert \mathbf{g}_{c,i}^{\tau}\Vert^{2}) +\!\!\sum_{\tau = t_{0}+1}^{t}\!\!\!\Vert \mathbf{g}_{c,i}^{\tau}\Vert^{2}]\\
 \overset{(d)}{\leq} & {\mathbbm{1}}_{\{I > 1\}} 4\gamma^{2} I^{2} \sum\limits_{j = 1}^{L_c}  {G _j^2}
\end{align*}
}%
where (a)-(c) follows by using the inequality $\Vert \sum_{i=1}^{n} \mathbf{z}_{i}\Vert^{2} \leq n \sum_{i=1}^{n} \Vert \mathbf{z}_{i}\Vert^{2}$ for any vectors $\mathbf{z}_{i}$ and any positive integer $n$ (using $n=2$ in (a), $n=t-t_0$ in (b), and $n=N$ in (c)); and (d) follows from {\bf Assumption \ref{asp:2}}.

\section*{{B. Proof of Theorem 1}}\label{bb}
For training round $t\geq 1$. By the smoothness of loss function $f\left(  \cdot  \right)$, we have
{
\begin{align}\label{eq:equal_1_total}
\mathbb{E}[ {f({{\bf{w}}^t})} ] \le& {\rm{ }}\mathbb{E}[ {f({{\bf{w}}^{t - 1}})} ] + \mathbb{E}[ {\langle {\nabla _{\bf{w}}}f({{\bf{w}}^{t - 1}}),{{\bf{w}}^t} - {{\bf{w}}^{t - 1}}\rangle } ]{\rm{  }} \nonumber \\&~+ \frac{\beta }{2}\mathbb{E}[ {{{\| {{{\bf{w}}^t} - {{\bf{w}}^{t - 1}}} \|}^2}} ].
\end{align}
}

Note that
{\begin{align}\label{eq:w_decouple}
 &\mathbb{E}[ {{{\| {{{\bf{w}}^t} - {{\bf{w}}^{t - 1}}} \|}^2}} ]\nonumber\\
 =&\mathbb{E}[ {{{\| {[ {{\bf{h}}_c^t;{\bf{h}}_s^t} ] - [ {{\bf{h}}_c^{t - 1};{\bf{h}}_s^{t - 1}} ]} \|}^2}} ]\nonumber\\ =&\mathbb{E}[ {{{\| {[ {{\bf{h}}_c^t - {\bf{h}}_c^{t - 1};{\bf{h}}_s^t - {\bf{h}}_s^{t - 1}} ]} \|}^2}} ]\nonumber\\=&\mathbb{E}[ {{{\| { {{\bf{h}}_c^t - {\bf{h}}_c^{t - 1}} } \|}^2}} ] + \mathbb{E}[ {{{\| { {{\bf{h}}_s^t - {\bf{h}}_s^{t - 1}} } \|}^2}} ],
\end{align}}%
where $\mathbb{E}[ {{{\| { {{\bf{h}}_c^t - {\bf{h}}_c^{t - 1}} } \|}^2}} ]$ can be bounded as 

{
\begin{align}\label{eq:wc_squre}
 &\mathbb{E}[ {{{\| { {{\bf{h}}_c^t - {\bf{h}}_c^{t - 1}} } \|}^2}} ] \overset{(a)}{=}{\gamma ^2}\mathbb{E}[||\frac{1}{N}\sum\limits_{i = 1}^N {{\bf{g}}_{c,i}^t} |{|^2}] \nonumber\\ 
 \overset{(b)}{=}& {\gamma ^2}\mathbb{E}[||\frac{1}{N}\sum\limits_{i = 1}^N {\left( {{\bf{g}}_{c,i}^t - {\nabla _{{{\bf{h}}_c}}}{f_i}\left( {{\bf{h}}_{c,i}^{t - 1}} \right)} \right)} |{|^2}]\nonumber\\ 
 +& {\gamma ^2}\mathbb{E}[||\frac{1}{N}\sum\limits_{i = 1}^N {{\nabla _{{{\bf{h}}_c}}}{f_i}\left( {{\bf{h}}_{c,i}^{t - 1}} \right)} |{|^2}] \nonumber\\
\overset{(c)}{=}& \frac{{{\gamma ^2}}}{{{N^2}}}\sum\limits_{i = 1}^N  \mathbb{E}[||{\bf{g}}_{c,i}^t - {\nabla _{{{\bf{h}}_c}}}{f_i}\left( {{\bf{h}}_{c,i}^{t - 1}} \right)|{|^2}]\nonumber\\
+& {\gamma ^2}\mathbb{E}[||\frac{1}{N}\sum\limits_{i = 1}^N {{\nabla _{{{\bf{h}}_c}}}{f_i}\left( {{\bf{h}}_{c,i}^{t - 1}} \right)} |{|^2}] \nonumber\\
\overset{(d)}{\leq }& \frac{{{\gamma ^2} \sum\limits_{j = 1}^{L_c} {\sigma _j^2}}}{N} +{\gamma ^2}\mathbb{E}[||\frac{1}{N}\sum\limits_{i = 1}^N {{\nabla _{{{\bf{h}}_c}}}{f_i}\left( {{\bf{h}}_{c,i}^{t - 1}} \right)} |{|^2}].
\end{align}}%
where (a) follows from {Eqn.~\eqref{h_c_define}} and Eqn.~\eqref{lemma_eq_1}; (b) follows by observing that $\mathbb{E}[\mathbf{g}_{c,i}^{t}] = \nabla_{{\bf{h}}_c} f_{i}({\mathbf{h}}_{c,i}^{t-1})$ and applying the equality $\mathbb{E}[\Vert \mathbf{z} \Vert^{2}] = \mathbb{E} [ \Vert \mathbf{\mathbf{z}} - \mathbb{E}[\mathbf{z}]\Vert^{2}] + \Vert\mathbb{E}[\mathbf{z}] \Vert^{2}$ that holds for any random vector $\mathbf{z}$; (c) follows because each $\mathbf{g}_{c,i}^{t} - \nabla_{{\bf{h}}_c} f_{i}(\mathbf{h}_{c,i}^{t-1})$ has zero mean and is independent across edge devices; and (d) follows from {\bf Assumption \ref{asp:2}}.

Similarly, $\mathbb{E}[ {{{\| { {{\bf{h}}_s^t - {\bf{h}}_s^{t - 1}} ]} \|}^2}} $ has an upper bound:
{
\small \begin{align}\label{eq:ws_squre}
\mathbb{E}[ {{{\| { {{\bf{h}}_s^t - {\bf{h}}_s^{t - 1}} } \|}^2}}] \le \frac{{{\gamma ^2}\sum\limits_{j = L_c+1}^{L} {\sigma _j^2}}}{N} + {\gamma ^2}\mathbb{E}[||\frac{1}{N}\sum\limits_{i = 1}^N {{\nabla _{{{\bf{h}}_s}}}{f_i}\left( {{\bf{h}}_{s,i}^{t - 1}} \right)} |{|^2}].
\end{align}}

Substituting Eqn.~\eqref{eq:wc_squre} and Eqn.~\eqref{eq:ws_squre} into Eqn.~\eqref{eq:w_decouple} yields
{\begin{align}\label{eq:w-difference-squre}
&\mathbb{E}[\|{{\bf{w}}^t} - {{\bf{w}}^{t - 1}}\|{^2}] \nonumber \\\le& \frac{{{\gamma ^2}\sum\limits_{j = 1}^{L} {\sigma _j^2}}}{N} + {\gamma ^2}\mathbb{E}[||\frac{1}{N}\sum\limits_{i = 1}^N {{\nabla _{{{\bf{h}}_c}}}{f_i}\left( {{\bf{h}}_{c,i}^{t - 1}} \right)} |{|^2}] \nonumber \\
+& {\gamma ^2}\mathbb{E}[||\frac{1}{N}\sum\limits_{i = 1}^N {{\nabla _{{{\bf{h}}_s}}}{f_i}\left( {{\bf{h}}_{s,i}^{t - 1}} \right)} |{|^2}]\nonumber\\
=&\frac{{{\gamma ^2}\sum\limits_{j = 1}^{L} {\sigma _j^2}}}{N} + {\gamma ^2}\mathbb{E}[||\frac{1}{N}\sum\limits_{i = 1}^N {{\nabla _{\bf{w}}}{f_i}\left( {{\bf{w}}_i^{t - 1}} \right)} |{|^2}]
\end{align}}

We further note that 
{
\begin{align} \label{eq:inner_product_w}
&\mathbb{E}[\langle \nabla_{\bf{w}} f({\mathbf{w}}^{t-1}), {\mathbf{w}}^{t} - {\mathbf{w}}^{t-1}\rangle] \nonumber\\
\overset{(a)}{=}& -\gamma \mathbb{E} [\langle \nabla_{\bf{w}} f({\mathbf{w}}^{t-1}), \frac{1}{N} \sum_{i=1}^{N} \mathbf{g}_{i}^{t}\rangle] \nonumber \\
\overset{(b)}{=}& -\gamma \mathbb{E}[\langle {\nabla_{\bf{w}}}f({{\bf{w}}^{t - 1}}),\frac{1}{N}\sum\limits_{i = 1}^N \nabla_{\bf{w}}  {f_i}({\bf{w}}_i^{t - 1})\rangle ]\nonumber \\
\overset{(c)}=&  - \frac{\gamma }{2}\mathbb{E}[||{\nabla _{\bf{w}}}f({{\bf{w}}^{t - 1}})|{|^2} + ||\frac{1}{N}\sum\limits_{i = 1}^N {{\nabla _{\bf{w}}}} {f_i}({\bf{w}}_{i}^{t - 1})|{|^2} \nonumber \\
&- ||{\nabla _{{{\bf{w}}}}}f({{\bf{w}}^{t - 1}}) - \frac{1}{N}\sum\limits_{i = 1}^N {{\nabla _{{{\bf{w}}}}}} {f_i}({\bf{w}}_{i}^{t - 1})|{|^2}]
\end{align}}%
where (a) follows from ${{\bf{w}}^t} = \frac{{{1}}}{N} \sum\limits_{i = 1}^N {\bf{w}}_i^t$; (c) follows from the  identity $\langle \mathbf{z}_{1}, \mathbf{z}_{2}\rangle = \frac{1}{2} \big( \Vert \mathbf{z}_{1}\Vert^{2} + \Vert \mathbf{z}_{2}\Vert^{2} - \Vert \mathbf{z}_{1} - \mathbf{z}_{2}\Vert^{2} \big)$ for any two vectors $\mathbf{z}_{1}, \mathbf{z}_{2}$ of the same length; (b) follows from 
{
	\begin{align*}
	&\mathbb{E}[\langle \nabla_{\bf{w}} f({\mathbf{w}}^{t-1}), \frac{1}{N} \sum_{i=1}^{N} \mathbf{g}_{i}^{t}\rangle] \\
	=& \mathbb{E}[\mathbb{E}[\langle \nabla_{\bf{w}} f({\mathbf{w}}^{t-1}), \frac{1}{N} \sum_{i=1}^{N} \mathbf{g}_{i}^{t}\rangle | \boldsymbol{\xi}^{[t-1]}]] \\
	=& \mathbb{E}[\langle \nabla_{\bf{w}} f({\mathbf{w}}^{t-1}), \frac{1}{N} \sum_{i=1}^{N} \mathbb{E}[\mathbf{g}_{i}^{t}| \boldsymbol{\xi}^{[t-1]}]\rangle ]\\
	 =& \mathbb{E}[\langle \nabla_{\bf{w}} f({\mathbf{w}}^{t-1}), \frac{1}{N} \sum_{i=1}^{N} \nabla_{\bf{w}} f_{i}(\mathbf{w}_{i}^{t-1})\rangle ]
	\end{align*}
}%
%$\mathbb{E}[\langle \nabla f(\overline{\mathbf{x}}^{t-1}), \frac{1}{N} \sum_{i=1}^{N} \mathbf{G}_{i}^{t}\rangle] 
%= \mathbb{E}[\mathbb{E}[\langle \nabla f(\overline{\mathbf{x}}^{t-1}), \frac{1}{N} \sum_{i=1}^{N} \mathbf{G}_{i}^{t}\rangle | \boldsymbol{\zeta}^{[t-1]}]] = \mathbb{E}[\langle \nabla f(\overline{\mathbf{x}}^{t-1}), \frac{1}{N} \sum_{i=1}^{N} \mathbb{E}[\mathbf{G}_{i}^{t}| \boldsymbol{\zeta}^{[t-1]}]\rangle ] = \mathbb{E}[\langle \nabla f(\overline{\mathbf{x}}^{t-1}), \frac{1}{N} \sum_{i=1}^{N} \nabla f_{i}(\mathbf{x}_{i}^{t-1})\rangle ]$
where the first equality follows by the law of expectations, the second equality follows because ${\mathbf{w}}^{t-1}$ is determined by $\boldsymbol{\xi}^{[t-1]}= [\boldsymbol{\xi}^{1}, \ldots, \boldsymbol{\xi}^{t-1}]$ and the third equality follows from $\mathbb{E}[\mathbf{g}_{i}^{t} | \boldsymbol{\xi}^{[t-1]}] = \mathbb{E}[\nabla F_{i}(\mathbf{w}_{i}^{t-1};\xi^{t}_{i}) | \boldsymbol{\xi}^{[t-1]}] = \nabla f_{i}(\mathbf{w}_{i}^{t-1})$.

Substituting Eqn.~\eqref{eq:w-difference-squre} and Eqn.~\eqref{eq:inner_product_w} into Eqn.~\eqref{eq:equal_1_total}, we have  %{\small $\mathbb{E}[f(\overline{\mathbf{x}}^{t})] \leq \mathbb{E}[f(\overline{\mathbf{x}}^{t-1})] - \frac{\gamma - \gamma^{2}L}{2} \mathbb{E} [\Vert \frac{1}{N} \sum_{i=1}^{N} \nabla f_{i} (\mathbf{x}_{i}^{t-1})\Vert^{2}] - \frac{\gamma}{2} \mathbb{E}[\Vert \nabla f(\overline{\mathbf{x}}^{t-1})\Vert^{2}]  + \frac{\gamma}{2}  \mathbb{E}[
%\Vert \nabla f(\overline{\mathbf{x}}^{t-1}) - \frac{1}{N} \sum_{i=1}^{N} \nabla f_{i} (\mathbf{x}_{i}^{t-1}) \Vert^{2} ] + \frac{L}{2N} \gamma^{2} \sigma^{2} \overset{(a)}{\leq} \mathbb{E}[f(\overline{\mathbf{x}}^{t-1})]  - \frac{\gamma - \gamma^{2}L}{2} \mathbb{E}[\Vert \frac{1}{N} \sum_{i=1}^{N} \nabla f_{i} (\mathbf{x}_{i}^{t-1})\Vert^{2}] - \frac{\gamma}{2} \mathbb{E} [\Vert \nabla f(\overline{\mathbf{x}}^{t-1})\Vert^{2}] + 2\gamma^{3} I^{2} G^{2} L^2 + \frac{L}{2N} \gamma^{2} \sigma^{2}$}
{\begin{align} \label{eq:11}
&\mathbb{E}[f({\mathbf{w}}^{t})] \nonumber\\
\leq &\mathbb{E}[f({\mathbf{w}}^{t-1})] - \frac{\gamma - \gamma^{2}\beta}{2} \mathbb{E} [\Vert \frac{1}{N} \sum_{i=1}^{N} \nabla_{\bf{w}} f_{i} (\mathbf{w}_{i}^{t-1})\Vert^{2}] \nonumber \\
 &- \frac{\gamma}{2} \mathbb{E}[\Vert \nabla_{\bf{w}} f({\mathbf{w}}^{t-1})\Vert^{2}+ \frac{\beta\gamma^{2}\sum\limits_{j = 1}^{L} {\sigma _j^2}}{2N}  \nonumber\\ &+ \frac{\gamma}{2}\mathbb{E}[||{\nabla _{{{\bf{w}}}}}f({{\bf{w}}^{t - 1}}) -\frac{1}{N}\sum\limits_{i = 1}^N {{\nabla _{{{\bf{w}}}}}} {f_i}({\bf{w}}_i^{t - 1})|{|^2}]\nonumber \\
\overset{(a)}{\leq}& \mathbb{E}[f({\mathbf{w}}^{t-1})] - \frac{\gamma}{2} \mathbb{E}[\Vert \nabla_{\bf{w}} f({\mathbf{w}}^{t-1})\Vert^{2}+ \frac{\beta\gamma^{2}\sum\limits_{j = 1}^{L} {\sigma _j^2}}{2N}  \nonumber\\ &+ \frac{\gamma}{2}\mathbb{E}[||{\nabla _{{{\bf{h}}_c}}}f({{\bf{h}}_c^{t - 1}}) -\frac{1}{N}\sum\limits_{i = 1}^N {{\nabla _{{{\bf{h}}_c}}}} {f_i}({\bf{h}}_{c,i}^{t - 1})|{|^2}]\nonumber \\
&+\frac{\gamma}{2}\mathbb{E}[||{\nabla _{{{\bf{h}}_s}}}f({{\bf{h}}_s^{t - 1}}) - \frac{1}{N}\sum\limits_{i = 1}^N {{\nabla _{{{\bf{h}}_s}}}} {f_i}({\bf{h}}_{s,i}^{t - 1})|{|^2}] \nonumber\\
\overset{(b)}{\leq} &\mathbb{E}[f({\mathbf{w}}^{t-1})] - \frac{\gamma}{2} \mathbb{E}[\Vert \nabla_{\bf{w}} f({\mathbf{w}}^{t-1})\Vert^{2}]+ \frac{\beta\gamma^{2}\sum\limits_{j = 1}^{L} {\sigma _j^2}}{2N}  \nonumber\\ &+ {\mathbbm{1}}_{\{I > 1\}} 2\beta^2\gamma^{3} I^{2} \sum\limits_{j = 1}^{L_c} {G_j^2} 
\end{align}
}%
where (a) follows from $0 < \gamma \leq \frac{1}{\beta}$ and (b) holds because of the following inequality~\eqref{difference_wc} and~\eqref{difference_ws}
{\begin{align}\label{difference_wc}
&\mathbb{E}[ \Vert \nabla_{{{\bf{h}}_c}} f({\mathbf{h}}_c^{t-1}) - \frac{1}{N} \sum_{i=1}^{N} \nabla_{{{\bf{h}}_c}} f_{i} (\mathbf{h}_{c,i}^{t-1})\Vert^{2}] \nonumber \\
 =& \mathbb{E} [ \Vert \frac{1}{N} \sum_{i=1}^{N}\nabla_{{{\bf{h}}_c}} f_{i}({\mathbf{h}}_c^{t-1}) - \frac{1}{N} \sum_{i=1}^{N} \nabla_{{{\bf{h}}_c}} f_{i} (\mathbf{h}_{c,i}^{t-1})\Vert^{2}] \nonumber \\
=& \frac{1}{N^{2}} \mathbb{E} [\Vert \sum_{i=1}^{N} \big( \nabla_{{{\bf{h}}_c}} f_{i}({\mathbf{h}}_c^{t-1}) - \nabla f_{{\mathbf{h}}_c} (\mathbf{h}_{c,i}^{t-1}) \big)\Vert^{2}] \nonumber \\
\leq& \frac{1}{N} \mathbb{E} [ \sum_{i=1}^{N} \Vert \nabla_{{{\bf{h}}_c}} f_{i}({\mathbf{h}}_c^{t-1}) - \nabla_{{{\bf{h}}_c}} f_{i} (\mathbf{h}_{c,i}^{t-1})\Vert ^{2}] \nonumber \\
\leq& \beta^2\frac{1}{N} \sum_{i=1}^{N}\mathbb{E}[ \Vert {\mathbf{h}}_c^{t-1} - \mathbf{h}_{c,i}^{t-1}\Vert^{2}] \nonumber \\
\leq&  {\mathbbm{1}}_{\{I > 1\}} 4\beta^2\gamma^{2} I^{2} \sum\limits_{j = 1}^{L_c} {G_j^2}
\end{align}
}%
where the first inequality follows by using $\Vert \sum_{i=1}^{N} \mathbf{z}_{i}\Vert^{2} \leq N \sum_{i=1}^{N} \Vert \mathbf{z}_{i}\Vert^{2}$ for any vectors $\mathbf{z}_{i}$; the second inequality follows from the smoothness of each $f_{i}$ by {\bf Assumption \ref{asp:1}}; and the third inequality follows from {\bf Lemma \ref{lm:diff-avg-per-node}}. Moreover, we have
{ \begin{align}\label{difference_ws}
&\mathbb{E}[ \Vert \nabla_{{{\bf{h}}_s}} f({\mathbf{h}}_s^{t-1}) - \frac{1}{N} \sum_{i=1}^{N} \nabla_{{{\bf{h}}_s}} f_{i} (\mathbf{h}_{s,i}^{t-1})\Vert^{2}] \nonumber \\
\leq& \beta^2\frac{1}{N} \sum_{i=1}^{N}\mathbb{E}[ \Vert {\mathbf{h}}_s^{t-1} - \mathbf{h}_{s,i}^{t-1}\Vert^{2}] \overset{(a)}{=}0 
\end{align}
where (a) holds because the server-side sub-models are aggregated in each training round (i.e., $I = 1$). Therefore, at any training round $t$, the server-side sub-model of each edge device is the aggregated version of server-side sub-models.

Dividing the both sides of Eqn.~\eqref{eq:11} by $\frac{\gamma}{2}$ and rearranging terms yields
{ \small \begin{align}
&\mathbb{E}\left [\Vert \nabla_{\bf{w}} f({\mathbf{w}}^{t-1})\Vert^{2}\right] \nonumber \\
\leq& \frac{2}{\gamma}\! \left(\mathbb{E}\left[f({\mathbf{w}}^{t-1})\right] \! - \!\mathbb{E}\left[f({\mathbf{w}}^{t})\right]\right) \!+\!\frac{\beta\gamma \!\sum\limits_{j = 1}^{L} {\sigma_j^2} }{N}  \nonumber \!+\! {\mathbbm{1}}_{\{I > 1\}} 4\beta^2\gamma^{2} I^{2}\! \sum\limits_{j = 1}^{L_c} \! {G_j^2} \label{eq:pf-thm-rate-eq8}
\end{align}
}%
Summing over $t\in\{1,\ldots, R\}$ and dividing both sides by $R$ yields
{\small 
\begin{align*}
&\frac{1}{R} \sum_{t=1}^{R} \mathbb{E}\left [\Vert \nabla_{\bf{w}} f({\mathbf{w}}^{t-1})\Vert^{2}\right] \\
\leq &\frac{2}{\gamma R} \!\left(f({\mathbf{w}}^{0}) - \mathbb{E}\left[f({\mathbf{w}}^{R})\right]\right) \!+\!\frac{\beta\gamma\! \sum\limits_{j = 1}^{L} {\sigma_j^2} }{N}  \nonumber \!+\! {\mathbbm{1}}_{\{I > 1\}} 4\beta^2\gamma^{2} I^{2} \sum\limits_{j = 1}^{L_c} \!{G_j^2} \\
\overset{(a)}{\leq} &  \frac{2}{\gamma R} \left(f({\mathbf{w}}^{0}) -f^{\ast}\right) +\!\frac{\beta\gamma \sum\limits_{j = 1}^{L} {\sigma_j^2} }{N}  \nonumber \!+\! {\mathbbm{1}}_{\{I > 1\}} 4\beta^2\gamma^{2} I^{2} \sum\limits_{j = 1}^{L_c} {G_j^2}
\end{align*}}
where (a) follows because $f^{\ast}$ is the minimum value of problem {\eqref{minimiaze_loss_function}}.

\section*{{C. Proof of Theorem 2}}\label{cc}

We first conduct a functional analysis for the objective function in problem~\eqref{subproblem_1}. Let $\Theta' \left( I \right) = \frac{{2\vartheta \left\{ {aI + b} \right\}}}{{\gamma I\left( {c - 4{\beta ^2}{\gamma ^2}{I^2}{T_1}} \right)}}$, where $a = {T_3} + T_s^F + T_s^B + {T_4}$, $b = {T_5} + {T_6}$ and $c = \varepsilon  - \frac{{\beta \gamma \sum\limits_{j = 1}^L {\sigma _j^2} }}{N}$. Taking the first-order derivative of $\Theta' \left( I \right)$ yields

\begin{equation}\label{accuracy_cons}
\frac{{\partial \Theta' \left( I \right)}}{{\partial I}} = \frac{{2\vartheta }}{\gamma }\frac{{\Xi \left( I \right)}}{{{{\left( {cI - 4{\beta ^2}{\gamma ^2}{I^3}{T_1}} \right)}^2}}}
\end{equation}
where
\begin{equation}\label{E_I}
\Xi \left( I \right) = 8a{\beta ^2}{\gamma ^2}{I^3}{T_1} + 12b{\beta ^2}{\gamma ^2}{I^2}{T_1} - bc.
\end{equation}
Since $\frac{{\partial \Xi \left( I \right)}}{{\partial I}} = 24{\beta ^2}{\gamma ^2}I{T_1}\left( {aI + b} \right) > 0$, $\Xi \left( I \right)$ is an increasing function of $I$. Considering that $\Xi \left( 0 \right) =  - bc < 0$ and $\mathop {\lim }\limits_{I \to  + \infty } \Xi \left( I \right) =  + \infty  > 0$, there must exist $I'$ satisfying $\Xi \left( I' \right) = 0$ for $I' \in \left( {0, + \infty } \right)$, which can be easily obtained by classical Newton-Raphson method. Then, we see that $\frac{{\partial \Theta' \left( I \right)}}{{\partial I}} \le 0$ for $I \in ( 0, I'] $ and  $\frac{{\partial \Theta' \left( I \right)}}{{\partial I}} \ge 0$ for $I \in \left( {I', + \infty } \right)$, which means that the objective function decreases and then increases with respective to $I$ and thus reaches a minimum at $I = I'$.

Based on constraint $\mathrm{C4}$ and the characteristics of the objective function analyzed above, it is obvious that the optimal value $I^*$ only exists on both sides of $I'$, i.e.,
\begin{equation}\label{accuracy_cons}
{I^*} = \left\{ {\begin{array}{*{20}{c}}
1&{I' \le 1}\\
{\arg {{\min }_{I \in \left\{ {\left\lfloor {I'} \right\rfloor ,\left\lceil {I'} \right\rceil } \right\}}}\Theta' \left( I \right)}&{I' > 1}
\end{array}}, \right.
\end{equation}
where $\left\lfloor {\cdot} \right\rfloor$ and $\left\lceil {\cdot} \right\rceil$ denote floor and ceiling operations.

\ifCLASSOPTIONcaptionsoff
  \newpage
\fi

% trigger a \newpage just before the given reference
% number - used to balance the columns on the last page
% adjust value as needed - may need to be readjusted if
% the document is modified later
%\IEEEtriggeratref{8}
% The "triggered" command can be changed if desired:
%\IEEEtriggercmd{\enlargethispage{-5in}}

% references section

% can use a bibliography generated by BibTeX as a .bbl file
% BibTeX documentation can be easily obtained at:
% http://mirror.ctan.org/biblio/bibtex/contrib/doc/
% The IEEEtran BibTeX style support page is at:
% http://www.michaelshell.org/tex/ieeetran/bibtex/
%\bibliographystyle{IEEEtran}
% argument is your BibTeX string definitions and bibliography database(s)
%\bibliography{IEEEabrv,../bib/paper}
%
% <OR> manually copy in the resultant .bbl file
% set second argument of \begin to the number of references
% (used to reserve space for the reference number labels box)
% \begin{thebibliography}{1}

% \bibitem{IEEEhowto:kopka}
% H.~Kopka and P.~W. Daly, \emph{A Guide to \LaTeX}, 3rd~ed.\hskip 1em plus
%   0.5em minus 0.4em\relax Harlow, England: Addison-Wesley, 1999.

% \end{thebibliography}

\bibliographystyle{IEEEtran}
\bibliography{reference}

\end{document}